\documentclass{article}

\usepackage{microtype}
\usepackage{graphicx}
\usepackage{subcaption}
\usepackage{booktabs} %
\usepackage{enumitem}
\usepackage{pifont}

\usepackage{makecell}

\newcommand{\cellci}[3]{#1}
\usepackage{adjustbox}
\usepackage{hyperref}

\usepackage{amsthm}
\usepackage{amsmath}
\usepackage{amssymb}
\usepackage{mathtools}

\usepackage[accepted]{icml2026}

\usepackage{mathtools}
\usepackage{thmtools}

\newcommand{\loss}{\mathcal{L}}
\newcommand{\hess}{\widehat{H}}
\newcommand{\reg}{\mathcal{R}}

\newcommand{\obsSpace}{\mathbb{X}}

\newcommand{\lt}{\mathopen{}\mathclose\bgroup\left}
\newcommand{\rt}{\aftergroup\egroup\right}

\DeclarePairedDelimiter\norm{\lVert}{\rVert}

\DeclarePairedDelimiter\rbra{(}{)}
\DeclarePairedDelimiter\cbra{\{}{\}}

\newcommand{\veps}{\varepsilon}

\def\[#1\]{\begin{align}#1\end{align}}		
\def\(#1\){\begin{align*}#1\end{align*}} 	

\renewcommand\d{\ifmmode\mathrm{d}\else\oldd\fi} 

\def\argmin{\operatornamewithlimits{arg\,min}}

\mathtoolsset{centercolon}
\newcommand{\defas}{:=}

\newcommand{\grad}{\nabla}

\newcommand{\iid}{\textrm{i.i.d.}}

\newcommand{\var}{\operatorname{Var}}	
\newcommand{\cov}{\operatorname{Cov}}	

\newcommand{\E}{\mathbb{E}}

\newcommand{\dist}{\sim}
\newcommand{\distiid}{\overset{\text{iid}}{\dist}}

\newcommand{\reals}{\ensuremath{\mathbb{R}}}

\newcommand{\Norm}{\mathcal{N}}

\newcommand{\MLEN}{\widehat{\theta}^{(N)}}
\newcommand{\MLE}{\widehat{\theta}}

\newcommand{\RN}[1]{%
	\textup{\uppercase\expandafter{\romannumeral#1}}%
}

\definecolor{WowColor}{rgb}{.75,0,.75}
\definecolor{SubtleColor}{rgb}{0,0,.50}

\newcounter{margincounter}

\DeclareMathOperator{\Tr}{Tr}

\providecommand{\cellci}[3]{%
#1%
}

\usepackage[capitalize,noabbrev]{cleveref}
\usepackage{autonum}

\crefname{lemma}{Lemma}{Lemmas}
\crefname{corollary}{Corollary}{Corollaries}
\crefname{theorem}{Theorem}{Theorems}
\crefname{assumption}{Assumption}{Assumptions}

\theoremstyle{plain}
\newtheorem{theorem}{Theorem}[section]
\newtheorem{proposition}[theorem]{Proposition}
\newtheorem{lemma}[theorem]{Lemma}
\newtheorem{corollary}[theorem]{Corollary}
\theoremstyle{definition}

\theoremstyle{remark}
\newtheorem{remark}[theorem]{Remark}

\usepackage[textsize=small]{todonotes}

\icmltitlerunning{Accurate Large-sample UQ using SG-MCMC}

\begin{document}

\twocolumn[
  \icmltitle{Accurate Large-sample Uncertainty Quantification \\
  using Stochastic Gradient Markov Chain Monte Carlo}
  \icmlsetsymbol{equal}{*}

  \begin{icmlauthorlist}
  \icmlauthor{Yu Wang}{MS}
  \icmlauthor{Jie Ding}{QST}
  \icmlauthor{Jonathan H.~Huggins}{MS,CDS}
  \end{icmlauthorlist}

    \icmlaffiliation{MS}{Department of Mathematics \& Statistics, Boston University}
    \icmlaffiliation{QST}{Questrom School of Business, Boston University}
     \icmlaffiliation{CDS}{Faculty of Computing \& Data Sciences, Boston University}
    
    \icmlcorrespondingauthor{Jonathan H.~Huggins}{huggins@bu.edu}

  \vskip 0.3in
]

\printAffiliationsAndNotice{}  %

\begin{abstract}
Tuning algorithms such as stochastic gradient descent (SGD) and 
stochastic gradient Langevin dynamics (SGLD) for approximate sampling and uncertainty quantification 
remains challenging, particularly in the practically relevant settings 
when the batch size is large or the model is misspecified. 
Existing theory that provides tuning guidance relies on continuous-time limits or strong statistical assumptions, 
which can become quantitatively inaccurate in these regimes.
We address these shortcomings by proposing new discrete-time approximations to SG(L)D with and without momentum, 
which enables accurate predictions of the stationary covariance, iterate average covariance, and integrated autocorrelation time.
Moreover, we prove quantitative, non-asymptotic error bounds showing that these estimates are sufficiently accurate for practical tuning and uncertainty quantification.
Numerical experiments demonstrate that our theory yields improved tuning guidance across a range of models and data-generating distributions where
existing approaches fail, including when using the $\beta$-divergence rather than log-loss to obtain statistically robust inferences.
\end{abstract}

\section{Introduction}

Stochastic gradient–based methods have become the default tool for large-sample optimization in machine learning. 
Algorithms such as stochastic gradient descent (SGD) and its variants dominate modern practice because subsampling dramatically reduces per-iteration computational cost while having strong empirical performance and favorable generalization properties \citep{bottou2010large,hardt2016train,Goodfellow:2016:DL-book}. 

From a Bayesian perspective, subsampling-based Markov chain Monte Carlo (MCMC) methods seem to offer
an analogous path toward scalable sampling and uncertainty quantification (UQ). 
In particular, stochastic gradient MCMC (SG-MCMC) algorithms such as stochastic gradient Langevin dynamics (SGLD) replace full-data likelihood gradients with unbiased minibatch estimates, promising posterior sampling at a computational cost comparable to SGD \citep{welling2011bayesian,li2016preconditioned,raginsky2017non,brosse2018sgld,Nemeth:2021:SGLD}. 
In practice, however, SG-MCMC methods are notoriously difficult to tune because
the step size, batch size, and temperature parameters must be carefully chosen to control discretization bias and mixing behavior while simultaneously providing accurate UQ 
\citep{Nemeth:2021:SGLD,coullon2023,negrea2022statistical,rajpal2025adaptive,kim2024learningtoexplore, Mauri2024Robust, alexos2022structured, paulin2025sampling, akyildiz2024nonasymptotic}.
These challenges are exacerbated when the statistical model is misspecified, a setting in which standard Bayesian posteriors are no longer well-calibrated.
The same calibration issue applies when using a generalized Bayesian loss, whether the model is correctly specified or not \citep{bissiri2016general,jewson2018principles}.

Recent work has begun to address these challenges by explicitly combining algorithmic and statistical asymptotic perspectives. 
For example, \citet{mandt2017stochastic} adopted a heuristic perspective that was motivated by two lines of work.
The first considers scaling limits in stochastic approximations and show that, after appropriate rescaling of space and time, the iterates jointly converge to a continuous-time Ornstein--Uhlenbeck process \citep{kushner1981asymptotic, pflug1986stochastic, walk1977invariance, Kushner:1993, kushner2003stochastic}.
The second concerns the asymptotics of the Bayesian posterior, known as Bernstein--von Mises (or Bayesian Central Limit) theorems \citep{kleijn2012bernstein, van2000asymptotic}.

More recently, \citet{negrea2022statistical,wang2025quantitative} formalize and extend the heuristic arguments of \citet{mandt2017stochastic} by analyzing stochastic gradient algorithms through joint limits in which both the dataset size and algorithm parameters (e.g., step size and batch size) scale together. 
Further, \citet{wang2026lvm} extend the results of \citet{negrea2022statistical} to models with local latent variables.
These results, which characterize the limiting stochastic process of the iterate sample paths, make it possible to not only determine the limiting stationary distribution (which is important for UQ) but also the mixing time and iterate average distribution, which determine the algorithm's computational efficiency and the accuracy of posterior expectation estimates.
Hence, these results are able to provide precise tuning advice that maximizes computational efficiency while targeting the desired form of UQ such as frequentist coverage \citep{white1982maximum}, Bayesian model uncertainty \citep{kleijn2012bernstein}, or both \citep{huggins2024reproducible}.

A major limitation of these results, however, is that they rely on taking continuous-time stochastic differential equation (SDE) limits, which approximate discrete-time algorithms only in the vanishing step-size regime \citep{wang2025quantitative,li2019stochastic}.
These limiting approximations become quantitatively inaccurate precisely in the large batch-size regimes most relevant to practice.
The problem is that using a large batch size requires using a relatively large step size 
\citep{goyal2017accurate,negrea2022statistical}, 
so continuous-time approximations can substantially mischaracterize stationary covariance structure, which can result in inaccurate UQ. 

\Cref{Fig: linear_regression_misspecify_dependent_noise_batch_size} illustrates how the these issues can arise even in simple misspecified linear models. 
In this example, as the batch size increases, the accuracy of the tuning rules derived from SDE limits decreases rapidly, leading to the stationary covariance failing to match the sandwich covariance $\mathcal{S}_\star$ \citep{white1982maximum}. Such failures persist even with increasing data size, highlighting a fundamental limitation of continuous-time approximations for guiding practical tuning decisions \citep{wang2025quantitative}.

Recent work has used discrete-time approximations to stochastic gradient algorithms  that remain valid at large batch sizes and/or large step sizes \citep{dieuleveut2020bridging,liu2021noise,ziyin2021strength}. 
While promising, existing results either assume a constant noise covariance,
apply only to linear models, or to not account for model misspecification.
Moreover, most approximations lack rigorous non-asymptotic error guarantees; and none provide estimates for the  mixing time or iterate-average distribution. 
\Cref{Fig: linear_regression_misspecify_dependent_noise_batch_size} illustrates how, as a result, they can fall short of providing reliable guidance for uncertainty quantification -- in this case, due to model misspecification.

\begin{figure}[t]
    \centering
    \includegraphics[width=0.45\textwidth]{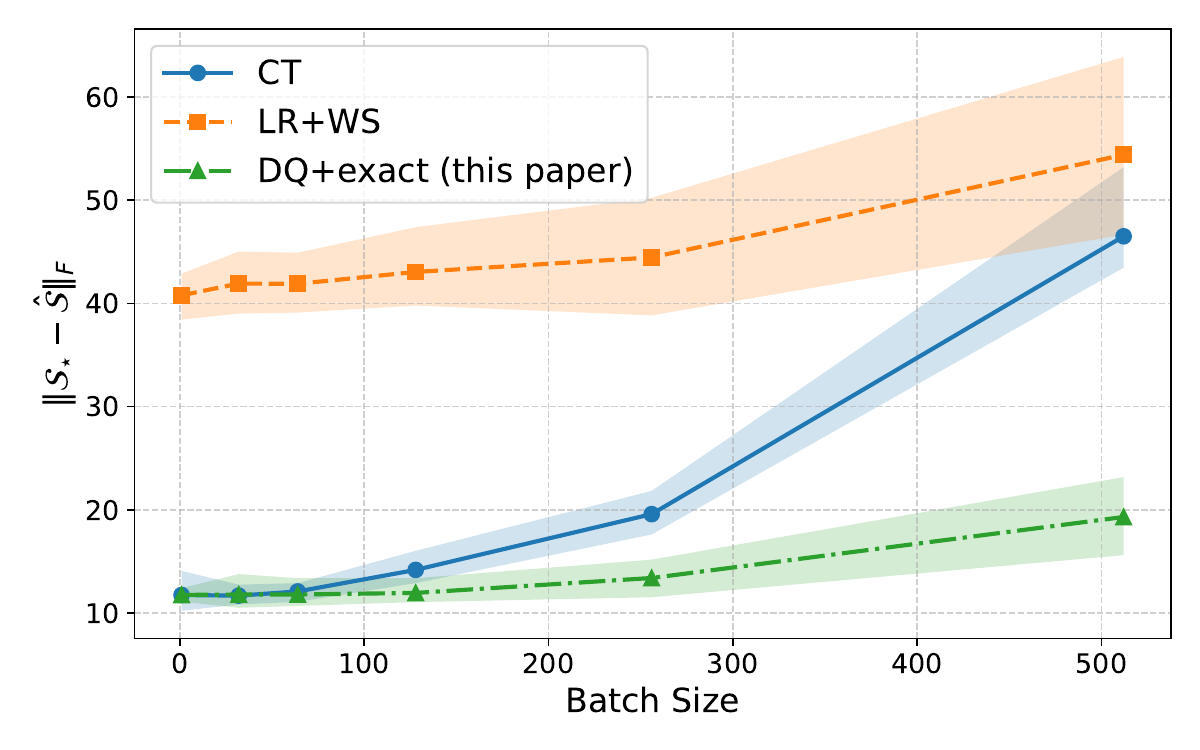} 
    \caption{Misspecified linear regression with heteroskedastic noise. Data are generated according to $y_{n} \dist \Norm(x_{n}^{\top}\theta_{\star}, 1+\|x_{i}\|_{2}^{2})$, 
    where $\theta_{\star} \dist \Norm(0, I_{D})$ is fixed and $x_{n} \distiid \Norm(0, I_{D})$. 
    A linear model is fitted using constant-step-size SGD.
    $\mathcal{S}_{\star} = \mathcal{J}_{\star}^{-1} \mathcal{I}_{\star} \mathcal{J}_{\star}^{-1}$: sandwich covariance; $\hat{\mathcal{S}}$: covariance obtained under step-size tuning rules derived from different theories.}
    \vspace{-1em}
    \label{Fig: linear_regression_misspecify_dependent_noise_batch_size}
\end{figure}

\newcommand{\redx}{\textcolor{red}{\ding{55}}}
\newcommand{\bluecheck}{\textcolor{blue}{$\checkmark$}}
\begin{table*}[t]
\caption{Comparison of approximations used to tune SG(L)D for sampling. References are to the works most directly relevant to tuning. 
\textbf{Large batch:} Is the approach accurate for large batch sizes?
\textbf{Non-const.\ noise:} Does the approach account for non-constant stochastic gradient noise?
\textbf{General model/loss:} Does the approach account for model misspecification or the use of a generalized loss? 
\textbf{Mixing:} Does the approach provide mixing time and iterate average covariance estimates?
\textbf{Bounds:} Are quantitative error bounds available?}
\centering
\begin{tabular}{@{}p{6.2cm}ccccc@{}}
\toprule
Approach                                                                                          & Large batch & Non-const.\ noise & General model/loss & Mixing & Bounds \\ \midrule
\textbf{Continuous-time} \citep{mandt2017stochastic,negrea2022statistical,wang2025quantitative}                               & \redx             & \redx               & \bluecheck           & \bluecheck   & \bluecheck    \\
\textbf{Discrete quadratic + constant noise}  \citep{dieuleveut2020bridging,liu2021noise} & \bluecheck        & \redx          & \bluecheck                & \redx        & \bluecheck         \\
\textbf{Linear regression + well-specified} \citep{ziyin2021strength} & \bluecheck        & \bluecheck          & \redx                & \redx        & \redx         \\
\textbf{Discrete quadratic + exact noise} ({this work})                                                  & \bluecheck        & \bluecheck          & \bluecheck           & \bluecheck   & \bluecheck    \\ \bottomrule
\end{tabular}
\label{tbl:summary}
\end{table*}

In this work, we address these limitations by developing a discrete-time theoretical framework for SGD and SGLD that remains accurate at large batch sizes, and under model misspecification. 
\cref{tbl:summary} compares our approach to alternatives. 
Our contributions are as follows: 
\vspace{-1em}
\begin{enumerate}
\item \textbf{(minor)} We introduce a \emph{proxy algorithm framework} that clarifies the differences and limitations of existing approaches, and thereby helps identify where further theory is needed. (\cref{sec:proxy-algs})
\item \textbf{(major)} We derive a \emph{new discrete-time approximation for SGD and SGLD} (with and without momentum) that remains accurate for large batch sizes and misspecified models. (\cref{section:Uncertainty Quantification for Generalized Linear Models})
\item \textbf{(major)} We provide \emph{quantitative, non-asymptotic error analyses} demonstrating that the resulting stationary covariance estimates are sufficiently accurate for practical tuning \emph{for the purpose of sampling and uncertainty quantification}. (\cref{sec:wasserstein-results})
\item \textbf{(major)} We use our results to propose a practical, tuning-free 
procedure for scalable uncertainty quantification (\cref{alg:uq_tuning}). 
Through numerical experiments, we show that our theory provides improved tuning guidance for a different models, batch size regimes, and loss functions. 
(\cref{sec:more-applications})
\item  \textbf{(minor)} Finally, while our focus in the paper is on uncertainty quantification and sampling, our results also shed light on the training dynamics and generalization behavior of SGD and its use for frequentist inference  \citep{jantre2024learning, hwang2022uncertainty, chang2017active, lyle2020bayesian, mandt2017stochastic, zhu2019anisotropic, lewkowycz2020large, keskar2017large, hoffer2017train, mori2020improved}.
For completeness, we illustrate some of these directions, which may be of interest to the wider ML community, through some preliminary experiments (Appendix~\ref{sec:applications}).
\end{enumerate}

\section{Background}

\subsection{Setting}

Let $\{x_{n}\}_{n=1}^{N}$ denote the observed data with $x_{n} \in \obsSpace$ .
For parameter $\theta \in \reals^{D}$, assume an observation-level differentiable loss or negative log-likelihood
$\ell : \obsSpace \times \reals^{D} \to \reals$, and regularizer  $\reg : \reals^{D} \to \reals$, which in the sampling setting we should interpret as a negative log prior $-\log \pi_0(\theta)$ (up to an additive constant). 
Together, these lead to the negative potential (or loss)
\begin{align}
\textstyle \loss(\theta) \defas N^{-1} \sum_{n=1}^{N} \ell(x_{n}, \theta) + N^{-1}\reg(\theta). 
\label{eq:loss function}
\end{align}
Define the stochastic gradient 
\[
\textstyle G_{t}(\theta) \defas B^{-1} \sum_{n \in S_{t}} \nabla \ell(x_{n}, \theta) + N^{-1} \nabla \reg(\theta),
\]
where $S_{t} = \{I_{t1}, I_{t2}, \dots, I_{tB}\}$ is a set of $B$ independent random integers sampled uniformly from $\{1, \dots, N\}$ 
either with or without replacement. 
\emph{Stochastic gradient Langevin dynamics} \citep[SGLD; ][]{welling2011bayesian} is a Markov chain Monte Carlo (MCMC) algorithm
with the single-step update equation 
\begin{align}
	\theta_{t} = \theta_{t-1} - \Lambda\,G_{t}(\theta_{t-1}) + \sqrt{2\beta^{-1} \Lambda}\,\xi_{t-1},
	\label{eq: general SGLD update rule}
\end{align}
where $\Lambda \in \reals^{D\times D}$ is a positive definite step size matrix, $\beta \in (0, \infty]$ is the inverse temperature (canonically set to $\beta = N$), and $\xi_{t-1} \distiid \Norm(0, I)$.
SGLD is the prototypical example of a \emph{subsampling MCMC} algorithm, 
variants of which have been applied for learning a wide variety of large-sample models \citep{ICML2012Ahn_782, Nemeth:2021:SGLD, Aicher2025SGMCMC, kim2024learningtoexplore,rajpal2025adaptive, Mauri2024Robust, alexos2022structured, paulin2025sampling}.
If $\beta = \infty$ (with $1/\infty \defas 0$), then SGLD reduces to SGD.
Setting $\Lambda = \lambda I_{D}$ for some $\lambda > 0$ results in the usual formulation of SG(L)D with fixed step size $\lambda$.

\begin{remark}
    We focus on the fixed step size case in this work.
    While diminishing step sizes guarantee asymptotic exactness by driving stochastic-gradient noise and discretization error to zero, using a fixed step size usually leads to substantially faster convergence \citep{dieuleveut2020bridging,vollmer2016exploration,teh2016consistency, merad2025convergence} and empirically leads to better generalization by discouraging convergence to sharp minima \citep{keskar2017large}, instead biasing iterates toward flatter solutions that tend to have larger posterior mass \citep{mackay1992practical,rissanen1983universal}.
\end{remark}

\subsection{Uncertainty Quantification}

Both SGD and SGLD have been used for quantifying uncertainty about model parameters \citep{welling2011bayesian, ICML2012Ahn_782, Nemeth:2021:SGLD, mandt2017stochastic}.
Assuming observations are \iid\ from an unknown distribution $P_{\star}$, then the optimal parameter is given 
by $\theta_{\star} \defas \argmin_{\theta} \E\left[ \ell(X, \theta)\right]$, where $X \dist P_{\star}$. 
In the Bayesian setting, the Bernstein-von Mises theorem states that the posterior is approximately $\Norm(\MLE, \mathcal{J}_{\star}^{-1}/N)$, 
where $\mathcal{J}_{\star} \defas \E\left[ \nabla_\theta^{2} \ell\left(X, \theta_{\star} \right) \right]$ \citep{kleijn2012bernstein}.
Thus, one possible goal when using SG(L)D is to obtain samples with a distribution that is approximately equal to $\Norm(\MLE, \mathcal{J}_{\star}^{-1}/N)$.
However, the sampling distribution of $\MLE$ is asymptotically normal with mean $\theta_{\star}$ and covariance equal to $\mathcal{J}_{\star}^{-1} \mathcal{I}_{\star} \mathcal{J}_{\star}^{-1}/N$,
where $\mathcal{I}_{\star} \defas \E[\nabla_\theta \ell\left(X, Y, \theta_{\star} \right)\nabla_\theta \ell\left(X, Y, \theta_{\star}\right)^{\top}]$ \citep{white1982maximum}.
The matrix $\mathcal{J}_{\star}^{-1} \mathcal{I}_{\star} \mathcal{J}_{\star}^{-1}$ is known as the ``sandwich'' covariance matrix, and it suggests that for proper uncertainty quantification
we want the stationary SG(L)D distribution to be approximately $\Norm (\MLE, \mathcal{J}_{\star}^{-1} \mathcal{I}_{\star} \mathcal{J}_{\star}^{-1}/N )$.
When the model is correctly specified, $\mathcal{I}_{\star} = \mathcal{J}_{\star}$, so the sandwich covariance is equal to $\mathcal{J}_{\star}^{-1}$ and the Bayesian
posterior (and the Laplace approximation) provides correct uncertainty quantification.
In the case of a generalized loss, there is no notion of well-specification, and so the ``model covariance'' $\mathcal{J}_{\star}^{-1}$ is not a coherent target for uncertainty quantification \citep{bissiri2016general,jewson2018principles}.
Therefore, tuning SG(L)D to satisfy $\Sigma_{\theta} \approx \mathcal{J}_{\star}^{-1} \mathcal{I}_{\star} \mathcal{J}_{\star}^{-1}/N$ will capture the sampling uncertainty in both the 
model-based and generalized loss settings.

\section{Proxy Algorithms} \label{sec:proxy-algs}

Given the extensive use of SGD and SGLD, both algorithms have been studied from a wide variety of perspectives.
Many such analyses can be viewed as proposing a \emph{proxy algorithm}: an alternative stochastic process
that is ``close'' to the actual algorithm of interest. 
The idea is to characterize important properties of the proxy algorithm, then argue either heuristically or rigorously that 
these properties can be transferred back to apply to the original (exact) algorithm. 
We will focus on proxy algorithms that, at least implicitly, require that the loss is well-approximated by a quadratic function:
\begin{align}
	\textstyle\loss(\theta_{t}) \approx \tilde{\loss}(\theta_{t}) \defas \frac{1}{2} \rbra[\big]{\theta_{t}-\MLEN}^{\top} \hess \rbra[\big]{\theta_{t}-\MLEN} + \mathrm{const},
\end{align}
where $\hess \defas \grad^{2} \loss(\MLE)$ is the Hessian of the loss (evaluated at $\MLE$). 
While such a condition may seem quite limiting, it turns out to be 
reasonable in many interesting settings. 

\paragraph{Continuous-time proxies.}
Perhaps the most popular proxy approach is to replace discrete dynamics of the iterative algorithm by a continuous-time stochastic process \citep{mandt2017stochastic, zhu2019anisotropic,negrea2022statistical}.
Let $\widehat{C} = \cov(G_{1}(\MLE))$ denote the gradient noise covariance at the minimizer and let $W_{t}$ be a $D$-dimensional Brownian motion.
Focusing on the case of SGD for clarity, the Ornstein--Uhlenbeck process $(\vartheta_t)_{t \ge 0}$ defined by the stochastic differential equation (SDE) 
\begin{align}
	\mathrm{d} \vartheta_{t} = -\Lambda \hess \vartheta_{t} \mathrm{d} t + \Lambda \widehat{C}^{1/2} \mathrm{d} W_{t},
	\label{eq: OU process}
\end{align}
provides a proxy to the discrete-time dynamics after appropriate rescaling and discretization.
\footnote{\citet{li2017stochastic, li2019stochastic} propose using stochastic modified equations (SMEs) to approximate SGD and perform error analysis. 
However, SMEs serve as a close approximation to SGD only for small learning rates, making it challenging to justify this approach for non-vanishing values of $\lambda$ \citep{li2017stochastic}. Moreover, in most cases, SMEs are not conducive to exact analysis; this leads to, for example, \citet[Section 5.1]{li2019stochastic} focusing on cases with an explicit solution that match \cref{eq: OU process}.}
This approach can be made rigorous via both numerical analysis and statistical (large-sample) perspectives \citep{Kushner:1993, kushner1981asymptotic, kushner2003stochastic, negrea2022statistical, wang2025quantitative}.
Similar types of arguments have also been widely used to study MCMC algorithms that do not use subsampling \citep{roberts1998optimal, dalalyan2017theoretical, roberts2001optimal,Wibisono2018sampling}.

The continuous-time approach is appealing because $(\vartheta_{t})_{t \ge 0}$ is a Gaussian process, so its properties are straightforward to analyze \citep{mandt2017stochastic, negrea2022statistical, Kushner:1993}.
For example, if the process has stationary distribution $\pi_{\vartheta}$, the covariance matrix of the stationary distribution $\Sigma_{\vartheta} \defas \cov(\pi_{\vartheta})$ 
must satisfy $\Sigma_{\vartheta} \hess + \hess \Sigma_{\vartheta} = \Lambda \widehat{C}$ \citep{gardiner1985handbook}.
In particular, setting \vphantom{\cref{eq: OU learning rate}}
\begin{align}
    \Lambda = (\Sigma \hess + \hess\Sigma)\widehat{C}^{-1}
     \label{eq: OU learning rate}
\end{align}
results in a stationary covariance of $\Sigma_{\vartheta} = \Sigma$.\footnote{Or, when we are interested in characterizing the stationary covariance, if $\Sigma_{\vartheta}$ and $\hess$ commute, then 
that $\Sigma_{\vartheta} = \frac{1}{2}\Lambda \widehat{C} \hess^{-1}$.}
Furthermore, \citet{negrea2022statistical} show that 
the asymptotic mixing time is heuristically equal to $2 /\lambda_{\min}(\Lambda \hess)$ iterations, where $\lambda_{\min}(A)$ denotes the minimum 
eigenvalue of matrix $A$.
This result suggests that, to optimize mixing time, set $\Lambda \propto \hess^{-1}$. 

\paragraph{Discrete-time proxies.}

The continuous-time proxy approach requires the step size matrix $\Lambda$ to be sufficiently small that
\emph{(i)} the continuous-time dynamics (driven by Gaussian noise) is a good approximation to the discrete-time dynamics and 
\emph{(ii)} the gradient noise is approximately constant (that is, $G_{t}(\theta_{t-1}) \approx G_{t}(\MLE)$ for all $t = 1,\dots, T$).
However, in practice it is often desirable to use a 
relatively large batch size (e.g, 1\%--10\% of the data), in which case following the guidance of \citet{negrea2022statistical} requires the use of a 
relatively large $\Lambda$ -- exactly the regime in which the continuous-time theory
often breaks down, leading to inaccurate predictions about real algorithm behavior \citep{liu2021noise, ziyin2021strength}. 
The importance of capturing the location-dependence of noise has been widely observed \citep{simsekli2019tail, simsekli2020hausdorff, hodgkinson2021multiplicative, meng2020dynamic, mori2022power, ziyin2021strength}.

A number of papers aim to overcome these limitations by using the discrete-time proxy algorithm
\begin{align}
	\label{eq:discrete-time-model-general}
	\textstyle \psi_{t} = \psi_{t-1} - \frac{\Lambda}{B}\sum_{n\in S_{t}} \hess_{n} (\psi_{t-1} - \MLE),
\end{align}
where $\hess_{n} \defas  \grad^{2} \ell(x_{n}, \MLE)$.
Assuming it exists, let $\pi_{\psi}$ denote the stationary distribution of the proxy algorithm given in \cref{eq:discrete-time-model-general},
let $\Sigma_{\psi} \defas \cov(\pi_{\psi})$, and for $\psi_{\infty} \dist \pi_{\psi}$, let $\overline{C}_{\psi} \defas \E[\cov\{G_{1}(\psi_{\infty})\}]$ denote the expected covariance of the gradient noise. 
\citet{liu2021noise} show that  the stationary covariance $\Sigma_{\psi}$ of
discrete-time update described in \cref{eq:discrete-time-model-general}  satisfies 
\begin{align}
	\Lambda \hess \Sigma_{\psi} + \Sigma_{\psi}  \hess \Lambda = \Lambda \left(\overline{C}_{\psi} + \hess \Sigma_{\psi} \hess \right)\Lambda.
	\label{eq: SGD stationary covariance}
\end{align}
\citet{dieuleveut2020bridging} provides exact discrete-time analyses of constant-step stochastic gradient descent, treating SGD as a time-homogeneous Markov chain rather than as a discretization of a continuous-time diffusion. 
In the quadratic setting, they also show that SGD converges to a stationary distribution whose covariance satisfies \cref{eq: SGD stationary covariance}, as also given by \citet{liu2021noise}.
Notably, the higher-order covariance terms $\Lambda \overline{C}_{\psi} \Lambda$ -- which capture finite learning-rate effects intrinsic to the discrete-time dynamics -- is missing from diffusion-based SDE approximations.

A key challenge when using \cref{eq: SGD stationary covariance} is that it only provides an \emph{implicit} characterization of $\Sigma_{\psi}$ because the average noise covariance $\overline{C}_{\psi}$ 
also depends on the stationary distribution $\pi_{\psi}$. 
Hence, $\overline{C}_{\psi}$ must either be approximated or, in special cases, computed exactly. 
An important special case is the linear regression model, where $x_{n} = (z_{n}, y_{n}) \in \reals^{D} \times \reals$ %
and the observation-level loss is $\ell(x_{n}, \theta) = \frac{1}{2\sigma^{2}}(y_{n} - \theta^{\top}z_{n})^{2}$.  %
In a follow-up to \citet{liu2021noise},  \citet{ziyin2021strength} show that, assuming $z_{n} \dist \Norm(0, A)$ and the model is well-specified 
(i.e., $y_{n} \dist \Norm(\theta_{\star}^{\top}z_{n}, \sigma^{2})$ for some $\theta_{\star} \in \reals^{D}$), for large $N$, 
\begin{align}
	\overline{C}_{\psi} %
	\approx B^{-1} \left( A \Sigma_{\psi} A + \Tr \left[ A \Sigma_{\psi} \right]A + \sigma^{2} A \right).
	\label{eq: sgd noise in well-specified linear model}
\end{align}
Using this approximate expected covariance for SGD noise, %
\citet{ziyin2021strength} are able to show, for example, better test loss estimation, the benefits of negative regularization, the role of overparameterization
in the steady-state dynamics of SGD, and power-law tail behavior of SGD noise.

Hence, using discrete-time proxies rather than continuous-time ones can 
lead to more precise tuning advice and new insights. 
Nevertheless, as summarized in \cref{tbl:summary}, existing approaches are not yet sufficiently reliable for practical use: some leave the noise covariance implicit \citep{dieuleveut2020bridging}, others rely on heuristic approximations to the noise covariance \citep{liu2021noise}, and others focus on restricted settings, such as well-specified models with $N \gg D$ \citep{ziyin2021strength}.
In addition, they do not characterize the mixing time or iterate average error. 
Our results, which are presented in the next section, address all of these limitations, as described in the last row of \cref{tbl:summary} and 
illustrated in \cref{Fig: linear_regression_misspecify_dependent_noise_batch_size}.

\section{A New Proxy Algorithm for Analyzing SG(L)D}
\label{section:Uncertainty Quantification for Generalized Linear Models}

Our approach to creating an improved proxy algorithm is to apply a second-order Taylor approximation to each loss term $\ell_{n}(\theta) \defas \ell(x_{n}, \theta)$:
\begin{align}
    \begin{split}
	\tilde\ell_{n}(\theta)  
    &\defas \ell_{n}(\MLE) + \nabla \ell_{n}^{\top}( \MLE) (\theta-\MLE) \\
    &\phantom{\defas~} + \frac{1}{2}(\theta-\MLE)^{\top}\grad^{2} \ell_{n}(\MLE) (\theta-\MLE).
	\label{eq: proxy algorithm framework for general loss}
    \end{split}
\end{align}
We apply SG(L)D (with or without momentum) to the approximation, $\tilde\loss(\theta) \defas N^{-1} \sum_{n=1}^{N} \tilde\ell_{n}(\theta) + N^{-1}\reg(\theta)$. 
Letting $\mathcal{J}_{n} \defas \grad^{2} \ell_{n}(\MLE)$ and using \cref{eq: proxy algorithm framework for general loss,eq: general SGLD update rule}, the update equation for our proxy algorithm is 
\begin{align}
    \psi_t
&  = \psi_{t-1}
- \Lambda \Bigl[
G_t(\MLE)
+  \nabla G_t(\MLE)(\psi_{t-1} - \MLE)
\Bigr] \\
& \quad + \sqrt{2\beta^{-1}\Lambda}\,\xi_{t-1}.
	\label{eq: proxy discrete-time update for general loss}
\end{align}

Assuming the iterates $(\psi_{t})_{t \ge 0}$ have a well-defined stationary distribution, the stationary covariance $\Sigma_{\psi}$ provides an approximation $\widehat{\Sigma}_{\theta} \defas \Sigma_{\psi}$ to $\Sigma_{\theta}$.
The quadratic form of \cref{eq: proxy algorithm framework for general loss} facilitates analyses that allow us to address  limitations of previous work.
First, the quadratic loss results is the linear structure of the SG(L)D update given in \cref{eq: proxy discrete-time update for general loss}, which makes it amenable to direct analysis. 
Building on the techniques of previous work \citep{liu2021noise, ziyin2021strength}, in \cref{sec:stationary-analysis} we derive an \emph{exact}, solvable relationship between $\Sigma_{\psi}$ and $\Lambda$.
\emph{Thus, unlike previous results, we do not require any additional assumptions or approximations}.

In addition, the use of a Taylor series approximation for the observation-level losses lends itself to rigorous error analysis. 
Specifically, we are able to bound the Wasserstein distance between the distributions of $\psi_{t}$ and $\theta_{t}$ in \cref{sec:wasserstein-results}. 
Using this result we obtain relative error bounds on the marginal standard deviation and covariance matrix estimates under standard assumptions, which 
hold for logistic regression and, assuming a bounded parameter space, for Poisson and gamma regression as well \citep[see, e.g.,][]{brosse2018sgld, bach2011non-asymptotic, toulis2014statistical}:
\begin{enumerate}[label=(\Alph*)]
	\item \label{assump:statistic-convex} The observation-level losses $\ell_{1},\dots,\ell_{N}$ are convex. 
	\item \label{assump:statistic-smooth} For each $n = 1,\dots, N$, for finite positive $L_{n} $ and $M_{n}$, the loss $\ell_{n}$ is $L_{n}$-smooth and 
	satisfies $\sup_{\theta}\sum_{d=1}^{D} \norm{\grad^{2} (\partial_{d} \ell_{n}(\theta))}^{2}  \le M_{n}^{2}$. 
	\item \label{assump:loss} For some $\mu > 0$, the loss $\loss$ is $\mu$-strongly convex. 
\end{enumerate}

\begin{theorem} \label{thm:covariance-error-bounds}
If Assumptions \ref{assump:statistic-convex}--\ref{assump:loss} hold and $\Lambda = \lambda I_{D}$ for some $\lambda \in (0, 1/(2L))$, then
there exist constants $C_{v}$ and $C_{s}$ independent of $\lambda$ such that 
\begin{align}
	\label{eq:cov-relative-error}
	\norm{\Sigma_{\theta} - \Sigma_{\psi}}/\norm{\Sigma_{\theta}} &\le C_{v} \lambda^{1/2} \\
\intertext{and, for $d=1,\dots,D$,}
    \label{eq:stdev-relative-error}
	|\sigma_{\theta,d} - \sigma_{\psi,d}|/\sigma_{\theta,d} &\le C_{s} \lambda^{1/2}.
\end{align}
\end{theorem}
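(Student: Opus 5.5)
The plan is to couple the true chain $(\theta_t)$ and the proxy chain $(\psi_t)$ synchronously. Both use the same minibatch indices $S_t$ and the same Gaussian noise $\xi_{t-1}$. We then bound $W_2(\pi_\theta,\pi_\psi)$, and finally convert this Wasserstein bound into relative bounds on covariances and marginal standard deviations.

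\textbf{Contraction.} Write $\Delta_t \defas \theta_t - \psi_t$. The injected noise cancels, so
\begin{align}
\Delta_t = \Delta_{t-1} - \lambda\bigl[G_t(\theta_{t-1}) - G_t(\psi_{t-1})\bigr] - \lambda\bigl[G_t(\psi_{t-1}) - \tilde G_t(\psi_{t-1})\bigr],
\end{align}
where $\tilde G_t$ is the stochastic gradient of the Taylor-approximated loss. The first bracket is handled with Assumptions \ref{assump:statistic-convex}--\ref{assump:loss}. Each $\ell_n$ is convex and $L_n$-smooth, and $\lambda<1/(2L)$ (with $L$ the maximal smoothness constant). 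The standard co-coercivity argument then gives
\begin{align}
\E\|\Delta_{t-1}-\lambda[G_t(\theta_{t-1})-G_t(\psi_{t-1})]\|^2 \le (1-c\lambda\mu)\,\E\|\Delta_{t-1}\|^2 .
\end{align}
Averaging over $S_t$ recovers the full gradient, which is $\mu$-strongly convex; this is why we need strong convexity of $\loss$ but only convexity of the $\ell_n$.

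\textbf{Remainder term.} The second bracket is a Taylor remainder. By the third-derivative bound in Assumption \ref{assump:statistic-smooth}, $\|\nabla\ell_n(\psi)-\nabla\tilde\ell_n(\psi)\|\le \tfrac12 M_n\|\psi-\MLE\|^2$.

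\textbf{Recursion and stationary bound.} Combine the two pieces using $(a+b)^2\le(1+\epsilon)a^2+(1+\epsilon^{-1})b^2$ with $\epsilon\asymp\lambda\mu$. This gives the recursion
\begin{align}
\E\|\Delta_t\|^2 \le (1-c'\lambda\mu)\E\|\Delta_{t-1}\|^2 + C\lambda\,\E\|\psi_{t-1}-\MLE\|^4 .
\end{align}
Pass to stationarity (the coupling is a valid coupling of the stationary laws). This yields
\begin{align}
W_2^2(\pi_\theta,\pi_\psi)\le C''\,\E_{\pi_\psi}\|\psi-\MLE\|^4/\mu .
\end{align}

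\textbf{Main obstacle: fourth moment vs.\ covariance scale.} We must show $\E_{\pi_\psi}\|\psi-\MLE\|^4 \lesssim \lambda\,\|\Sigma_\theta\|\cdot(\text{const})$. Equivalently, the Wasserstein error must be small \emph{relative to} the scale of $\Sigma_\theta$, not merely small in absolute terms.

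The proxy is a linear recursion with random matrix coefficients $I-\lambda\nabla G_t(\MLE)$ and additive noise of size $\lambda G_t(\MLE)+\sqrt{2\lambda/\beta}\,\xi$. Stationary second and fourth moments therefore scale like $\lambda\cdot(\text{noise covariance})/\mu$ and $(\lambda\cdot\text{noise})^2/\mu^2$, respectively. This follows by iterating the fourth-moment recursion, again using contraction from $\lambda<1/(2L)$.

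We also need a matching lower bound $\|\Sigma_\theta\|\gtrsim \lambda\,\cdot$ noise scale. This comes from the fact that each step injects noise of covariance at least $\lambda^2\widehat C+2\lambda\beta^{-1}I$, which persists for order $1/(\lambda L)$ steps. Thus $W_2^2 \le C\lambda\|\Sigma_\theta\|$, with constants depending on $L,\mu,M_n$ and the noise but not on $\lambda$. I expect tracking these constants uniformly to be the delicate part. It may require a lower bound on the noise covariance in all directions, folded into $C_v,C_s$.

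\textbf{Transfer to covariances and standard deviations.} For any coupling $(\theta,\psi)$, write $\Sigma_\theta-\Sigma_\psi$ as $\cov(\theta-\psi,\theta)+\cov(\psi,\theta-\psi)$. Then
\begin{align}
\|\Sigma_\theta-\Sigma_\psi\|\le W_2\bigl(\|\Sigma_\theta\|^{1/2}+\|\Sigma_\psi\|^{1/2}\bigr)\lesssim \lambda^{1/2}\|\Sigma_\theta\|,
\end{align}
which is \cref{eq:cov-relative-error}. For the marginals, the triangle inequality in $L^2$ gives $|\sigma_{\theta,d}-\sigma_{\psi,d}|\le W_2(\pi_\theta,\pi_\psi)\lesssim\lambda^{1/2}\|\Sigma_\theta\|^{1/2}$. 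Dividing by $\sigma_{\theta,d}$, the per-coordinate noise lower bound gives $\sigma_{\theta,d}^2\gtrsim\|\Sigma_\theta\|$ up to $\lambda$-independent constants, which yields \cref{eq:stdev-relative-error}.
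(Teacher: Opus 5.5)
Your route is the paper's. The synchronous coupling with co-coercivity and the $M_n$ Taylor remainder is the recursion of \cref{thm:wasserstein-error-bound}. The proxy's stationary fourth-moment bound is \cref{lem:fourth-moment-bound}. Together they give $W_2(\pi_\theta,\pi_\psi)\le A\lambda/B$ (\cref{cor:stationary-wasserstein-error-bound}), and the transfer to $\Sigma$ and $\sigma_{\theta,d}$ uses the same coupling inequalities. Your claim that $\lambda<1/(2L)$ alone gives the proxy's fourth-moment contraction is fine. Since $0\preceq\nabla G_t(\MLE)\preceq LI$, you get $\|(I-\lambda\nabla G_t(\MLE))u\|\le\|u\|$ pointwise and $\E\|(I-\lambda\nabla G_t(\MLE))u\|^2\le(1-\lambda\hat\mu)\|u\|^2$. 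This is less restrictive than the lemma's condition $\lambda\le B\hat\mu/(200L^2)$.

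The genuine gap is the step you call delicate: the lower bound $\sigma_{\theta,d}^2\gtrsim\lambda$. The paper's proof has the same hole. It cites only the upper bound $\Sigma_\theta,\Sigma_\psi\prec c\lambda I$ from Dieuleveut et al.\ and then asserts the standard deviations are of order $\lambda^{1/2}$. Assumptions \ref{assump:statistic-convex}--\ref{assump:loss} cannot supply this. For example, take $\ell_n(\theta)=\tfrac12\|\theta\|^2$ for all $n$ and $\reg\equiv0$; then SGD collapses to $\MLE$ and $\Sigma_\theta=\Sigma_\psi=0$. Your heuristic is also misstated for the true chain. Its injected noise has conditional covariance $\lambda^2\cov(G_t(\theta_{t-1})\mid\theta_{t-1})$, which need not dominate $\lambda^2\widehat C$.

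To close the gap, assume $\widehat C=\cov(G_1(\MLE))\succ0$ and prove the lower bound on the proxy, where the computation is exact. By \cref{theorem: general case covariance matrix of noise}, $\overline C_\psi=\widehat C+\frac{1}{BN}\sum_n(\mathcal J_n-\mathcal J)\Sigma_\psi(\mathcal J_n-\mathcal J)\succeq\widehat C$. The proof of \cref{thm:SGLD-stationary-covariance} with $\beta=\infty$ gives $\Sigma_\psi=(I-\lambda\hess)\Sigma_\psi(I-\lambda\hess)+\lambda^2\overline C_\psi$. Iterating it yields $\Sigma_\psi\succeq\lambda^2\mu_{\min}(\widehat C)\sum_{k\ge0}(1-\lambda\hat L)^{2k}I\succeq\frac{\lambda\,\mu_{\min}(\widehat C)}{2\hat L}I$. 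Then $\sigma_{\theta,d}\ge\sigma_{\psi,d}-W_2(\pi_\theta,\pi_\psi)\ge(\lambda\mu_{\min}(\widehat C)/(2\hat L))^{1/2}-A\lambda/B$. This is $\gtrsim\lambda^{1/2}$ once $\lambda$ is below a $\lambda$-independent threshold. Combined with $\|\Sigma_\theta\|\ge\sigma_{\theta,d}^2$, it is all that both relative bounds require.

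Finally, restrict to $\beta=\infty$, as \cref{cor:stationary-wasserstein-error-bound} does. For fixed finite $\beta$, the bound in \cref{lem:fourth-moment-bound} contains a term $D^2/(\hat\mu^2\beta^2)$ that does not shrink with $\lambda$. In fact, as $\lambda\to0$ the two chains approach different limits: $\propto e^{-\beta\loss}$ for the true chain versus $\Norm(\MLE,(\beta\hess)^{-1})$ for the proxy. So your claim $W_2^2\le C\lambda\|\Sigma_\theta\|$ with $\lambda$-independent $C$ is false in general. The best your argument gives in that case is a relative error of order $(\lambda/B+1/\beta)^{1/2}$.
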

Hence, it follows from our results that the approximation $\widehat{\Sigma}_{\theta} \defas \Sigma_{\psi}$
is close enough to $\Sigma_{\theta}$ to provide a practically useful estimate. 
    
\subsection{Stationary Analysis} \label{sec:stationary-analysis}
To prove our main result \cref{thm:covariance-error-bounds}, we first obtain an exact relationship between the learning rate matrix $\Lambda$, the stationary covariance $\Sigma_{\psi}$, and the average noise $\overline{C}_{\psi}$.

\begin{proposition} \label{thm:SGLD-stationary-covariance} 
	Assuming the iterates $(\psi_{t})_{t \ge 0}$ have a well-defined stationary distribution, the stationary covariance $\Sigma_{\psi}$ satisfies 
	\begin{align}
		\Lambda \hess \Sigma_{\psi} + \Sigma_{\psi}  \hess \Lambda = \Lambda \big(\overline{C}_{\psi} + \hess \Sigma_{\psi} \hess \big)\Lambda + 2\beta^{-1}\Lambda.
		\label{eq: SGLD stationary covariance}
	\end{align}
\end{proposition}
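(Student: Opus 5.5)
Throughout, $\MLE$ is the minimizer of $\loss$, so $\grad\loss(\MLE)=0$, and $\hess = \grad^2\loss(\MLE)$. The plan is to center the proxy recursion \cref{eq: proxy discrete-time update for general loss} at $\MLE$ and split the proxy gradient into its conditional mean plus a martingale-difference noise term. Since the recursion is affine in $\psi_{t-1}$, taking second moments under stationarity should yield a discrete Lyapunov equation, and expanding that equation gives \cref{eq: SGLD stationary covariance}.

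\textbf{Step 1 (decomposition).} Let $\delta_t \defas \psi_t - \MLE$ and $g_t(\delta) \defas G_t(\MLE) + \grad G_t(\MLE)\,\delta$. Since the minibatch indices are uniform, $\E[G_t(\MLE)] = \grad\loss(\MLE) = 0$. Likewise $\E[\grad G_t(\MLE)] = \grad^2\loss(\MLE) = \hess$, including the regularizer term $N^{-1}\grad^2\reg$.

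Let $\mathcal{F}_{t-1}$ be the history up to time $t-1$. Because $S_t$ is independent of $\mathcal{F}_{t-1}$, we get $\E[g_t(\delta_{t-1}) \mid \mathcal{F}_{t-1}] = \hess\delta_{t-1}$. Define $\varepsilon_t \defas g_t(\delta_{t-1}) - \hess\delta_{t-1}$. Then $\E[\varepsilon_t\mid\mathcal{F}_{t-1}]=0$, and $\cov(\varepsilon_t\mid\mathcal{F}_{t-1})$ equals the minibatch covariance of the proxy stochastic gradient evaluated at $\psi_{t-1}$. The recursion becomes
\begin{align}
\delta_t = (I-\Lambda\hess)\,\delta_{t-1} - \Lambda\varepsilon_t + \sqrt{2\beta^{-1}\Lambda}\,\xi_{t-1},
\end{align}
where $\xi_{t-1}$ is independent of $\mathcal{F}_{t-1}$ and of $S_t$.

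\textbf{Step 2 (moments under stationarity).} Take $\delta_{t-1}$ distributed according to the stationary law, assuming finite second moments, which is implicit in $\Sigma_\psi$ being defined. Taking expectations, the mean $m \defas \E[\delta_{t-1}]$ satisfies $m = (I-\Lambda\hess)m$. This lets us subtract $m$ from both sides, so the centered variable $\bar\delta_t \defas \delta_t - m$ obeys the same recursion with $\bar\delta$ in place of $\delta$. This trick avoids needing $\Lambda\hess$ to be invertible.

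Now compute $\E[\bar\delta_t\bar\delta_t^\top]$. The three terms are pairwise uncorrelated:
\begin{itemize}
\item the cross term between $\varepsilon_t$ and any $\mathcal{F}_{t-1}$-measurable quantity vanishes by the tower property and $\E[\varepsilon_t\mid\mathcal{F}_{t-1}]=0$;
\item the terms involving $\xi_{t-1}$ vanish because $\xi_{t-1}$ is mean zero and independent of everything else.
\end{itemize}
By the law of total covariance, $\E[\varepsilon_t\varepsilon_t^\top] = \E[\cov\{G_1(\psi_\infty)\}] = \overline{C}_\psi$. Stationarity gives $\E[\bar\delta_t\bar\delta_t^\top] = \E[\bar\delta_{t-1}\bar\delta_{t-1}^\top] = \Sigma_\psi$. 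Together these yield
\begin{align}
\Sigma_\psi = (I-\Lambda\hess)\Sigma_\psi(I-\hess\Lambda) + \Lambda\overline{C}_\psi\Lambda + 2\beta^{-1}\Lambda,
\end{align}
using the symmetry of $\Lambda$ and $\hess$.

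\textbf{Step 3 (expand).} Expanding the product gives $\Sigma_\psi - \Lambda\hess\Sigma_\psi - \Sigma_\psi\hess\Lambda + \Lambda\hess\Sigma_\psi\hess\Lambda$. Cancelling $\Sigma_\psi$ from both sides and rearranging produces exactly \cref{eq: SGLD stationary covariance}.

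\textbf{Main obstacle.} The main subtle point is identifying $\E[\varepsilon_t\varepsilon_t^\top]$ with $\overline{C}_\psi$. This requires that $\overline{C}_\psi$ be read as the minibatch covariance of the gradient of the proxy loss at $\psi_\infty$, which is what the recursion actually uses. It also requires that the minibatch $S_t$ be drawn independently of the current iterate.

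A secondary technical point is justifying the existence of second moments, so that the expectations above are finite and the stationary-moment identity is legitimate. I would take this as part of the hypothesis that a well-defined stationary covariance exists.
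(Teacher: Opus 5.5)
Your proposal is correct and follows the paper's proof. Both arguments center the recursion at $\MLE$ and split the proxy gradient into $\hess(\psi_{t-1}-\MLE)$ plus a conditionally mean-zero noise term whose expected outer product is $\overline{C}_\psi$. They then take second moments under stationarity to get the discrete Lyapunov equation $\Sigma_\psi=(I-\Lambda\hess)\Sigma_\psi(I-\hess\Lambda)+\Lambda\overline{C}_\psi\Lambda+2\beta^{-1}\Lambda$ and expand it.

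The one minor difference is how you handle the mean. You subtract the stationary mean $m$ using $\Lambda\hess m=0$, whereas the paper relies on a separate lemma that the stationary mean equals $\MLE$, which implicitly needs $\Lambda\hess$ invertible. Your route avoids that assumption and matches $\Sigma_\psi=\cov(\pi_\psi)$ directly.
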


It follows from \cref{eq: SGLD stationary covariance} that to obtain a solvable relationship between $\Lambda$ and $\Sigma_{\psi}$ using \cref{thm:SGLD-stationary-covariance}, we must compute the expected covariance $\overline{C}_{\psi}$. 
Such calculation is feasible when using $\Norm(0, \Gamma^{-1})$ with $\Gamma \in \reals^{D \times D}$ positive-definite, as a prior for $\theta$ 
-- that is, using $\reg(\theta) = \frac{1}{2} \theta^{\top}\Gamma \theta$.
\begin{theorem} \label{theorem: general case covariance matrix of noise}
	For the proxy algorithm \cref{eq: proxy discrete-time update for general loss}, if $\reg(\theta) = \frac{1}{2} \theta^{\top}\Gamma \theta^{\top}$ and the mini-batches
	are sampled with replacement, then 
    \begin{align}
		\label{eq: covariance of stationary noise}
		\begin{aligned}
            \overline{C}_{\psi} = \frac{1}{B}\left(\mathcal{I}-\frac{\norm{\Gamma \widehat{\theta}}^2}{N^2} + \frac{1}{N} \sum_{n=1}^N \mathcal{J}_n \Sigma_\psi \mathcal{J}_n-\mathcal{J} \Sigma_\psi \mathcal{J}\right),
		\end{aligned} 
	\end{align}
	where $\mathcal{I} \defas  \frac{1}{N} \sum_{n=1}^{N} \nabla\ell_{n}\bigl(\MLE\bigr) \nabla\ell_{n}\bigl(\MLE\bigr)^{\top}$.
	If the mini-batches are sampled without replacement, the same result holds but with the right-hand side multiplied by $(N-B)/(N-1)$. 
\end{theorem}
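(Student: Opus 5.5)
We prove the formula by conditioning on the current iterate. For a fixed point $\psi$, we compute the covariance of the proxy stochastic gradient $G_t(\MLE) + \nabla G_t(\MLE)(\psi - \MLE)$ with respect to minibatch sampling only. We then average over $\psi = \psi_\infty \dist \pi_\psi$. Since the Gaussian injection $\xi_{t-1}$ is independent of the minibatch, it does not enter $\overline{C}_{\psi}$.

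\textbf{Per-observation gradients.} With $\reg(\theta)=\frac12\theta^\top\Gamma\theta$, the proxy gradient is
\[
g(\psi) = B^{-1}\sum_{n\in S_t} u_n(\psi) + N^{-1}\Gamma \psi,
\qquad
u_n(\psi) \defas \nabla\ell_n(\MLE) + \mathcal{J}_n(\psi-\MLE).
\]
The regularizer term is deterministic given $\psi$, so it contributes no covariance.

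\textbf{Sampling with replacement.} The indices are i.i.d.\ uniform, so
\[
\cov(g(\psi)\mid\psi) = B^{-1}\Bigl[N^{-1}\textstyle\sum_n u_n u_n^\top - \bar u\,\bar u^\top\Bigr],
\qquad
\bar u = N^{-1}\textstyle\sum_n u_n.
\]
Write $\mathcal{J} \defas N^{-1}\sum_n \mathcal{J}_n$. The first-order condition $\nabla\loss(\MLE)=0$ gives $N^{-1}\sum_n\nabla\ell_n(\MLE) = -N^{-1}\Gamma\MLE$, hence
\[
\bar u = -N^{-1}\Gamma\MLE + \mathcal{J}(\psi-\MLE).
\]
Expanding $N^{-1}\sum_n u_nu_n^\top$ yields $\mathcal{I}$, cross terms $N^{-1}\sum_n[\nabla\ell_n\,(\psi-\MLE)^\top\mathcal{J}_n + \text{transpose}]$, and $N^{-1}\sum_n\mathcal{J}_n(\psi-\MLE)(\psi-\MLE)^\top\mathcal{J}_n$. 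Expanding $\bar u\bar u^\top$ similarly gives $N^{-2}\Gamma\MLE\MLE^\top\Gamma$, cross terms linear in $\psi-\MLE$, and $\mathcal{J}(\psi-\MLE)(\psi-\MLE)^\top\mathcal{J}$.

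\textbf{Averaging over the stationary law.} Taking expectations under $\psi_\infty \dist \pi_\psi$ turns the quadratic terms into $N^{-1}\sum_n\mathcal{J}_n M\mathcal{J}_n - \mathcal{J}M\mathcal{J}$ with $M = \E[(\psi_\infty-\MLE)(\psi_\infty-\MLE)^\top]$. The linear cross terms become functions of $m = \E\psi_\infty - \MLE$.

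\textbf{Main obstacle: pinning down the mean.} We need $m$ to collapse $M$ to $\Sigma_\psi$ and to kill the cross terms. Taking expectations in the update \cref{eq: proxy discrete-time update for general loss} at stationarity gives
\[
0 = \E[G_t(\MLE)] + \E[\nabla G_t(\MLE)]\,m = N^{-1}\Gamma\MLE\cdot(\text{regularizer handling}) + \hess m.
\]
So the stationary mean solves $\hess m = -\E[G_t(\MLE)]$. Here $\E[G_t(\MLE)] = \nabla\loss(\MLE) = 0$ and $\E\nabla G_t = \hess$ is invertible, provided the regularizer is included consistently in both $G_t$ and $\nabla G_t$. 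If the intended convention is that the proxy's regularizer gradient is $N^{-1}\Gamma\psi$ and $\hess = \mathcal{J} + \Gamma/N$, the same computation gives $m=0$.

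With $m=0$, $M=\Sigma_\psi$ and all linear cross terms vanish. The surviving constant term is $\mathcal{I} - N^{-2}\Gamma\MLE\MLE^\top\Gamma$. This is what the paper writes as $\norm{\Gamma\MLE}^2/N^2$, read as the outer product $(\Gamma\MLE)(\Gamma\MLE)^\top/N^2$. This yields \cref{eq: covariance of stationary noise}. I would double-check this bookkeeping carefully, since it is where the statement's notation is loose.

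\textbf{Sampling without replacement.} The only change is the finite-population variance of a sample mean. For $B$ draws without replacement from a population of $N$ vectors with population covariance $V$, the covariance of the sample mean is $\frac{N-B}{N-1}\cdot\frac{V}{B}$. Applying this conditionally on $\psi$ and then averaging as above multiplies the whole right-hand side by $(N-B)/(N-1)$.
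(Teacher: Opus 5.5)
Your proposal is correct and follows the paper's own route. Like the paper, you write the conditional minibatch covariance at fixed $\psi$ as $B^{-1}$ times the population covariance of the linearized per-observation gradients, and use the stationary-mean identity $\E\psi_\infty = \MLE$ (from $\E[G_t(\MLE)]=0$ and invertibility of $\hess$) to kill the linear cross terms and reduce the quadratic terms to $\Sigma_\psi$. The without-replacement case then picks up the finite-population factor $(N-B)/(N-1)$, and your reading of $\norm{\Gamma\MLE}^2/N^2$ as the outer product $N^{-2}\Gamma\MLE\MLE^\top\Gamma$ is exactly the term the paper's proof derives.
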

Plugging \cref{eq: covariance of stationary noise} into \cref{eq: SGLD stationary covariance} provides an exact relationship between $\Lambda$ and $\Sigma_{\psi}$. 
Hence, given a fixed learning rate matrix $\Lambda$ (or a scalar learning rate $\lambda$), we can, in principle, compute the stationary covariance $\Sigma_{\psi}$ as an estimate for $\Sigma_{\theta}$.

Our final result in this section improves upon the heuristic mixing time estimate of \citet{negrea2022statistical} (see Appendix \ref{sec: Discussions on Mixing speed} for more on mixing time). 
Unlike the stationary covariance case, our result is identical except for an
additive $-1$. 

\begin{proposition}
\label{prop: mixing time}
Consider the proxy update in \cref{eq: proxy discrete-time update for general loss} and suppose that
$0 < \lambda < 2/\mu_{\max}(\hat{H})$, where $\mu_{\max}(A)$ and $\mu_{\min}(A)$ denote the largest and smallest eigenvalues of a matrix $A$, respectively.
Under $L$-smoothness, it simplifies to $0 < \lambda < 2/L$ , which is consistent with step-size condition used in \citet{dieuleveut2020bridging}.
Under this condition, the resulting SG(L)D Markov chain admits a unique stationary distribution $\pi_\theta$.
For each $v \in \reals^D$, define the projection $f_v(\theta) \defas v^\top\theta$ and let
\[
\rho_{k,v} \defas \mathrm{Corr}_{\pi_\theta}\!\bigl(v^\top\theta_{0}, v^\top\theta_{k}\bigr),
\]
and 
\[
\textstyle  \tau_{\mathrm{int}}(f_{v}) \defas 1 + 2\sum_{t=1}^{\infty}\rho_{k, v}.
\]
Then the worst-case integrated autocorrelation time $\tau \defas \sup_v \tau_{\mathrm{int}}(f_v)$ is equal to ${2}/{\mu_{\min}(\Lambda \hess)} - 1$ iterations. 
\end{proposition}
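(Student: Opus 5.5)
The plan is to exploit the linear (affine) structure of the proxy update \cref{eq: proxy discrete-time update for general loss}. Writing $\delta_t \defas \psi_t - \MLE$ and using $\E[G_t(\MLE)] = \nabla\loss(\MLE) = 0$ and $\E[\nabla G_t(\MLE)] = \hess$, the recursion becomes
\begin{align}
\delta_t = (I - \Lambda \hess)\delta_{t-1} - \Lambda\bigl[G_t(\MLE) + (\nabla G_t(\MLE) - \hess)\delta_{t-1}\bigr] + \sqrt{2\beta^{-1}\Lambda}\,\xi_{t-1}.
\end{align}
The bracketed term together with the Gaussian term is a martingale difference given $\mathcal{F}_{t-1}$, since the minibatch $S_t$ and $\xi_{t-1}$ are independent of the past. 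Hence $\E[\delta_t \mid \mathcal{F}_{t-1}] = (I-\Lambda\hess)\delta_{t-1}$. Existence and uniqueness of $\pi_\theta$ (more precisely, of the proxy's stationary law) I would take from the contraction $\|I-\Lambda\hess\|<1$ under $0<\lambda<2/\mu_{\max}(\hess)$, as stated in the proposition, together with \cref{thm:SGLD-stationary-covariance} for finiteness of $\Sigma_\psi$.

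First I compute lag-$k$ covariances in stationarity. Iterating the conditional expectation gives $\E[\delta_k \mid \mathcal{F}_0] = (I-\Lambda\hess)^k \delta_0$, so $\cov(\delta_0,\delta_k) = \Sigma_\psi (I-\hess\Lambda)^k$. For $\Lambda=\lambda I$ this simplifies, and in general I will use that $\Lambda\hess$ is similar to the symmetric matrix $\Lambda^{1/2}\hess\Lambda^{1/2}$, with real eigenvalues in $(0,2)$. Thus
\begin{align}
\rho_{k,v} = \frac{v^\top (I-\Lambda\hess)^k \Sigma_\psi v}{v^\top\Sigma_\psi v}.
\end{align}
Summing the geometric series, which converges by the spectral condition, gives
\begin{align}
\tau_{\mathrm{int}}(f_v) = \frac{v^\top\bigl[2(\Lambda\hess)^{-1} - I\bigr]\Sigma_\psi v}{v^\top \Sigma_\psi v}.
\end{align}

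Next I take the supremum over $v$. If $\Sigma_\psi$ commutes appropriately with $\Lambda\hess$, this is a Rayleigh quotient. In general, setting $w = \Sigma_\psi^{1/2} v$ turns it into the numerical range of $\Sigma_\psi^{-1/2}(2(\Lambda\hess)^{-1}-I)\Sigma_\psi^{1/2}$. The supremum is at least $2/\mu_{\min}(\Lambda\hess)-1$: take $v$ such that $\Sigma_\psi v$ is the corresponding eigenvector $u$ of $(\Lambda\hess)^{-1}$, which gives exactly that value. The upper bound is the main obstacle, because the numerical range of a non-symmetric matrix can exceed its spectral radius.

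For the upper bound I would work in the $\Lambda$-geometry. Conjugating by $\Lambda^{1/2}$ makes $A \defas \Lambda^{1/2}\hess\Lambda^{1/2}$ symmetric. I would then use \cref{eq: SGLD stationary covariance} to relate $\Sigma_\psi$ to $\Lambda$ and $A$ and try to show the relevant quadratic form is controlled. Failing that, I would argue under the isotropic case $\Lambda=\lambda I$ with $\Sigma_\psi$ sharing eigenvectors with $\hess$, which is the case where the heuristic of \citet{negrea2022statistical} is stated, and note that the $-1$ comes exactly from the $k=0$ term of the geometric sum.
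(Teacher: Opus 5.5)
\textbf{Where the proposal stands.} Everything up to the supremum is correct and follows the paper's route. The paper also derives $\E[(\psi_{t+k}-\MLE)(\psi_t-\MLE)^\top]=(I-\Lambda\hess)^k\Sigma_\psi$, sums the Neumann series, and reaches $\tau_{\mathrm{int}}(f_v)=2R(v)-1$ with $R(v)\defas v^\top(\Lambda\hess)^{-1}\Sigma_\psi v/(v^\top\Sigma_\psi v)$. Your lower bound via $v=\Sigma_\psi^{-1}u$ is also valid.

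The gap you flag in the upper bound is genuine, and the paper does not close it either. It substitutes $w=\Sigma_\psi^{1/2}v$ and applies Rayleigh--Ritz to $M=\Sigma_\psi^{1/2}(\Lambda\hess)^{-1}\Sigma_\psi^{-1/2}$, on the grounds that $M$ is similar to $(\Lambda\hess)^{-1}$. But Rayleigh--Ritz requires $M$ to be symmetric. The form $w^\top Mw$ only sees $(M+M^\top)/2$, whose top eigenvalue can exceed $1/\mu_{\min}(\Lambda\hess)$; this is exactly your numerical-range objection. (The correct conjugation is $\Sigma_\psi^{-1/2}(\Lambda\hess)^{-1}\Sigma_\psi^{1/2}$, but that is immaterial.)

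\textbf{Why the upper bound cannot be proved.} Your plan to extract it from the stationary covariance equation cannot succeed, because the bound is false in general. Take $\Lambda=\lambda I$, an eigenbasis of $\hess$ with eigenvalues $h_1<h_2\le\cdots$, and $g_i\defas 1/(\lambda h_i)$. For $v=e_1+\epsilon e_2$ a direct expansion gives
\begin{equation*}
v^\top(\lambda\hess)^{-1}\Sigma_\psi v-g_1\,v^\top\Sigma_\psi v=(g_2-g_1)\,\epsilon\,\bigl((\Sigma_\psi)_{12}+(\Sigma_\psi)_{22}\,\epsilon\bigr).
\end{equation*}
This is positive for small $\epsilon$ of sign opposite to $(\Sigma_\psi)_{12}$. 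Hence $\tau>2/\mu_{\min}(\lambda\hess)-1$ whenever $(\Sigma_\psi)_{12}\neq0$.

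The dynamics do not rule this out. In this basis $(\Sigma_\psi)_{ij}=Q_{ij}/[1-(1-\lambda h_i)(1-\lambda h_j)]$ with $Q=\lambda^2\overline{C}_\psi+2\lambda\beta^{-1}I$, so $\Sigma_\psi$ is off-diagonal exactly where $\overline{C}_\psi$ is. Concretely, take $\ell_n(\theta)=\frac12(\theta-x_n)^\top A(\theta-x_n)$, $\reg\equiv0$, $\beta=\infty$. Then:
\begin{itemize}
\item the proxy is exact;
\item the noise is additive, so a stationary law with finite covariance exists for $\lambda<2/\mu_{\max}(A)$;
\item $\overline{C}_\psi=ASA/B$, with $S$ the empirical covariance of the $x_n$.
\end{itemize}
So $(\Sigma_\psi)_{12}\neq0$ as soon as $S_{12}\neq0$ in the eigenbasis of $A$. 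More generally, $\overline{C}_\psi$ contains $\mathcal{I}/B$, which under misspecification need not commute with $\hess$.

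\textbf{What can be proved.} Your fallback is the right statement: $\tau\ge 2/\mu_{\min}(\Lambda\hess)-1$ always. Equality holds whenever $\Lambda\hess\Sigma_\psi=\Sigma_\psi\hess\Lambda$. For $\Lambda=\lambda I$ this is commutation of $\Sigma_\psi$ with $\hess$, and it holds automatically if $\Lambda\propto\hess^{-1}$. In that case $(\Lambda\hess)^{-1}\Sigma_\psi$ is symmetric, the conjugated matrix is a symmetric matrix similar to $(\Lambda\hess)^{-1}$, and your Rayleigh-quotient argument goes through.

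\textbf{A smaller point.} The condition $\lambda<2/\mu_{\max}(\hess)$ controls only the mean recursion. Because of the multiplicative term $(\nabla G_t(\MLE)-\hess)\delta_{t-1}$, it does not guarantee a finite $\Sigma_\psi$; for scalar least squares with $B=1$ one needs $\lambda<2\E[x^2]/\E[x^4]$. Moreover, \cref{thm:SGLD-stationary-covariance} presupposes a stationary law with finite covariance rather than proving it. This should therefore be stated as an explicit hypothesis.
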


\subsection{Error Analysis} \label{sec:wasserstein-results}

We assess the accuracy of our proxy algorithm by bounding %
the 2-Wasserstein distance between 
the distributions of $\theta_{t}$ and $\psi_{t}$. 
The 2-Wasserstein distance between distributions $\pi$ and $\tilde\pi$ is given by
\[
W_{2}(\pi, \tilde\pi) = \inf \E(\norm{\theta - \tilde\theta}^{2})^{1/2},
\]
where the infimum is over all joint distributions of $(\theta, \tilde\theta)$ such that $\theta \dist \pi$ and $\tilde\theta \dist \tilde\pi$. 
A small Wasserstein distance between distributions implies the covariance and marginal standard deviations are also close.
Let $\sigma_{\theta,d} \defas \Sigma_{\theta,dd}^{1/2}$ and $\sigma_{\psi,d} \defas \Sigma_{\psi,dd}^{1/2}$.
Then, by \citet[Theorem 3.4]{huggins2020validated}, $W_{2}(\pi_{\theta}, \pi_{\psi}) \leq \veps$ implies that
\begin{align}
	\begin{aligned}
		&\label{eq:general-stdev-and-cov-error-bounds}
		|\sigma_{\theta,d} - \sigma_{\psi,d}| \leq \veps~(d = 1,\dots,D) \\
		&\norm{\Sigma_{\theta} - \Sigma_{\psi}} \le 2 \veps (\norm{\Sigma_{\theta}}^{1/2} \wedge \norm{\Sigma_{\psi}}^{1/2} + \veps).
	\end{aligned}
\end{align}
Hence, bounding $W_{2}(\pi_{\theta}, \pi_{\psi})$ enables us to bound the error of the proxy stationary covariance $\Sigma_{\psi}$.

We first give a bound on the Wasserstein distance between the distributions of  $\theta_{t}$ and $\psi_{t}$, which we denote by, respectively, $\pi_{\theta,t}$ and $\pi_{\psi,t}$.
\begin{theorem} \label{thm:wasserstein-error-bound}
	If Assumptions \ref{assump:statistic-convex}--\ref{assump:loss} hold and $\Lambda = \lambda I_{D}$ for some $\lambda \in (0, 1/(2L))$,
	then, letting $\bar{\beta} \defas 1 - \lambda\mu \rbra*{1 - 2\lambda L}$, $\overline{M^p} \defas N^{-1} \sum_{n=1}^{N} M_{n}^{p}, p\in \{1,2\} $,
	and $C_{s} \defas  \E\rbra{\norm{\psi_{s} - \MLE}^{4}}$, for all $t = 1,2,\dots$,
	\begin{align} \label{eq:wasserstein-error-bound} 
	\lefteqn{W_{2}^{2}(\pi_{\theta,t}, \pi_{\psi,t})} \\
		&\le \bar{\beta}^{t}W_{2}^{2}(\theta_{0}, \psi_{0})  
		+ \lambda  \cbra*{\frac{\lambda  \overline{M^2}}{2} + \frac{\overline{M}^2}{4\mu}} \sum_{s=1}^{t}\bar{\beta}^{t-s}C_{s-1}.  
    \end{align}
\end{theorem}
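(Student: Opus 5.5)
We will prove \cref{thm:wasserstein-error-bound} with a synchronous coupling. We run $\theta_t$ and $\psi_t$ with the same minibatch indices $S_t$ and the same Gaussian noise $\xi_{t-1}$, started from an optimal coupling of $(\theta_0,\psi_0)$. Since $\Lambda=\lambda I_D$, the injected noise cancels in the difference. The regularizer gradient is also identical in both updates if $\reg$ is quadratic; otherwise we fold it into the loss terms. Writing $\Delta_t\defas\theta_t-\psi_t$, we get
\begin{align}
\Delta_t = \Delta_{t-1} - \lambda\bigl[G_t(\theta_{t-1}) - G_t(\psi_{t-1})\bigr] - \lambda R_t,
\end{align}
where $R_t \defas G_t(\psi_{t-1}) - \widetilde G_t(\psi_{t-1})$ and $\widetilde G_t$ is the minibatch gradient of the Taylor proxy $\tilde\ell_n$. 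Hence $R_t = B^{-1}\sum_{n\in S_t} r_n(\psi_{t-1})$, with $r_n(\psi) \defas \nabla\ell_n(\psi)-\nabla\ell_n(\MLE)-\nabla^2\ell_n(\MLE)(\psi-\MLE)$. Because $W_2^2(\pi_{\theta,t},\pi_{\psi,t})\le \E\norm{\Delta_t}^2$, it suffices to derive a one-step recursion $\E\norm{\Delta_t}^2 \le \bar\beta\,\E\norm{\Delta_{t-1}}^2 + \lambda\{\lambda\overline{M^2}/2+\overline{M}^2/(4\mu)\}C_{t-1}$ and unroll it.

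\textbf{Step 1 (remainder bound).} By Taylor's theorem with integral remainder applied coordinatewise, $|r_{n,d}(\psi)|\le \tfrac12\norm{\nabla^2\partial_d\ell_n}\,\norm{\psi-\MLE}^2$. Summing over $d$ with \ref{assump:statistic-smooth} gives $\norm{r_n(\psi)}\le \tfrac{M_n}{2}\norm{\psi-\MLE}^2$. Consequently, conditionally on $\psi_{t-1}$,
\begin{align}
\E\norm{R_t}^2\le \tfrac14\overline{M^2}\norm{\psi_{t-1}-\MLE}^4 \quad\text{and}\quad \norm{\E R_t}\le \tfrac12\overline{M}\norm{\psi_{t-1}-\MLE}^2 .
\end{align}
The first uses Jensen over the minibatch average; the second averages over $n$.

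\textbf{Step 2 (contraction of the exact part).} Expand
\begin{align}
\norm{\Delta_t}^2=\norm{\Delta_{t-1}-\lambda D_t}^2 - 2\lambda\langle \Delta_{t-1}-\lambda D_t, R_t\rangle + \lambda^2\norm{R_t}^2,
\end{align}
where $D_t\defas G_t(\theta_{t-1})-G_t(\psi_{t-1})$. Each $\ell_n$ is convex and $L_n$-smooth, so co-coercivity (Baillon--Haddad) holds termwise: $\langle\Delta,\nabla\ell_n(\theta)-\nabla\ell_n(\psi)\rangle\ge L^{-1}\norm{\nabla\ell_n(\theta)-\nabla\ell_n(\psi)}^2$. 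Combining this with Jensen over the minibatch and taking the expectation over $S_t$ turns the average of the inner products into the full gradient, which is $\mu$-strongly convex by \ref{assump:loss}. This yields $\E\norm{\Delta_{t-1}-\lambda D_t}^2\le (1-2\lambda\mu(1-\lambda L))\norm{\Delta_{t-1}}^2$, or a variant of this form.

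\textbf{Step 3 (cross term and assembly).} The cross term is the main obstacle, because $D_t$ and $R_t$ share the minibatch $S_t$. For the piece $\langle\Delta_{t-1},\E R_t\rangle$ we use Young's inequality, $2\lambda|\langle\Delta_{t-1},\E R_t\rangle|\le \lambda\mu\norm{\Delta_{t-1}}^2+\tfrac{\lambda}{\mu}\norm{\E R_t}^2$, which produces the $\lambda\overline M^2/(4\mu)$ term. The piece $2\lambda^2\langle D_t,R_t\rangle$ we bound by $\lambda^2\norm{D_t}^2+\lambda^2\norm{R_t}^2$. We then absorb $\lambda^2\E\norm{D_t}^2\le\lambda^2 L\langle\Delta,\E D_t\rangle$ into the contraction, which costs the extra factor and gives $\bar\beta=1-\lambda\mu(1-2\lambda L)$; this is where $\lambda<1/(2L)$ is needed. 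The constants have to be juggled so that the leftover remainder coefficient is exactly $\lambda^2\overline{M^2}/2$. If that bookkeeping fails, I would instead accept slightly worse constants of the same form.

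Finally, take full expectations, use $C_{t-1}=\E\norm{\psi_{t-1}-\MLE}^4$, and iterate the recursion from the optimal initial coupling.
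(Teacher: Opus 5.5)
Your proposal is correct and follows essentially the same route as the paper: synchronous coupling, Jensen over the minibatch plus per-sample co-coercivity for $\E\norm{D_t}^2$, the Taylor remainder bound $\norm{r_n}\le \tfrac{M_n}{2}\norm{\psi-\MLE}^2$, Young's inequality with weight $\mu$ on $\langle\Delta_{t-1},\E R_t\rangle$, and strong convexity applied at the end. The bookkeeping you were unsure about closes exactly, with two ingredients. First, your Young step on $2\lambda^2\langle D_t,R_t\rangle$ is the paper's $\lambda^2\norm{D_t+R_t}^2\le 2\lambda^2\norm{D_t}^2+2\lambda^2\norm{R_t}^2$. Second, you must keep $\langle\Delta_{t-1},\grad\loss(\theta_{t-1})-\grad\loss(\psi_{t-1})\rangle$ unresolved until both $\lambda^2\E\norm{D_t}^2$ contributions are collected, giving net coefficient $-2\lambda(1-\lambda L)$; only then apply strong convexity, rather than applying it inside your Step 2. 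This yields $1-2\lambda\mu(1-\lambda L)+\lambda\mu=\bar\beta$ and remainder coefficient $2\lambda^2\cdot\tfrac14\overline{M^2}+\tfrac{\lambda}{\mu}\cdot\tfrac14\overline{M}^2$, which are exactly the stated constants.
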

\Cref{thm:wasserstein-error-bound} is quite general, and we conjecture it could be useful beyond our application to bounding the stationary covariance error. 
Typically we would expect to take $\psi_{0} = \theta_{0}$, in which case the first term on the righthand side of \cref{eq:wasserstein-error-bound} is zero. 
We note that \cref{thm:wasserstein-error-bound} is similar in spirit to the 2-Wasserstein bound provided by \citet{jin2024subsampling} for a continuous-time Langevin-based proxy algorithm
that uses Poissonized data subsampling; however, \citet{jin2024subsampling} do not use their proxy algorithm to estimate the stationary covariance of SG(L)D. 

Using \cref{eq:wasserstein-error-bound} to obtain an explicit quantitative bound requires upper-bounding the 4th moment of $\psi_{t}$, which we do in \cref{lem:fourth-moment-bound}. 
The following corollary gives our main error bound, which for simplicity we state for the case of SGD since the SGLD case is qualitatively identical. 
\begin{corollary} \label{cor:stationary-wasserstein-error-bound} 
	Under the same assumptions as \cref{thm:wasserstein-error-bound} and with $\beta = \infty$ (i.e., for the case of SGD), if $\lambda <\min\{B\hat{\mu}/(200L^2), 1/(4L)\}$, then 
	there exists an explicit constant $A$ given in \cref{eq:wasserstein-error-bound-constant-sgld} such that 
	$
	W_{2}(\pi_{\theta}, \pi_{\psi}) \le A\,{\lambda}/{B}.
	$
\end{corollary}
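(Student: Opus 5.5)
The plan is to pass to the limit $t\to\infty$ in \cref{thm:wasserstein-error-bound} and then control the stationary fourth moment $\E\|\psi_\infty-\MLE\|^4$ using \cref{lem:fourth-moment-bound}.

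\textbf{Step 1: take $t\to\infty$ in \cref{thm:wasserstein-error-bound}.} Set $\beta=\infty$ and $\psi_0=\theta_0$, so the initial term vanishes. Because $\lambda<1/(4L)$, we have $1-2\lambda L\ge 1/2$, hence $\bar\beta\le 1-\lambda\mu/2<1$. Both chains are contractive in $W_2$ for such $\lambda$, so $\pi_{\theta,t}\to\pi_\theta$ and $\pi_{\psi,t}\to\pi_\psi$ in $W_2$. Lower semicontinuity of $W_2$ then gives
\[
W_2^2(\pi_\theta,\pi_\psi)\le \lambda\Bigl\{\tfrac{\lambda\overline{M^2}}{2}+\tfrac{\overline{M}^2}{4\mu}\Bigr\}\limsup_{t\to\infty}\sum_{s=1}^t\bar\beta^{t-s}C_{s-1}.
\]
If $C_s\le \bar C$ uniformly in $s$ (or $C_s\to C_\infty$), the sum is bounded by $\bar C/(1-\bar\beta)\le 2\bar C/(\lambda\mu)$. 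This yields
\[
W_2^2(\pi_\theta,\pi_\psi)\le \frac{2}{\mu}\Bigl\{\tfrac{\lambda\overline{M^2}}{2}+\tfrac{\overline{M}^2}{4\mu}\Bigr\}\bar C .
\]

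\textbf{Step 2: show $\bar C=O(\lambda^2/B^2)$.} Taking a square root then gives $W_2\le A\lambda/B$, with $A$ collecting $\mu$, $\overline{M}$, $\overline{M^2}$ (using $\lambda\le 1/(4L)$ to absorb the $\lambda\overline{M^2}$ term) and the fourth-moment constant. This bound is what \cref{lem:fourth-moment-bound} should supply.

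\textbf{Step 3: prove the fourth-moment bound, the main obstacle.} Write $e_t=\psi_t-\MLE$. For SGD the proxy update is
\[
e_t=(I-\lambda \hess_{S_t})e_{t-1}-\lambda g_t,
\]
where $\hess_{S_t}=B^{-1}\sum_{n\in S_t}\mathcal{J}_n+N^{-1}\Gamma$ and $g_t=G_t(\MLE)$ is mean-zero with $\cov(g_t)=O(1/B)$. I would expand $\E\|e_t\|^4$ conditionally on $e_{t-1}$.

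The deterministic part contracts. We have $\E\hess_{S_t}=\hess\succeq\hat\mu I$, and since $\|\hess_{S_t}\|\le L$, the fourth power $\|(I-\lambda\hess_{S_t})e\|^4$ has expectation at most $(1-2\lambda\hat\mu+O(\lambda^2L^2))^2\|e\|^4$. Its fluctuation $\hess_{S_t}-\hess$ has variance $O(L^2/B)$, which contributes terms of order $\lambda^2L^2\|e\|^4/B$.

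The cross terms with $g_t$ need care. The terms odd in the fresh noise do not vanish outright, because $g_t$ and $\hess_{S_t}$ are built from the same minibatch. Their correlation is still $O(1/B)$, so these terms are $O(\lambda^2\|e\|^3/B)$ and can be absorbed by Young's inequality. The even terms contribute $\lambda^2\|e\|^2\E\|g_t\|^2=O(\lambda^2\|e\|^2/B)$ and $\lambda^4\E\|g_t\|^4=O(\lambda^4/B^2)$.

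The condition $\lambda<B\hat\mu/(200L^2)$ is precisely what lets the $\lambda^2L^2/B$ terms be absorbed into the $\lambda\hat\mu$ contraction. The result is a recursion of the form
\[
m_t\le(1-\lambda\hat\mu)m_{t-1}+c\lambda^2\sqrt{m_{t-1}}/B+c\lambda^4/B^2,
\]
where the second moment enters through Cauchy--Schwarz. Together with the analogous second-moment recursion $\E\|e_t\|^2\lesssim\lambda/B$, this gives
\[
\bar C\lesssim \lambda^2/(B^2\hat\mu^2).
\]
Tracking the numerical constants, and the $1/B$ scaling of the third and fourth cumulants of a sum of $B$ i.i.d.\ draws, is the delicate part. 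Once this is done, the explicit $A$ follows.
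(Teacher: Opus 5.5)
Your Steps 1 and 2 match the paper's proof. The paper lets $t\to\infty$ in \cref{thm:wasserstein-error-bound} and uses $1-\bar\beta=\lambda\mu(1-2\lambda L)\ge\lambda\mu/2$ together with $\lambda\overline{M^2}/2\le\overline{M^2}/(8L)$. This gives the prefactor $C_0=\frac{2}{\mu}\{\frac{\overline{M^2}}{8L}+\frac{\overline{M}^2}{4\mu}\}$. It then inserts \cref{lem:fourth-moment-bound} with $\beta=\infty$, i.e.\ $\E\|\psi_\infty-\MLE\|^4\le 96\lambda^2\tau_4^4/(\hat\mu^2B^2)$, so $W_2^2\le A_0\lambda^2/B^2$ with $A_0=96C_0\tau_4^4/\hat\mu^2$.

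Where you genuinely differ is in proving that lemma. You keep the random matrix $\hess_{S_t}$ in the contraction and treat only $g_t=G_t(\MLE)$ as noise. The odd terms in the expansion of $\|e_t\|^4$ then survive, and you must show they are small using the $O(1/B)$ covariance between $g_t$ and $\hess_{S_t}$ and the $O(1/B^2)$ third mixed moments. The paper instead regroups the same update as $e_t=(I-\lambda\hess)e_{t-1}-\lambda\eta_{t-1}$ with $\eta_{t-1}=G_t(\MLE)+(\nabla G_t(\MLE)-\hess)e_{t-1}$, which is conditionally mean zero given $\mathcal{F}_{t-1}$. Then:
\begin{itemize}
\item the term linear in the noise vanishes exactly;
\item the cubic term is handled by Cauchy--Schwarz;
\item everything reduces to the state-dependent bounds $\E_{t-1}\|\eta_{t-1}\|^2\le 2\tau_4^2/B+8L^2\|e_{t-1}\|^2/B$ and $\E_{t-1}\|\eta_{t-1}\|^4\le 24\tau_4^4/B^2+384L^4\|e_{t-1}\|^4/B^2$;
\item as in your plan, a separate second-moment recursion controls the $\|e_{t-1}\|^2$ forcing term.
\end{itemize}
Your route can be completed and gives the same $\lambda^2/B^2$ rate. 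The paper's regrouping is more economical: it turns the multiplicative noise into a state-dependent noise level and never needs cross-moments between $g_t$ and $\hess_{S_t}$.

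One step of your sketch needs fixing. As written, $\E\|(I-\lambda\hess_{S_t})e\|^4\le(1-2\lambda\hat\mu+O(\lambda^2L^2))^2\|e\|^4$ contracts only if $\lambda L^2\lesssim\hat\mu$, and the hypotheses do not give this. The condition $\lambda<1/(4L)$ yields only $\lambda^2L^2<\lambda L/4$, which can exceed $\lambda\hat\mu$ when $\hess$ is ill-conditioned. The condition $\lambda<B\hat\mu/(200L^2)$ controls only terms carrying a factor $1/B$.

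So do not bound $\lambda^2\E\|\hess_{S_t}e\|^2$ by $\lambda^2L^2\|e\|^2$ separately from $-2\lambda e^\top\hess e$. Either of the following works:
\begin{itemize}
\item split $\E\|(I-\lambda\hess_{S_t})e\|^2=\|(I-\lambda\hess)e\|^2+\lambda^2\E\|(\hess_{S_t}-\hess)e\|^2$ and use $\|I-\lambda\hess\|_2\le 1-\lambda\hat\mu$ (valid since $\lambda\hat L\le 1$), as the paper does;
\item use $\E\|\hess_{S_t}e\|^2\le L\,e^\top\hess e$, which follows from $0\preceq\hess_{S_t}\preceq LI$.
\end{itemize}
Then the only non-contracting contribution is the fluctuation term, of order $\lambda^2L^2\|e\|^2/B$ (and its analogue in the variance of $\|(I-\lambda\hess_{S_t})e\|^2$). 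That is exactly what $\lambda<B\hat\mu/(200L^2)$ absorbs.
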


\begin{remark}[Dimension dependence]
Recall that $D$ is the parameter dimension.
For $I \dist \mathrm{Unif}(\{1,\dots,N\})$ independent,
define the single-sample stochastic gradient 
$g_I \defas \nabla \ell(x_I, \MLE)$.
Suppose that $g_I$ satisfies the mild scaling requirement
$\E[\|g_I\|^2] = O(D)$ and $\E[\|g_I\|^4] = O(D^2)$.
Such a condition holds, for example, for generalized linear model fit to data with sub-Gaussian covariate distribution \citep{Vershynin2018highd}.
Then $W_2(\pi_\theta,\pi_\psi)\le C\, D\,(\lambda/B+1/\beta)$ with $C$ independent of $D$. 
Thus, our proxy algorithm remains accurate in high dimensions provided that $\lambda/B+1/\beta \ll 1/(CD)$.
Since typically $\lambda = O(1/N)$ and 
$\beta = \infty$ (for SGD) or $\beta = N$ (for SGLD), it follows 
that in that case we require either 
\emph{(i)} $N \gg C D$ for SGD or SGLD, or
\emph{(ii)} $N B \gg C D$ for SGD.
Note that the latter case supports high-dimensional problems as long as the batch size is sufficiently large. 
See Appendix~\ref{sec:high-dim} for further details and discussion, including the sparse high-dimensional regime. 
\end{remark}

\subsection{SGLD with Momentum}

Our theoretical results extend to the case of SG(L)D with momentum. 
These extensions are tight, in the sense that we 
recover our non-momentum results as special cases. 
Due to space limitations, we defer details to Appendix~\ref{sec:momentum-theory}.

\begin{algorithm}[tb]
\caption{SG(L)D with Target Covariance Tuning. \\[-1ex]
\rule{\linewidth}{0.6pt}\\[-0.4ex]
\footnotesize\textbf{DQ+exact:} discrete quadratic + exact noise (this work). \\ 
\footnotesize \textbf{CT:} continuous time. \textbf{DQ+const:} discrete quadratic + constant noise. \textbf{LR+WS:} linear regression + well-specified.}
\label{alg:uq_tuning}
\begin{algorithmic}[1]
\REQUIRE Dataset $\{x_n\}_{n=1}^N$, tuning method choice, per-sample loss $\ell(\theta; x)$, offline subsample size $M$, inverse temperature $\beta$, batch size $B$, number of iteration $T$ 
\vspace{0.25em} \\
\textbf{Step 1: Offline UQ tuning} \\
\STATE Subsample $M$ observations $\{x_m'\}_{m=1}^M \subseteq \{x_n\}_{n=1}^N$
\STATE Use subsample to obtain MAP estimate $\hat\theta$ \\
 \emph{Estimate sandwich covariance at $\hat\theta$ for UQ:} \\
    \STATE \quad $\widehat{\mathcal{J}} \gets \frac{1}{M}\sum_{m=1}^M \nabla^2 \ell(\hat\theta;x_m')$ 
    \STATE \quad $\widehat{\mathcal{I}} \gets \frac{1}{M}\sum_{m=1}^M \nabla \ell(\hat\theta;x_m')\nabla \ell(\hat\theta;x_m')^\top$ 
    \STATE \quad $\widehat{\mathcal{S}} \gets \widehat{\mathcal{J}}^{-1}\,\widehat{\mathcal{I}}\,\widehat{\mathcal{J}}^{-1}$ 
\STATE \emph{Determine best $\Lambda$ using chosen tuning method:} \\
     \quad \textbf{DQ+exact:} solve eqs.~\eqref{eq: SGLD stationary covariance} and \eqref{eq: covariance of stationary noise} with $\Sigma_\psi =  \widehat{\mathcal{S}}$ \\
     \quad \textcolor{gray}{\textbf{CT:} use eq.\ \eqref{eq: OU learning rate} with $\Sigma =  \widehat{\mathcal{S}}$} \\
     \quad \textcolor{gray}{\textbf{DQ+const:} solve eq.\ \eqref{eq: SGD stationary covariance} with $\Sigma_\psi =  \widehat{\mathcal{S}}$ and $\overline{C}_{\psi} = \widehat{\mathcal{J}}$} \\
     \quad \textcolor{gray}{\textbf{LR+WS:} solve eqs.\ \eqref{eq: SGD stationary covariance} and \eqref{eq: sgd noise in well-specified linear model} with $\Sigma_\psi =  \widehat{\mathcal{S}}$}
\vspace{0.25em} \\
\textbf{Step 2: Preconditioned SG(L)D sampling}
\STATE Initialize $\theta_0 \gets \hat\theta$ (or any warm start).
\FOR{$t=1$ {\bfseries to} $T$}
     \STATE Sample minibatch $\mathcal{B}_t \subset \{1,\ldots,N\}$ with $|\mathcal{B}_t|=B$.
     \STATE Compute gradient $g_t \gets \frac{1}{B}\sum_{n\in\mathcal{B}_t}\nabla \ell(\theta_t;x_n)$
     \STATE Sample update $\theta_{t} \dist \Norm(\theta_{t-1} - \Lambda\, g_t , 2\,\beta^{-1}\Lambda)$
\ENDFOR
\STATE \textbf{return} $\{\theta_t\}_{t=1}^T$ and $\Lambda$.
\end{algorithmic}
\end{algorithm}

\begin{table*}[t]
\centering
\caption{
Results for linear regression experiments with simulated data.
Calibration error is the Kolmogorov--Smirnov distance to $\mathrm{Unif}(0,1)$.
Covariance error is $\|\mathcal{S}_{\star} - \hat{\mathcal{S}}\|_{F}/\|\mathcal{S}_{\star}\|_{F}$.
Within each metric row and loss block, for a fixed batch size $B$, bold indicates methods whose 95\% confidence intervals overlap with the confidence interval of the method with the lowest mean error.
Confidence intervals are computed over 30 independent runs and are reported in the full table in Appendix~D.
Sandwich Gauss is included as the target sandwich Gaussian reference, while NUTS and the exact posterior are included to illustrate the discrepancy between the posterior distribution and the sandwich target.
}
\label{tab:robust_linear_regression}

\begin{tabular}{lcccc|ccccc}
\toprule
& \multicolumn{4}{c|}{Log loss}
& \multicolumn{5}{c}{$\beta$-loss ($\beta = 1.5$)} \\
\cmidrule(lr){2-5}\cmidrule(lr){6-10}
$B$
& Posterior
& CT
& LR+WS
& DQ+exact
& NUTS
& Sandwich Gauss
& CT
& LR+WS
& DQ+exact \\
\midrule

\multicolumn{10}{l}{\textbf{Calibration error}} \\
\midrule
$16$
& 0.418
& \textbf{\cellci{0.171}{0.141}{0.206}}
& \cellci{0.529}{0.484}{0.580}
& \textbf{\cellci{0.169}{0.129}{0.214}}
& 0.195
& 0.156
& \textbf{\cellci{0.201}{0.162}{0.241}}
& \textbf{\cellci{0.178}{0.148}{0.207}}
& \textbf{\cellci{0.172}{0.133}{0.210}} \\

$\lfloor 0.1 \times N \rfloor$
& 0.418
& \textbf{\cellci{0.179}{0.154}{0.207}}
& \cellci{0.517}{0.469}{0.581}
& \textbf{\cellci{0.174}{0.139}{0.216}}
& 0.195
& 0.156
& \textbf{\cellci{0.196}{0.157}{0.228}}
& \textbf{\cellci{0.177}{0.142}{0.216}}
& \textbf{\cellci{0.190}{0.151}{0.231}} \\

\midrule
\multicolumn{10}{l}{\textbf{Covariance error}} \\
\midrule
$16$
& 0.943
& \textbf{\cellci{0.672}{0.629}{0.715}}
& \cellci{0.995}{0.995}{0.996}
& \textbf{\cellci{0.664}{0.616}{0.710}}
& 0.795
& 0.000
& \textbf{\cellci{0.640}{0.595}{0.685}}
& \cellci{1.115}{1.023}{1.204}
& \textbf{\cellci{0.695}{0.651}{0.734}} \\

$\lfloor 0.1 \times N \rfloor$
& 0.943
& \cellci{0.975}{0.905}{1.045}
& \cellci{0.996}{0.995}{0.996}
& \textbf{\cellci{0.672}{0.625}{0.726}}
& 0.799
& 0.000
& \cellci{1.006}{0.948}{1.068}
& \cellci{1.322}{1.211}{1.438}
& \textbf{\cellci{0.748}{0.713}{0.784}} \\

\bottomrule
\end{tabular}
\end{table*}

\begin{table*}[t]
\centering
\caption{
Results for linear regression experiments with Boston housing data.
See \cref{tab:robust_linear_regression} caption for further explanation. 
$\infty$ denotes divergence under this tuning guidance.
}
\label{tab:robust_linear_regression_cov_real}

\begin{tabular}{lcccc|ccccc}
\toprule
& \multicolumn{4}{c|}{Log loss}
& \multicolumn{5}{c}{$\beta$-loss ($\beta = 1.5$)} \\
\cmidrule(lr){2-5}\cmidrule(lr){6-10}
$B$
& Posterior
& CT
& LR+WS
& DQ+exact
& NUTS
& Sandwich Gauss
& CT
& LR+WS
& DQ+exact \\
\midrule

\multicolumn{10}{l}{\textbf{Covariance error}} \\
\midrule

$16$
& 0.358
& \textbf{\cellci{0.247}{0.194}{0.310}}
& \cellci{$9.23\times 10^{8}$}{$3.62\times 10^{4}$}{$6.17\times 10^{9}$}
& \textbf{\cellci{0.337}{0.262}{0.405}}
& 2.528
& 0
& \textbf{\cellci{2.054}{1.723}{2.328}}
& $\infty$
& \textbf{\cellci{2.782}{0.965}{9.328}}
\\

$\lfloor 0.1 \times N \rfloor$
& 0.358
& \cellci{0.589}{0.443}{0.804}
& \cellci{$1.40\times 10^{7}$}{$4.85\times 10^{3}$}{$9.01\times 10^{7}$}
& \textbf{\cellci{0.352}{0.274}{0.441}}
& 2.528
& 0
& \cellci{3.126}{2.313}{5.338}
& $\infty$
& \textbf{\cellci{1.398}{0.844}{2.132}}
\\

\bottomrule
\end{tabular}
\end{table*}

 {%
\setlength{\tabcolsep}{3.5pt}  %
\renewcommand{\arraystretch}{1.05}

\begin{table}[t]
\centering
\caption{%
Results for Poisson regression experiments. 
See \cref{tab:robust_linear_regression} caption for further explanation. 
}
\label{tab:poisson_glm_and_german}
\begin{tabular}{l l c c c}
\toprule
& & \multicolumn{2}{c}{\textbf{Simulated}} &  \textbf{Credit} \\
\cmidrule(lr){3-4} \cmidrule(lr){5-5}
$B$ & method %
& calib.\ err. & cov.\ err.  & cov.\ err.\\
\midrule
$16$
& CT %
& \textbf{\cellci{0.069}{0.062}{0.075}}
& \textbf{\cellci{0.207}{0.199}{0.215}} 
& \textbf{\cellci{0.132}{0.112}{0.168}} \\
& DQ+const %
& \cellci{0.646}{0.639}{1.344}
& \cellci{0.672}{0.664}{0.678} 
& \cellci{0.982}{0.975}{0.987}  \\
& DQ+exact %
& \textbf{\cellci{0.074}{0.068}{0.080}}
& \textbf{\cellci{0.208}{0.201}{0.217}} 
& \textbf{\cellci{0.157}{0.132}{0.193}}  \\
\midrule
$\lfloor0.1\!\times\!N\rfloor$
& CT %
& \cellci{0.089}{0.078}{0.100}
& \cellci{0.230}{0.218}{0.245} 
& \cellci{0.191}{0.155}{0.240}  \\
& DQ+const %
& \cellci{1.376}{1.370}{1.382}
& \cellci{0.991}{0.990}{0.992} 
& \cellci{0.997}{0.996}{0.999}  \\
& DQ+exact %
& \textbf{\cellci{0.075}{0.066}{0.083}}
& \textbf{\cellci{0.211}{0.203}{0.220}} 
& \textbf{\cellci{0.154}{0.138}{0.181}}  \\
\bottomrule
\end{tabular}

\end{table}

\section{A General Procedure for Calibrated SG(L)D Sampling}

\cref{alg:uq_tuning} outlines a practical tuning procedure for SG(L)D uncertainty calibration, which covers all the approaches listed in \cref{tbl:summary}. 
When tuning of $\Lambda$ using our proposed approach (DQ+exact), \cref{alg:uq_tuning} is applicable to the large-sample, low-to-moderate dimensional regime. 
Its computational complexity is $O(MD^2 + D^3) + O\left(T(BD + D^2)\right)$
where the first term corresponds to a one-time offline cost using a subsample of size $M$, and the second term is the cost of $T$ stochastic gradient iterations with minibatch size $B \ll N$. 
The $O(D^3)$ term is incurred only once and becomes negligible when $N \gg D^3$. 
Since the mixing time of tuned SG(L)D is $O(1)$ epochs (equivalently $O(N/B)$ iterations), relative Monte Carlo error $\delta$ is achievable with $T = O(N/[B\delta])$ iterations. 
Hence, the overall computational complexity is $O\bigl(N[D + D^2/B]/\delta\bigr)$.
This result also suggests a benefit to using a large batch size of at least $B \gg D$ to reduce the number of preconditioning operations, improving the computational efficiency of SG(L)D.  

The tuning procedure also requires $O(D^2)$ memory to store and manipulate quantities such as Hessian $\hat{\mathcal{J}}$ and Fisher information $\hat{\mathcal{I}}$, making accurate computation challenging in very high-dimensional settings. 
A promising future direction is to develop structured, low-rank, diagonal, or trajectory-based approximations of $\hat{\mathcal{J}}$ or $\hat{\mathcal{I}}$ computations to improve scalability.

\section{Experiments} \label{sec:more-applications}

We compare the accuracy of the learning rate tuning guidance provided by our theory versus previous work (see \cref{tbl:summary}). 
For fair comparison, we follow \cref{alg:uq_tuning}, with the only difference across approaches being how $\Lambda$ is determined. 
In our experiments, we use SGD (so, $\beta = \infty$). 
The code for all experiments is publicly available at \url{https://github.com/wangyu1369/large-sample-sgmcmc-uq}.

In our experiments we compute $\Lambda$ using a numerical optimization procedure since obtaining a close-form solution is 
challenging \citep{hammarling1982numerical, ye1998stability}. %
Specifically, we substitute the stationary noise expression from \cref{eq: covariance of stationary noise} into the stationary covariance equation \cref{eq: SGLD stationary covariance} and set $\Sigma_\psi = \widehat{\mathcal{S}}$. This yields a matrix equation of the form $F(\Lambda)=0$, where $\Lambda$ is the only unknown. We solve this system numerically by vectorizing $\Lambda$ and applying \texttt{scipy.optimize.root} with the Powell hybrid method.
While our approach requires solving jointly \cref{eq: SGLD stationary covariance,eq: covariance of stationary noise}, this cost is incurred only once per problem and is negligible compared to the dominant cost of running SG(L)D trajectories. We empirically verify this in \cref{sec:Computational Cost}.

\subsection{Robust Linear Regression}
\label{sec:Robust Linear Regression}

While standard Bayesian inference and maximum likelihood estimation optimize the KL divergence, the resulting log loss is notoriously sensitive to outliers and misspecification, allowing a single atypical datapoint to dominate the gradient.
The $\beta$-divergence provides a robust alternative by downweighting low-probability observations through a tunable power parameter $\beta$, effectively controlling heavy-tailed effects \citep{ghosh2016robust, jewson2018principles, jewson2024stability}.
We consider a regression setting with observations $x = (y, z)$ where $y$ denotes the response and $z$ the covariates.
Then the $\beta$-divergence loss is defined as $\ell(\theta ; x)=-\frac{1}{\beta-1} f(y ; \theta, z)^{\beta-1}+\frac{1}{\beta} \int f\left(y^{\prime} ; \theta, z\right)^\beta \mathrm{d} y^{\prime}$, where $f(y ; \theta, z)$ denotes the likelihood.
For a detailed discussion of tuning guidance under the $\beta$-divergence loss, see \cref{sec:beta discussion}.
\paragraph{\emph{Simulated misspecified data with outliers.}}
First, we consider a misspecified linear regression model with heteroskedastic errors. Data $\{(x_n, y_n)\}_{n=1}^N$ are generated according to
$y_n \mid x_n \sim \mathcal{N}\!\left(x_n^\top \theta_\star,\; 1 + \|x_n\|_2^2\right)$,
where the true parameter $\theta_\star \sim \mathcal{N}(0, I_D)$ is fixed throughout the experiment, and the covariates are drawn independently as $x_n \sim \mathcal{N}(0, I_D)$.

To model heavy-tailed contamination, a fraction $p \in [0,1]$ of samples is selected uniformly at random and replaced by outliers. For these contaminated observations, responses are generated as
$y_n \mid x_n \sim \mathcal{N}\!\left(x_n^\top \theta_\star + b,\; s^2\bigl(1 + \|x_n\|_2^2\bigr)\right)$,
where $s > 1$ controls variance inflation and $b$ introduces a mean shift. Unless otherwise specified, we set $D = 50$, $N = 5000$, $p = 0.01$, $b = 5.0$, and $s = 5.0$.

As shown in \cref{tab:robust_linear_regression}, LR+WS performs poorly under log loss due to its reliance on well-specified model assumptions. 
Replacing the log loss with the $\beta$-divergence substantially improves its quantile calibration. However, comparable calibration does not imply accurate uncertainty quantification: existing tunings (CT and LR+WS) exhibit significant covariance mismatch, particularly for large batch sizes. 
In contrast, our tuning consistently yields a stationary covariance closest to the target sandwich covariance $\mathcal{S}_{\star}$, with the largest gains observed in the large batch size regime.

\paragraph{\emph{Boston housing data} \citep{harrison1978hedonic}}We next consider the Boston housing dataset, where the linear model is strongly misspecified.
In this setting, LR+WS becomes unstable and produces extremely large covariance errors.
These results corroborate the simulated experiments and highlight the importance of our tuning guidance for accurate uncertainty quantification in misspecified and large-batch regimes.

\subsection{Poisson Regression}

Finally, we demonstrate how our theory provides tuning guidance for the more challenging case of Poisson regression.
\paragraph{\emph{Simulated data.}}
We first consider simulated data generated from the assumed model
$
    y_{n} \sim \text{Poisson}(\exp\{x_{n}^{\top}\theta_{\star}\}),
$
where $\theta_{\star} \dist \Norm(0, I_{D})$ and $x_{n} \distiid \Norm(0, I_{D})$.
We use a sample size of $N = 5{,}000$ and set dimension $D = 50$.

\paragraph{\emph{Credit data} \citep{UCIGermanCredit}} 
The German credit data contains data on $D = 20$ variables and the credibility of $N = 1{,}000$ loan applicants.
For both, we consider batch sizes $B \in \{16, \lfloor 0.1\times N \rfloor\}$. 
The results in \cref{tab:poisson_glm_and_german} show that while continuous-time tuning remains competitive for small batch sizes, its accuracy degrades substantially as the batch size increases, leading to larger covariance and calibration errors. 
In contrast, the tuning guidance derived from our discrete-time theory consistently yields more accurate uncertainty quantification in the large-batch regime. 
Existing discrete-time approaches based on {quadratic objectives} and {constant noise} suffer from severe miscalibration once these restrictive assumptions are violated. 
The improved calibration and covariance accuracy demonstrate that our method provides reliable UQ beyond the regimes where continuous-time or constant-noise approximations apply.

\subsection{Neural Network}
\label{sec: neural network exp}
We further compare the stationary covariance predicted by the different theories in \cref{tbl:summary} on a real-world neural network task. 
Specifically, we fit a two-hidden-layer $\tanh$ neural network with hidden widths $(2,3)$ on the \emph{Diabetes} dataset \citep{efron2004least}. 

As we can see from \cref{fig:neural-network-plot}, at small learning rates, all methods are comparable. However, as the learning rate increases, continuous-time approximations become quantitatively inaccurate, while our discrete-time method remains accurate.
While our non-asymptotic error analysis assumes convexity, the characterization of the stationary covariance $\Sigma_{\psi}$ (\cref{thm:SGLD-stationary-covariance}) and minibatch noise $\overline{C}_{\psi}$ (\cref{theorem: general case covariance matrix of noise}) does not rely on this assumption. 
Empirically, we observe that our discrete-time proxy remains effective for neural networks, suggesting that the approach can potentially extend beyond the convex setting and motivating future work on non-convex analysis.
\begin{figure}[t]
    \centering
    \includegraphics[width=\columnwidth]{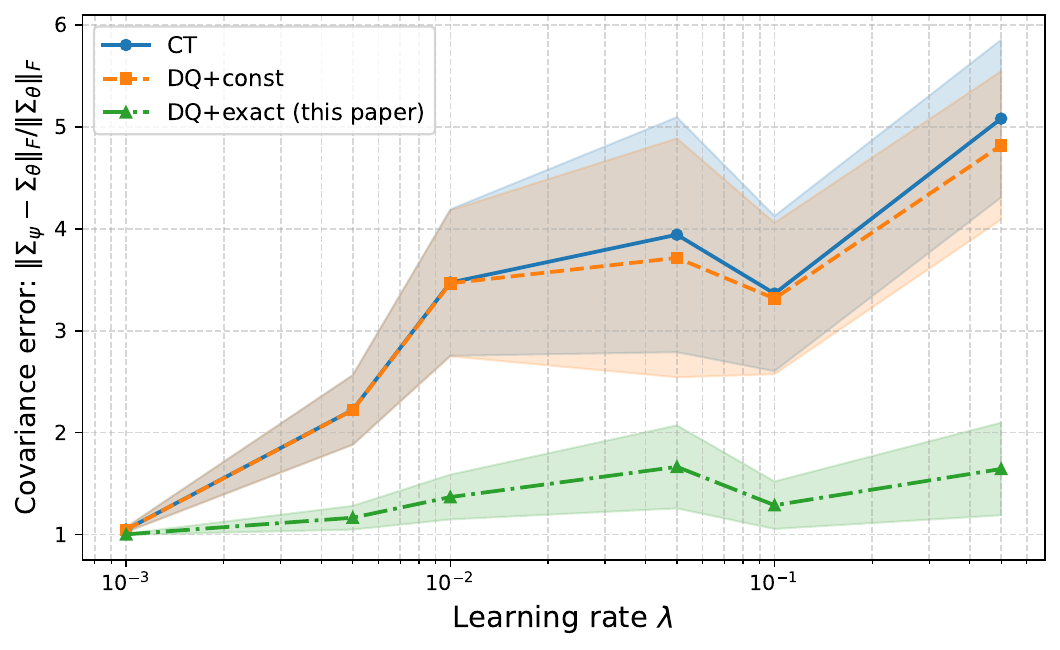}
    \caption{
    Covariance prediction error for neural network with hidden on the \emph{Diabetes} dataset. 
    The error is measured as 
    $\|\Sigma_{\psi}-\Sigma_{\theta}\|_{F}/\|\Sigma_{\theta}\|_{F}$, 
    where $\Sigma_{\theta}$ is the empirical stationary covariance estimated from SGD tail iterates and $\Sigma_{\psi}$ is the covariance predicted by each theory. 
    Shaded regions denote $95\%$ confidence intervals for the mean across 30 independent repetitions.
    }
    \label{fig:neural-network-plot}
\end{figure}

\section{Conclusion}

We study uncertainty quantification for stochastic gradient methods from a discrete-time perspective. 
Our results show that accurate characterization of SGD and SGLD stationary behavior requires moving beyond continuous-time approximations, particularly at large batch sizes and non-vanishing learning rates. 
By explicitly modeling the stationary covariance and minibatch-induced noise structure, our framework provides principled and practical tuning strategies for SGD and SGLD under both well-specified and misspecified settings. 
Empirically, we demonstrate improved covariance estimation and calibration across synthetic and real-world tasks when using \cref{alg:uq_tuning}. 

A limitation of our finite-sample analysis is that it relies on strong convexity assumptions. 
Empirically, we observe that the proposed discrete-time proxy remains effective for neural networks (\cref{sec: neural network exp}), suggesting that the approach may extend beyond convex settings. Developing finite-sample guarantees for non-convex models remains an important direction for future work.
Another natural extension is to characterize more precisely the regimes of learning rate and batch size in which continuous-time approximations fail, and to identify sharp thresholds at which discrete-time effects dominate stationary behavior.

\clearpage  

\section*{Acknowledgments}
Y.~Wang and J.~H.~Huggins were partially supported by National Science Foundation CAREER award
IIS-2340586. 

\section*{Impact Statement}
This paper presents work whose goal is to advance the field
of probabilistic machine learning. There are many potential societal
consequences of our work, none which we feel must be
specifically highlighted here.
}%

\bibliography{scaling-limit-sgld}
\bibliographystyle{icml2026}

\newpage

\appendix
\onecolumn

\counterwithin{equation}{section}
\counterwithin{figure}{section}
\renewcommand{\thefigure}{\Alph{section}.\arabic{figure}}
\counterwithin{table}{section}
\renewcommand{\thetable}{\Alph{section}.\arabic{table}}
\counterwithin{theorem}{section}
\renewcommand{\thetheorem}{\Alph{section}.\arabic{theorem}}
\counterwithin{lemma}{section}
\renewcommand{\thelemma}{\Alph{section}.\arabic{lemma}}
\counterwithin{assumption}{section}
\renewcommand{\theassumption}{\Alph{section}.\arabic{assumption}}
\counterwithin{definition}{section}
\renewcommand{\thedefinition}{\Alph{section}.\arabic{definition}}
\counterwithin{proposition}{section}
\renewcommand{\theproposition}{\Alph{section}.\arabic{proposition}}

\renewcommand{\cellci}[3]{%
  \makecell[c]{#1\\{\footnotesize[#2,#3]}}%
}

\setlength{\tabcolsep}{3pt}  
\renewcommand{\arraystretch}{1.05}
\section{Discussions on Mixing Speed}
\label{sec: Discussions on Mixing speed}
Beyond matching a desired stationary covariance, practical uncertainty quantification also requires that the SG(L)D Markov chain mixes rapidly
so that Monte Carlo estimates are accurate at low computational cost. Let $(\theta_t)_{t\ge 0}$ denote the SG(L)D iterates and let $\pi_\theta$ be their stationary distribution. 
Given a scalar functional $f \colon \mathbb{R}^D\to\mathbb{R}$, let
$\pi_\theta(f) \defas \int f(\theta)\,\pi_\theta(\mathrm{d}\theta)$
denote its expectation under the invariant distribution, which is the quantity we ultimately want to estimate.
Given $T$ iterates, the standard Monte Carlo estimator for $\pi_\theta(f)$ is
$\hat f_T \defas T^{-1}\sum_{t=1}^T f(\theta_t).$

To isolate the effect of mixing, suppose the chain is started at stationarity: $\theta_0 \sim \pi_\theta$. 
Letting $\rho_k(f) \defas \mathrm{Corr}_{\pi_\theta}(f(\theta_0), f(\theta_k))$ denote the lag-$k$ autocorrelation of the stationary time series $(f(\theta_t))_{t\ge 0}$, the \emph{integrated autocorrelation time}
\[
\textstyle  \tau_{\mathrm{int}}(f) \defas 1 + 2\sum_{t=1}^{\infty}\rho_k(f)
\] \citep{geyer1992practical,sokal1997_montecarlo}
quantifies how much serial dependence inflates Monte Carlo variance relative to $T$ i.i.d.\ draws: rapid mixing corresponds to fast decay and/or negative values of $\rho_t(f)$, which results in small $\tau_{\mathrm{int}}(f)$.
In particular, $\hat f_T$ is unbiased and, under standard regularity conditions, its variance takes the form
\begin{align}
    \var(\hat f_T)
    \approx \frac{\mathrm{Var}_{\pi_\theta}(f)}{T}\,\tau_{\mathrm{int}}(f),
\end{align}\citep{jones2004_mcmc_clt,geyer1992practical}
where $\mathrm{Var}_{\pi_\theta}(f)$ is the marginal variance of $f(\theta)$ when $\theta\sim\pi_\theta$. 
Equivalently, the \emph{effective sample size} is $T/\tau_{\mathrm{int}}(f)$ \citep{gelman1995bayesian}, making $\tau_{\mathrm{int}}(f)$ a direct measure of sampling efficiency.

\section{SGLD with Momentum}
\label{sec:momentum-theory}

SGLD with momentum $\kappa$ is defined by the one-step update equations
\begin{align}
	\label{eq: original SGD with momentum}
	\left\{\begin{array}{l} %
		m_{t}=\kappa m_{t-1}+ G_{t}(\theta_{t-1}) \\ %
		\theta_t=\theta_{t-1}-\Lambda m_{t} + \sqrt{2\beta^{-1} \Lambda}\,\xi_{t-1}.
	\end{array}\right. 
\end{align}

Combining \cref{eq: original SGD with momentum} with the approximation given in \cref{eq: proxy algorithm framework for general loss} leads to the proxy algorithm with one-step update 
\begin{align}
	\label{eq: proxy SGD with momentum}
	\left\{\begin{array}{l}
		\nu_{t}=\kappa \nu_{t-1}+ 
		G_{t}(\MLE) +  \grad G_{t}(\MLE) \left(\psi_{t-1}-\MLE\right) %
        \\
		\psi_{t}=\psi_{t-1}-\Lambda \nu_{t}+ \sqrt{2\beta^{-1} \Lambda}\,\xi_{t-1}
	\end{array}\right.
\end{align}

We first present the stationary covariance of SGLD with momentum, which recovers the case without momentum by taking $\kappa = 0$. 
Proofs of results in this section are in Appendix~\ref{sec:momentum-proofs}. 

\begin{proposition}
	\label{Proposition: stationary covariance analysis for SGLD with momentum}
	If the iterates are updated according to \cref{eq: proxy SGD with momentum} and they have a stationary distribution, then the stationary covariance $\Sigma_{\psi}$ satisfies
    \begin{align}
	\label{eq:stationary covariance for momentum SGLD}
        \begin{aligned}
		(1-\kappa)(\Lambda \hess \Sigma+\Sigma \hess \Lambda)+\frac{\kappa}{1-\kappa^2}(\Lambda \hess \Lambda \hess \Sigma+\Sigma \hess \Lambda \hess \Lambda)
        & =\Lambda  \overline{C}_{\psi} \Lambda + \frac{1+\kappa^2}{1-\kappa^2} \Lambda \hess \Sigma \hess \Lambda + (1+\kappa^{2}) \frac{2\Lambda}{\beta}.
        \end{aligned}
	\end{align}
\end{proposition}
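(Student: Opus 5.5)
We work with the centered iterate $x_t \defas \psi_t - \MLE$ and first put the proxy \cref{eq: proxy SGD with momentum} into a form with a deterministic linear part and a martingale-difference noise. Since $\MLE$ minimizes $\loss$, $\E[G_t(\MLE)] = 0$ and $\E[\grad G_t(\MLE)] = \hess$. So we write $G_t(\MLE) + \grad G_t(\MLE)x_{t-1} = \hess x_{t-1} + \zeta_t$, where
\begin{align}
\zeta_t \defas G_t(\MLE) + \bigl(\grad G_t(\MLE) - \hess\bigr)x_{t-1}.
\end{align}
Because $S_t$ is independent of the past, $\E[\zeta_t \mid \mathcal{F}_{t-1}] = 0$. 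Moreover, at stationarity $\E[\zeta_t\zeta_t^\top] = \E[\cov\{G_t(\psi_{t-1})\mid\psi_{t-1}\}] = \overline{C}_\psi$, since $G_t(\MLE) + \grad G_t(\MLE)(\psi-\MLE)$ is the proxy stochastic gradient at $\psi$. Write $\eta_t \defas \sqrt{2\beta^{-1}\Lambda}\,\xi_{t-1}$ for the injected Gaussian noise.

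Next we eliminate the momentum variable. From the second line of \cref{eq: proxy SGD with momentum}, $\Lambda\nu_t = x_{t-1} - x_t + \eta_t$. Substituting this into $\Lambda\nu_t = \kappa\Lambda\nu_{t-1} + \Lambda\hess x_{t-1} + \Lambda\zeta_t$ gives the second-order recursion
\begin{align}
x_t = A x_{t-1} - \kappa x_{t-2} + w_t, \qquad A \defas (1+\kappa)I - \Lambda\hess, \qquad w_t \defas -\Lambda\zeta_t + \eta_t - \kappa\eta_{t-1}.
\end{align}
The noise $w_t$ has the following second moments:
\begin{align}
\E[w_tw_t^\top] &= \Lambda\overline{C}_\psi\Lambda + (1+\kappa^2)\,2\beta^{-1}\Lambda,\\
\E[w_t x_{t-2}^\top] &= 0,\\
\E[w_t x_{t-1}^\top] &= -\kappa\,\E[\eta_{t-1}x_{t-1}^\top] = -2\kappa\beta^{-1}\Lambda.
\end{align}
Here we use that $\eta_{t-1}$ enters $x_{t-1}$ with coefficient $I$, and that the cross terms between $\zeta_t$ and the $\eta$'s vanish by independence. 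The $(1+\kappa^2)$ factor in the statement should come from $\E[w_tw_t^\top]$, partly offset by this one-lag correlation.

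Now let $\Sigma \defas \E[x_tx_t^\top]$ and $\Sigma_1 \defas \E[x_tx_{t-1}^\top]$ at stationarity. We derive two Yule--Walker-type identities.
\begin{enumerate}[label=(\roman*)]
\item Multiply the recursion on the right by $x_{t-1}^\top$ and take expectations. This gives $\Sigma_1 = A\Sigma - \kappa\Sigma_1^\top - 2\kappa\beta^{-1}\Lambda$.
\item Take the outer product of the recursion with itself. This gives
\begin{align}
\Sigma = A\Sigma A^\top + \kappa^2\Sigma - \kappa\bigl(A\Sigma_1^\top + \Sigma_1 A^\top\bigr) + \E[w_tw_t^\top] + \bigl(\text{cross terms of } w_t \text{ with } Ax_{t-1}\bigr).
\end{align}
The cross terms are computed from the moments above.
\end{enumerate}
From (i), $S \defas \Sigma_1 + \Sigma_1^\top$ is determined by $(1+\kappa)S = A\Sigma + \Sigma A^\top - 4\kappa\beta^{-1}\Lambda$. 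We solve this for $S$ and also solve (i) for the antisymmetric part of $\Sigma_1$, which involves $A\Sigma - \Sigma A^\top = -(\Lambda\hess\Sigma - \Sigma\hess\Lambda)$ with a factor $1/(1-\kappa)$. Then we substitute both into (ii). Expanding $A = (1+\kappa)I - \Lambda\hess$, the terms of order zero in $\Lambda\hess$ should cancel (this uses the relation $(1+\kappa)^2 + \kappa^2 - \ldots$). The terms linear in $\Lambda\hess\Sigma + \Sigma\hess\Lambda$ should collect with coefficient proportional to $(1-\kappa)$. The quadratic terms split into $\Lambda\hess\Sigma\hess\Lambda$ and $\Lambda\hess\Lambda\hess\Sigma + \Sigma\hess\Lambda\hess\Lambda$, with the coefficients $(1+\kappa^2)/(1-\kappa^2)$ and $\kappa/(1-\kappa^2)$ in \cref{eq:stationary covariance for momentum SGLD}. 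As a sanity check, setting $\kappa = 0$ must reproduce \cref{thm:SGLD-stationary-covariance}.

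The main obstacle is this elimination step. $\Lambda$, $\hess$ and $\Sigma$ need not commute, so $\Sigma_1$ is not symmetric and its antisymmetric part must be carried explicitly; that part is what generates the $\Lambda\hess\Lambda\hess\Sigma$ terms. The $\beta^{-1}$ contributions from the lagged noise $\kappa\eta_{t-1}$ must also be tracked carefully so that they combine to exactly $(1+\kappa^2)2\Lambda/\beta$. If the direct substitution gets messy, we will instead work with the pair $(x_t, \Lambda\nu_t)$ as a first-order linear system. In that approach we compute the stationary equations for $\E[xx^\top]$, $\E[x(\Lambda\nu)^\top]$ and $\E[(\Lambda\nu)(\Lambda\nu)^\top]$ and eliminate the latter two, which is the same algebra organized as a block Lyapunov equation.
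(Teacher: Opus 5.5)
Your framework and your three second-moment identities are correct. The step you flag as delicate is where the argument fails, and it cannot be repaired: the $\beta^{-1}$ contributions do not combine to $(1+\kappa^2)\,2\Lambda/\beta$, because that term of the proposition is false whenever $\kappa>0$ and $\beta<\infty$. Your own remark that the $(1+\kappa^2)$ from $\E[w_tw_t^\top]$ is ``partly offset'' by the lag-one correlation already points to this. Put $Q\defas 2\beta^{-1}\Lambda$. From (i) and its transpose, $(1-\kappa^2)\Sigma_1=A\Sigma-\kappa\Sigma A^\top-\kappa(1-\kappa)Q$, that is,
\begin{align*}
\Sigma_1=\Sigma-\frac{\Lambda\hess\Sigma-\kappa\Sigma\hess\Lambda}{1-\kappa^2}-\frac{\kappa}{1+\kappa}\,Q .
\end{align*}
Identity (ii) should read $\Sigma=A\Sigma A^\top+\kappa^2\Sigma-\kappa(A\Sigma_1+\Sigma_1^\top A^\top)-\kappa(AQ+QA^\top)+\E[w_tw_t^\top]$. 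Since $\E[x_{t-1}x_{t-2}^\top]=\Sigma_1$, the cross term is $A\Sigma_1$, not the $A\Sigma_1^\top$ you wrote; the transposed version gives wrong coefficients on the non-commuting quadratic terms. After substituting, the $\Sigma$-dependent terms reproduce exactly the left-hand side and the terms $\Lambda\overline{C}_\psi\Lambda+\frac{1+\kappa^2}{1-\kappa^2}\Lambda\hess\Sigma\hess\Lambda$. So for $\beta=\infty$ your plan works even without commutativity. Your explicit treatment of the antisymmetric part of $\Sigma_1$ is genuinely needed here, since the paper solves its lag-one equation as if that covariance were symmetric. The temperature terms, however, total
\begin{align*}
(1+\kappa^2)Q-\frac{\kappa}{1+\kappa}\bigl(AQ+QA^\top\bigr)=(1-\kappa)^2\,\frac{2\Lambda}{\beta}+\frac{4\kappa}{(1+\kappa)\beta}\,\Lambda\hess\Lambda .
\end{align*}

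For a concrete check, take full batches ($B=N$ without replacement, so $\overline{C}_\psi=0$), one dimension, and $\kappa=\lambda\hess=\tfrac12$. Your recursion becomes $x_t=x_{t-1}-\tfrac12x_{t-2}+\eta_t-\tfrac12\eta_{t-1}$ with $\var(\eta_t)=Q=2\lambda/\beta$. Its moving-average coefficients are $(-\tfrac14)^k(1,\tfrac12,0,-\tfrac14)$ in consecutive blocks of four, so $\var(x_t)=\tfrac{21/16}{15/16}Q=\tfrac75Q$. This satisfies the corrected identity, which here reads $\tfrac5{12}\Sigma=\tfrac7{12}Q$. The stated identity reads $\tfrac5{12}\Sigma=\tfrac54Q$ and forces $\Sigma=3Q$.

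The same discrepancy shows up heuristically. With $\Lambda=\lambda I$, full batches and small $\lambda$, the correct version gives $\Sigma\approx(1-\kappa)\hess^{-1}/\beta$: momentum amplifies the drift by $1/(1-\kappa)$ but not the noise injected into the position. The stated version gives $\frac{1+\kappa^2}{1-\kappa}\hess^{-1}/\beta$, which blows up as $\kappa\to1$.

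The paper's proof reaches $(1+\kappa^2)$ only through bookkeeping slips:
\begin{itemize}
\item It uses $\Lambda m_t=\psi_{t-1}-\psi_t$, dropping the $+\sqrt{2\beta^{-1}\Lambda}\,\xi_{t-1}$, both in the cross term $\kappa(I-\Lambda\hess)\E[(\psi_{t-1}-\MLE)m_{t-1}^\top]\Lambda$ and in the lag-one covariance equation.
\item It gets $+2\Lambda/\beta$ in $\Lambda M\Lambda$. The correlation between $\psi_{t-1}$ and the noise injected at that step makes it $-2\Lambda/\beta$.
\end{itemize}
Your approach is essentially the paper's elimination of the momentum, done with correct bookkeeping. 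What it proves is the proposition for $\beta=\infty$ together with the corrected temperature term above, not the statement as written.
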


\begin{proposition} \label{prop:iterate-average-momentum}
Under the same hypotheses as \cref{Proposition: stationary covariance analysis for SGLD with momentum}, the iterate average $\bar{\psi}_{k} = \frac{1}{k}\sum_{k^{\prime}=1}^{k} \psi_{k^{\prime}}$ has stationary covariance 
\[
\Sigma_{\psi}^{(k)} = \frac{1}{k^{2}} \left( k\Sigma_{\psi} + 2\sum_{k^{\prime}=1}^{k-1} \left( I - \Lambda \hess\right)^{k^{\prime}} \Sigma_{\psi}\right),
\]
where $\Sigma_{\psi}$ is defined by \cref{eq:stationary covariance for momentum SGLD}.
\end{proposition}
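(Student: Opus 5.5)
The plan is to compute the stationary lag-covariances $\cov(\psi_{t+l},\psi_t)$ of the proxy chain \cref{eq: proxy SGD with momentum}, and then sum them using the elementary identity
\[
\cov(\bar\psi_k) = \frac{1}{k^{2}}\Bigl( k\Sigma_\psi + \sum_{i\neq j}\cov(\psi_i,\psi_j)\Bigr).
\]
The key structural fact is linearity in conditional mean. Conditional on $\mathcal{F}_{t-1}=\sigma(\psi_s,\nu_s : s\le t-1)$, the minibatch term satisfies
\[
G_t(\MLE)+\grad G_t(\MLE)(\psi_{t-1}-\MLE) = \grad\loss(\MLE) + \hess(\psi_{t-1}-\MLE) + \eta_t .
\]
Here $\eta_t$ is a martingale difference independent of $\mathcal{F}_{t-1}$ in its first moment, and $\grad\loss(\MLE)=0$ by the definition of $\MLE$ (up to the regularizer convention). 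The Gaussian term $\xi_{t-1}$ is likewise mean zero and independent of the past.

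\textbf{Step 1: linear recursion for the augmented state.} Center the state as $z_t\defas(\psi_t-\MLE,\ \nu_t)$. Then $z_t = A z_{t-1} + \zeta_t$ with $\E[\zeta_t\mid\mathcal{F}_{t-1}]=0$, where
\[
A=\begin{pmatrix} I-\Lambda\hess & -\kappa\Lambda\\ \hess & \kappa I\end{pmatrix}.
\]
Iterating gives $\E[z_{t+l}\mid \mathcal{F}_t]=A^l z_t$. Hence, at stationarity,
\[
\cov(z_{t+l},z_t)=A^l\,\cov(z_t),
\]
and the $\psi$-block of this matrix gives $\cov(\psi_{t+l},\psi_t)$. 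The condition that $A$ has spectral radius less than one, which is implicit in the existence of a stationary distribution, ensures these quantities are well defined.

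\textbf{Step 2: reduce to the $\psi$-block.} When $\kappa=0$ this step is immediate: the $\psi$-block of $A$ is exactly $I-\Lambda\hess$, so $\cov(\psi_{t+l},\psi_t)=(I-\Lambda\hess)^l\Sigma_\psi$.

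For $\kappa>0$, I would try to show that the $\psi$-component of $A^l\,\cov(z)$ collapses to $(I-\Lambda\hess)^l\Sigma_\psi$, in the sense in which the proposition is stated. The route is to use the stationary cross-covariance $\cov(\psi,\nu)$ obtained in the proof of \cref{Proposition: stationary covariance analysis for SGLD with momentum}, which expresses $\cov(\nu,\psi)$ in terms of $\hess\Sigma_\psi$. One then checks by induction on $l$ that the off-diagonal contributions telescope. I expect this reduction to be the main obstacle. If the exact collapse fails, I would fall back on an eigen-decomposition of $A$ and keep the $\psi$-block of $A^l$ explicitly, recovering the stated form at $\kappa=0$.

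\textbf{Step 3: assemble.} Pair terms $(i,j)$ and $(j,i)$, using $\cov(\psi_j,\psi_i)=\cov(\psi_i,\psi_j)^\top$, and group the sum by lag $l=|i-j|$. Substituting the lag-$l$ covariance from Step 2 into the identity above then gives the displayed $\Sigma_\psi^{(k)}$.

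At this point I would check carefully whether the lag-$l$ multiplicity factor, which is $k-l$ pairs of each order, and the symmetrization are absorbed into the notation of the statement. If they are not, the correct expression is
\[
\frac{1}{k^2}\Bigl(k\Sigma_\psi+\sum_{l=1}^{k-1}(k-l)\bigl[(I-\Lambda\hess)^l\Sigma_\psi+\Sigma_\psi(I-\hess\Lambda)^l\bigr]\Bigr),
\]
and I would present this form, noting that it coincides with the statement's formula whenever $\Lambda\hess$ commutes with $\Sigma_\psi$, up to the counting convention.
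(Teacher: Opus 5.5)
Your Step 2 cannot be completed. The collapse you hope to prove by induction is false for every $\kappa>0$. The cross-covariance $\cov(\nu_t,\psi_t)$ you planned to import from the proof of \cref{Proposition: stationary covariance analysis for SGLD with momentum} is precisely the obstruction, because it is nonzero.

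To see this, take $\beta=\infty$, which the hypotheses allow. The position update then gives $\Lambda\nu_t=\psi_{t-1}-\psi_t$, and eliminating the momentum yields
\begin{equation*}
\psi_{t+1}-\MLE=\bigl((1+\kappa)I-\Lambda\hess\bigr)(\psi_t-\MLE)-\kappa(\psi_{t-1}-\MLE)-\Lambda\eta_t ,
\end{equation*}
where $\eta_t$ is conditionally mean zero given the past. Consequently $\Xi_l\defas\E[(\psi_{t+l}-\MLE)(\psi_t-\MLE)^\top]$ satisfies
\begin{equation*}
\Xi_1=\bigl((1+\kappa)I-\Lambda\hess\bigr)\Sigma_\psi-\kappa\Xi_1^\top,\qquad
\Xi_l=\bigl((1+\kappa)I-\Lambda\hess\bigr)\Xi_{l-1}-\kappa\Xi_{l-2}\quad(l\ge2).
\end{equation*}
This is a second-order (heavy-ball) recursion, not a geometric sequence.

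In one dimension with $\Lambda\hess=\lambda h$, the first relation gives $\Xi_1=\bigl(1-\lambda h/(1+\kappa)\bigr)\Sigma_\psi\neq(1-\lambda h)\Sigma_\psi$. In your block notation, the term $-\kappa\Lambda\cov(\nu_t,\psi_t)$ in the $\psi$-block of $A\cov(z_t)$ equals $\kappa\lambda h\,\Sigma_\psi/(1+\kappa)\neq 0$, so there is nothing to telescope.

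The paper's proof does not contain an idea you are missing here. It simply asserts $\Xi_k=(I-\Lambda\hess)^k\Sigma_\psi$ ``under stationarity and the linearized dynamics,'' carrying over the computation from the proof of \cref{prop: mixing time}. That computation is valid only when $\kappa=0$.

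Your Step 3 correction is right, and the discrepancy is not a counting convention. Even with $\kappa=0$ in one dimension, write $\rho=1-\lambda h$. Then $\var(\bar\psi_3)=\Sigma_\psi(3+4\rho+2\rho^2)/9$, whereas the displayed formula gives $\Sigma_\psi(3+2\rho+2\rho^2)/9$. In higher dimensions the displayed matrix is generally not even symmetric. The paper's proof makes exactly this slip when it writes the double sum as $k\Sigma_\psi+2\sum_{k'=1}^{k-1}\Xi_{k'}$, dropping the weights $k-l$ and the transpose.

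So the proposition as stated is not provable, and you should not hedge toward it. What your argument, with the Step 2 fallback, actually establishes is
\begin{equation*}
\Sigma_\psi^{(k)}=k^{-2}\Bigl(k\Sigma_\psi+\sum_{l=1}^{k-1}(k-l)\bigl(\Xi_l+\Xi_l^\top\bigr)\Bigr).
\end{equation*}
Here $\Xi_l$ is the $\psi\psi$-block of $A^l\cov(z_t)$. Equivalently, it is given by the recursion above, with an extra injected-noise correction at lag one when $\beta<\infty$. This reduces to $\Xi_l=(I-\Lambda\hess)^l\Sigma_\psi$ only at $\kappa=0$. Present it as a corrected statement rather than as a proof of the displayed formula.
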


In the momentum setting, we obtain 2-Wasserstein error bounds analogous to the non-momentum case. The main difference is that the contraction factor and the corresponding constants now depend on the momentum parameter $\kappa$.

\begin{theorem}\label{thm:wasserstein-error-bound-momentum}
latexIf assumptions \ref{assump:statistic-convex}--\ref{assump:loss} hold and $\Lambda=\lambda I_D$
for some $\lambda>0$, $\kappa\in(0,1)$, $\lambda\in(0,(1-\kappa)/(4L))$,  and 
$\frac{2\lambda L^2}{1-\kappa} + \frac{2L^2\kappa(1+\lambda)}{(1-\kappa)^2} + \kappa < \mu$, 
then, letting $\bar\beta \defas \rho(A) < 1$ for the coefficient matrix $A$ defined in 
\cref{eq:matrix-A-mom}, 
$\overline{M^p} \defas N^{-1}\sum_{n=1}^N M_n^p$ for $p \in \{1,2\}$, 
$C_s \defas \E\|\psi_s - \MLE\|^4$, and $\mathcal P$ as in \cref{eq:definition-of-P}, 
for all $t = 1, 2, \dots$,
\[
W_2^2\bigl(\pi_{\theta,t},\pi_{\psi,t}\bigr)
\le
\bar\beta^{\,t}\,W_2^2\!\bigl(\pi_{\theta,0},\pi_{\psi,0}\bigr)
+
\mathcal{P}\,\sum_{s=1}^{t}\bar\beta^{\,t-s}\,C_{s-1}.
\]
\end{theorem}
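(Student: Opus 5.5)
We prove \cref{thm:wasserstein-error-bound-momentum} with the same synchronous-coupling argument that underlies \cref{thm:wasserstein-error-bound}. The only change is that we now track a two-component error vector (position and momentum) rather than a scalar. Run the momentum SGLD chain \cref{eq: original SGD with momentum} and the proxy chain \cref{eq: proxy SGD with momentum} with the same minibatches $S_t$ and the same Gaussian noise $\xi_{t-1}$. Start from an optimal coupling of $(\theta_0,\psi_0)$ and take $m_0=\nu_0$. Define the differences $e_t \defas \theta_t-\psi_t$ and $u_t \defas m_t-\nu_t$. The injected noise then cancels, and
\begin{align}
u_t &= \kappa u_{t-1} + \bigl[G_t(\theta_{t-1})-G_t(\psi_{t-1})\bigr] + R_t, \\
e_t &= e_{t-1} - \lambda u_t .
\end{align}
Here $R_t \defas G_t(\psi_{t-1}) - G_t(\MLE) - \grad G_t(\MLE)(\psi_{t-1}-\MLE)$ is the per-observation Taylor remainder. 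Assumption \ref{assump:statistic-smooth} bounds its conditional moments exactly as in the non-momentum proof: $\E[\norm{R_t}^2 \mid \psi_{t-1}] \le \overline{M^2}\,\norm{\psi_{t-1}-\MLE}^4/4$, and its conditional mean is at most $\overline{M}\,\norm{\psi_{t-1}-\MLE}^2/2$ in norm.

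Next we derive a linear recursion for the vector $a_t \defas (\E\norm{e_t}^2, \E\norm{u_t}^2)^\top$, and this is where the matrix $A$ of \cref{eq:matrix-A-mom} comes from. Expand $\norm{e_t}^2 = \norm{e_{t-1}-\lambda\kappa u_{t-1} - \lambda\Delta_t - \lambda R_t}^2$, where $\Delta_t \defas G_t(\theta_{t-1})-G_t(\psi_{t-1})$. Treat $\Delta_t$ with convexity and co-coercivity of each $\ell_n$ (Assumptions \ref{assump:statistic-convex}, \ref{assump:statistic-smooth}) together with the strong convexity of $\loss$ (Assumption \ref{assump:loss}). This gives $\E\langle e_{t-1},\Delta_t\rangle \ge \mu\E\norm{e_{t-1}}^2$ on the full-gradient part and $\E\norm{\Delta_t}^2 \le L\,\E\langle e_{t-1},\Delta_t\rangle \le L^2\E\norm{e_{t-1}}^2$. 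Split the cross terms with Young's inequality: $\lambda\kappa\langle e,u\rangle \le$ (weighted squares), with weights chosen so the $e$-coefficient retains a contraction of order $\lambda\mu$. For the remainder, write $R_t = \E[R_t\mid\cdot] + (R_t-\E[R_t\mid\cdot])$. The mean part is handled by Young's inequality with weight $\lambda\mu$, which produces the $\overline{M}^2/(4\mu)$-type term. The fluctuation part enters only through its second moment, which produces the $\lambda\overline{M^2}/2$-type term. The $u$-recursion is handled the same way: $\E\norm{u_t}^2 \le (1+\eta)\kappa^2\E\norm{u_{t-1}}^2 + (1+1/\eta)(2L^2\E\norm{e_{t-1}}^2 + 2\E\norm{R_t}^2)$. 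Collecting terms gives
\begin{align}
a_t \le A\,a_{t-1} + p\,C_{t-1}
\end{align}
entrywise, with $A$ a nonnegative $2\times 2$ matrix and $p$ a nonnegative vector depending on $\lambda,\kappa,L,\mu,\overline{M},\overline{M^2}$. The scalar $\mathcal{P}$ in \cref{eq:definition-of-P} is obtained from $p$ after projecting.

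Since $A$ is nonnegative, the entrywise inequality can be iterated, giving $a_t \le A^t a_0 + \sum_{s=1}^t A^{t-s}p\,C_{s-1}$. To convert this into the scalar rate $\bar\beta^{t}$, pair it with a positive left Perron eigenvector $w$ of $A$ (normalized so that $w_1 = 1$). Then the Lyapunov quantity $V_t = w^\top a_t$ satisfies $V_t \le \rho(A)V_{t-1} + (w^\top p)C_{t-1}$, and we also have $W_2^2(\pi_{\theta,t},\pi_{\psi,t}) \le \E\norm{e_t}^2 \le V_t$. Because $u_0 = 0$, $V_0 = W_2^2(\pi_{\theta,0},\pi_{\psi,0})$. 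Unrolling the recursion yields the claimed bound with $\bar\beta = \rho(A)$ and $\mathcal{P} = w^\top p$.

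The main obstacle is showing $\rho(A)<1$. For a nonnegative $2\times2$ matrix this holds iff $A_{11}<1$, $A_{22}<1$, and $(1-A_{11})(1-A_{22}) > A_{12}A_{21}$. With $A_{11} \approx 1-\lambda\mu+O(\lambda^2L^2)+O(\lambda\kappa)$ and $A_{22} \approx \kappa^2(1+\eta)$, the determinant condition is expected to reduce to precisely $\frac{2\lambda L^2}{1-\kappa}+\frac{2L^2\kappa(1+\lambda)}{(1-\kappa)^2}+\kappa<\mu$. The Young weights (for instance $\eta = (1-\kappa)/\kappa$, so that $(1+\eta)\kappa^2 = \kappa$) must be chosen carefully to produce these exact constants, using $\lambda<(1-\kappa)/(4L)$ to absorb the $\lambda^2$ terms. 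I would first fix those weights and then verify the determinant inequality algebraically.
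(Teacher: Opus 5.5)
Your proposal is correct and matches the paper's proof step for step: synchronous coupling with $m_0=\nu_0=0$, the Young split with weight $(1-\kappa)/\kappa$ in the momentum-difference recursion, the $2\times 2$ entrywise recursion \cref{eq:matrix-A-thm}, and a weighted Lyapunov function $\E\|\theta_t-\psi_t\|^2+\gamma\,\E\|m_t-\nu_t\|^2$ whose optimal weight $\gamma^\star$ is exactly your left Perron eigenvector, so $\mathcal{P}$ is precisely your $w^\top p$. The step you flag as the main obstacle is immediate: the paper's entries come from bounding $\|\Delta_t\|$ by $L\|e_{t-1}\|$ directly rather than via co-coercivity, and with them $(1-a_{11})(1-a_{22})>a_{12}a_{21}$ is literally the stated hypothesis on $\mu$; moreover, the inequality itself holds with $\bar\beta=\rho(A)$ even without it.
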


Finally, we find that, with the momentum proportional to $\lambda$, the Wasserstein error remains of order $\lambda/B$; moreover, the bound recovers the non-momentum result when $\kappa=0$.

\begin{corollary}\label{cor:stationary-wasserstein-error-bound-momentum}
Under the same assumptions as \cref{thm:wasserstein-error-bound-momentum} with $\beta = \infty$, assume the \emph{scaled momentum} regime $\kappa = c_\kappa \lambda$ with
$0<c_\kappa\le\min\Big\{\tfrac{\mu^2}{32L^2},\ \tfrac{\hat\mu}{c_1\hat L^3}\Big\}$ and
$\lambda\le\min\Big\{
1,\ \tfrac{1}{\hat L},\ \tfrac{\mu}{c_2L^2},\ \tfrac{B\hat\mu}{c_3L^2},\ (\hat\mu/c_4)^{1/4}
\Big\}.$ 
Then there exists a constant $A_\star > 0$ (given at \cref{eq:Amom}) such that
\[
W_2(\pi_\theta, \pi_\psi) \le A_\star \frac{\lambda}{B}.
\]
\end{corollary}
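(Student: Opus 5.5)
The plan is to combine the stationary version of \cref{thm:wasserstein-error-bound-momentum} with a uniform-in-time fourth-moment bound for the momentum proxy $\psi_t$. The analogue of \cref{lem:fourth-moment-bound} in the momentum setting provides this moment bound. Starting from stationarity (or letting $t\to\infty$ with $\psi_0=\theta_0$), the first term vanishes in the limit, since $\bar\beta=\rho(A)<1$. This leaves
\[
W_2^2(\pi_\theta,\pi_\psi)\le \frac{\mathcal P}{1-\bar\beta}\,\sup_s C_s .
\]
The proof then reduces to three estimates in the scaled regime $\kappa=c_\kappa\lambda$:
\begin{itemize}
\item[(i)] $1-\bar\beta\gtrsim \lambda\mu$;
\item[(ii)] $\mathcal P\lesssim \lambda(\lambda\overline{M^2}+\overline{M}^2/\mu)$, up to constants depending on $\kappa$, which stay bounded since $\kappa\le c_\kappa$;
\item[(iii)] $\sup_s C_s\lesssim (\lambda/B)^2$.
\end{itemize}
Together these give $W_2^2\lesssim (\lambda/B)^2$ and hence $W_2\le A_\star\lambda/B$.

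\emph{Step (i).} I would compute the spectral radius of the $2\times 2$ (block) contraction matrix $A$ from \cref{eq:matrix-A-mom}. The diagonal entries have the form $1-\lambda\mu(1-2\lambda L)+O(\kappa)$ and $\kappa$. The off-diagonal coupling entries are of order $\kappa L^2/(1-\kappa)$ and $\lambda$. With $\kappa=c_\kappa\lambda$ and $c_\kappa\le\mu^2/(32L^2)$, a Gershgorin or explicit-eigenvalue argument gives $\rho(A)\le 1-\lambda\mu/4$, say. The conditions $\lambda\le\mu/(c_2L^2)$ and $\lambda\le 1/\hat L$ absorb the $\lambda^2L^2$ perturbations.

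\emph{Step (iii).} For the fourth moment I would write the proxy as a linear random recursion in the stacked vector $z_t=(\psi_t-\MLE,\ \Lambda\nu_t)$:
\[
z_t=A_t z_{t-1}+b_t .
\]
Here $A_t$ is built from $I-\lambda\nabla G_t(\MLE)$ and $\kappa$, and $b_t$ is built from $G_t(\MLE)$. This is where the $\beta=\infty$ (SGD) assumption enters: $\E b_t=0$ (up to the regularizer term) and $\E\|b_t\|^2=O(\lambda^2/B)$, $\E\|b_t\|^4=O(\lambda^4/B^2)$. I would expand $\E\|z_t\|^4$ and use the independence of the minibatch from $z_{t-1}$. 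Writing $\nabla G_t(\MLE)=\hess+(\text{noise of variance }O(\hat L^2/B))$, the conditional fourth-moment multiplier is $1-c\lambda\hat\mu+O(\lambda^2L^2/B)+O(\lambda^4\hat L^4)+O(\kappa\hat L^3)$. This is strictly less than one under $\lambda\le B\hat\mu/(c_3L^2)$, $\lambda\le(\hat\mu/c_4)^{1/4}$, and $c_\kappa\le\hat\mu/(c_1\hat L^3)$, which is exactly why those conditions appear. The cross terms between $z_{t-1}$ and $b_t$ are handled by Young's inequality, giving the recursion
\[
\E\|z_t\|^4\le(1-c\lambda\hat\mu)\E\|z_{t-1}\|^4+O(\lambda^4/B^2)+O(\lambda^2/B)\,\E\|z_{t-1}\|^2 .
\]
A parallel second-moment recursion yields $\E\|z\|^2=O(\lambda/B)$. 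Hence $\sup_t C_t=O(\lambda^2/B^2)$, or more precisely $O((\lambda/B)^2/\hat\mu^2)$.

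\emph{Main obstacle.} I expect the hardest part to be Step (iii) with momentum: controlling the fourth moment of a coupled two-component linear recursion with random multiplicative matrices. The momentum block is not self-contracting in norm, so I would first try a weighted Lyapunov function $\|\psi-\MLE\|^2+a\lambda^{-1}\|\Lambda\nu\|^2$ with $a$ tuned to $c_\kappa$, and then square it. Once that is done, collecting the constants from (i)--(iii) defines $A_\star$. Setting $\kappa=0$ recovers \cref{cor:stationary-wasserstein-error-bound}.
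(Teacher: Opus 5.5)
Your proposal is correct and follows essentially the same route as the paper. You pass to $t\to\infty$ in \cref{thm:wasserstein-error-bound-momentum}, get $1-\bar\beta\gtrsim\lambda\mu$ from the explicit eigenvalue formula, show $\mathcal P/(1-\bar\beta)$ is bounded by a constant, and bound the stationary fourth moment by $O(\lambda^2/B^2)$ through a second-moment recursion feeding a fourth-moment recursion, exactly as \cref{lem:fourth-moment-compact} does. The only difference is bookkeeping in that last step: the paper propagates $(\E\|\psi_t-\MLE\|^4,\E\|\nu_t\|^4)$ through a componentwise nonnegative $2\times 2$ recursion inverted by Cramer's rule, whereas your single Lyapunov function on the stacked vector $(\psi_t-\MLE,\lambda\nu_t)$ builds the $\lambda$-scaling of the momentum in from the start, so the cross-coupling is small automatically.
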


\section{Application to High-dimensional Problems}
\label{sec:high-dim}

\paragraph{Dense setting.}
Let $D$ denote the parameter dimension, let $I \dist \mathrm{Unif}(\{1,\dots,N\})$ be independent,
and define
$g_I \defas \nabla_\theta \ell(x_I,y_I,\hat\theta),
\tau_2^2 \defas \E\|g_I\|^2$, and $
\tau_4^4 \defas \E\|g_I\|^4$.
Assume there exist constants $c_2,c_4<\infty$ independent of $D$ such that
$\tau_2^2 \le c_2 D$ and $\tau_4^4\le c_4 D^2$.
Such bounds hold, for example, under sub-Gaussian designs with uniformly bounded GLM weights;
see \citealp{Vershynin2018highd}.
Let $\xi\sim\mathcal N(0,I_D)$ so that $\E\|\xi\|^2=D$ and $\E\|\xi\|^4=D(D+2)$.
Corollaries~\ref{cor:stationary-wasserstein-error-bound} and~\ref{cor:stationary-wasserstein-error-bound-momentum} yield
\[
W_2(\pi_\theta,\pi_\psi)\ \le\ A_{\mathrm{eff}}\Big(\frac{\lambda}{B}+\frac{1}{\beta}\Big),
\]
where $A_{\mathrm{eff}}$ is the explicit constant in the corresponding corollary
(e.g., $A_{\mathrm{eff}}=\sqrt{A}$ when the corollary is stated as $W_2^2\le A(\lambda/B+1/\beta)^2$; see
\cref{eq:wasserstein-error-bound-constant-sgld,eq:Amom} for the explicit definitions).
If the curvature constants entering these corollaries (e.g., $\mu,L,\hat\mu,\hat L$) are bounded above and below by constants
independent of $d$, then inserting the bounds on $\tau_2$, $\tau_4$, and the Gaussian moments above
into \cref{eq:wasserstein-error-bound-constant-sgld,eq:Amom} shows that there exists $C>0$ independent of $D$ such that
$A_{\mathrm{eff}}\le C D$, and hence
\[
W_2(\pi_\theta,\pi_\psi)\ \le\ C\, D\Big(\frac{\lambda}{B}+\frac{1}{\beta}\Big).
\]
In particular, if $B\ge c D$ then $W_2(\pi_\theta,\pi_\psi)\le C'(\lambda+D/\beta)$, and thus
$W_2(\pi_\theta,\pi_\psi)\le C''\lambda$ uniformly in $D$ when $\beta=\infty$ or when $\beta\ge c''' D/\lambda$.
If instead $\beta=N$ and $N/D\in[\gamma_{\min},\gamma_{\max}]$ for fixed $0<\gamma_{\min}\le\gamma_{\max}<\infty$,
then $d/\beta=D/N\in[1/\gamma_{\max},1/\gamma_{\min}]$ and the bound need not vanish as $D\to\infty$.

\paragraph{Sparse setting.}
Alternatively, we can consider the sparse regime.
Let $S\subseteq[D]$ with $|S|=s\ll D$ and let $P_S$ be the coordinate projector.
Assume both the exact and proxy chains evolve on the affine subspace
$\mathcal A_S\defas\hat\theta+\mathrm{range}(P_S)$; for example, if $P_{S^c}\theta_0=P_{S^c}\psi_0=P_{S^c}\hat\theta$
and $P_S$ is applied to every drift and injected-noise term so that $\theta_t,\psi_t\in\mathcal A_S$ for all $t$. 
Define the restricted gradient moments at $\hat\theta$ by
$g_{I,S}\defas P_S\nabla_\theta \ell(x_I,y_I,\hat\theta),
\tau_{2,S}^2 \defas \E\|g_{I,S}\|^2$ and $
\tau_{4,S}^4 \defas \E\|g_{I,S}\|^4$.
Assume the curvature constants, when restricted to $\mathcal A_S$, are bounded above and below by
constants independent of $D$ and $s$, and there exist constants $c_2,c_4<\infty$ independent of $D$ and $s$ such that
$\tau_{4,S}^2 \le c_2\, s$ and 
$\tau_{4,S}^4 \le c_4\, s^2$
A sufficient condition is isotropic sub-Gaussian designs on $S$ with uniformly bounded per-sample weights; see \citep{Vershynin2018highd}. 
If the injected noise is also projected, then for $\xi\sim\mathcal N(0,I_D)$ it holds that 
$\E\|P_S\xi\|^2=s$ and $\E\|P_S\xi\|^4=s(s+2)$.
Then the same inspection of the constants in \cref{cor:stationary-wasserstein-error-bound} and \cref{cor:stationary-wasserstein-error-bound-momentum} yields a constant $C>0$ independent of $D$ and $s$ such that
\[
W_2(\pi_\theta,\pi_\psi) \le Cs\left(\frac{\lambda}{B}+\frac{1}{\beta}\right).
\]
In particular, if $B\ge c s$ then $W_2(\pi_\theta,\pi_\psi)\le C'(\lambda+s/\beta)$, hence
$W_2(\pi_\theta,\pi_\psi)\le C''\lambda$ for $\beta=\infty$ and also for $\beta\ge c'''\, s/\lambda$.
If the injected noise is not rank-$s$ , then the diffusion moments scale with $D$
(since $\E\|\xi\|^2=D$ and $\E\|\xi\|^4=D(D+2)$), so the temperature-dependent contribution generally scales with $D/\beta$ rather than $s/\beta$.

\section{Additional Experiment Details}
\subsection{Empirical Validation of the Wasserstein Bound}

We empirically validate the Wasserstein error bound in 
\cref{cor:stationary-wasserstein-error-bound} using Poisson regression in both
well-specified and misspecified settings. We generate covariates
$x_i \in \mathbb{R}^D$ and responses according to two synthetic data-generating
mechanisms. In the well-specified setting, the responses are generated from
\[
y_i \sim \mathrm{Poisson}\!\left(\exp\{x_i^\top \theta_\star\}\right),
\]
and the fitted model is also Poisson regression. In the misspecified setting, the
responses are generated from a negative binomial model,
\[
y_i \sim \mathrm{NegBin}(r_i,p_i),
\qquad
p_i = \frac{r_i}{r_i+\exp\{x_i^\top\theta_\star\}},
\]
but are still fitted using a Poisson model. 
We use sample size $N=2{,}000$ and dimension $D=50$.

To instantiate the theoretical upper bound, we use plug-in estimates evaluated at
$\MLE$. Specifically, we estimate the smoothness constant $L$ and strong convexity
constant $\mu$ using the largest and smallest eigenvalues of the empirical Hessian,
respectively. The higher-order quantities $\overline{M}$, $\overline{M^2}$, and
$\tau_4$ are estimated empirically from the sample gradients. We then evaluate
$W_2(\pi_\theta,\pi_\psi)$ across different batch sizes $B$ and different values of
the ratio $\lambda/B$.

As shown in \cref{Fig:w2_bound}, the empirical Wasserstein distance exhibits a clear
approximately linear scaling in $\lambda/B$ across all batch sizes. Moreover, in both
the well-specified Poisson setting and the misspecified negative binomial setting,
the empirical curves remain below the theoretical upper bound. This confirms that
the bound captures the correct dependence on $\lambda/B$, although it is
conservative in magnitude.

\begin{figure}[t]
    \centering
    \includegraphics[width=\textwidth]{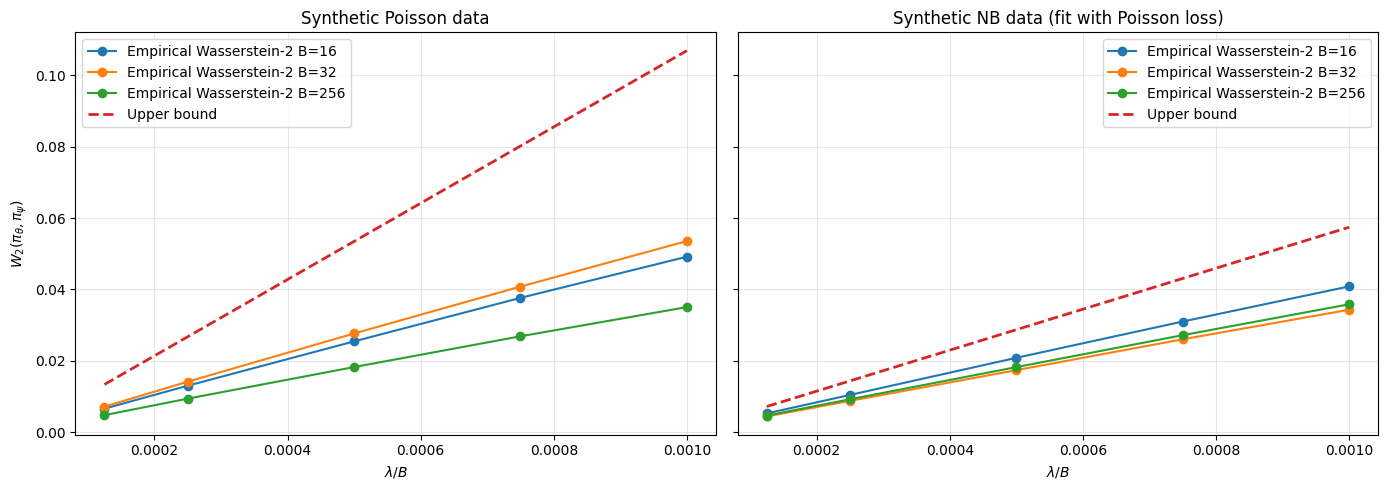}
    \caption{
    Empirical validation of the Wasserstein error bound in
    \cref{cor:stationary-wasserstein-error-bound}. 
    \textbf{Left:} well-specified Poisson data fitted with a Poisson model.
    \textbf{Right:} misspecified negative binomial data fitted with a Poisson model.
    }
    \label{Fig:w2_bound}
\end{figure}

\subsection{Computational Cost of Determining $\Lambda$}
\label{sec:Computational Cost}
In this subsection, we benchmark the wall-clock cost of computing the preconditioner $\Lambda$—by solving the matrix equation induced by \cref{eq: SGLD stationary covariance,eq: covariance of stationary noise}—against the cost of running the resulting MCMC chain. \cref{tab: cost analysis linear regression,tab: cost analysis Poisson regression} show that $\Lambda$ can be computed extremely quickly relative to sampling: across all tested dimensions, $\Lambda$ construction takes at most a few milliseconds and is typically $10^{-5}$–$10^{-3}$ of the MCMC runtime. In the following exploratory study, we fix $N=1000$ and $B=64$, and run MCMC for 10 epochs.

\begin{table}[t]
\label{tab: cost analysis linear regression}
\centering
\caption{
Linear regression: comparison of  $\Lambda$ computation and MCMC running time.
Entries are mean wall-clock time (seconds) averaged over independent runs.
}
\label{tab:timing_linear_lambda_exact}
\small
\begin{tabular}{c c c c}
\toprule
$D$ & $\Lambda$ computation time & MCMC time &  $\Lambda$ computation time / MCMC \\
\midrule
5  & $9.2 \times 10^{-5}$ & $1.49$ & $6.2 \times 10^{-5}$ \\
10 & $8.4 \times 10^{-4}$ & $1.67$ & $5.0 \times 10^{-4}$ \\
20 & $6.5 \times 10^{-4}$ & $1.83$ & $3.6 \times 10^{-4}$ \\
50 & $8.0 \times 10^{-4}$ & $3.03$ & $2.6 \times 10^{-4}$ \\
\bottomrule
\end{tabular}
\end{table}

\begin{table}[t]
\label{tab: cost analysis Poisson regression}
\centering
\caption{
Poisson regression: comparison of $\Lambda$ computation and MCMC time.
Entries are mean wall-clock time (seconds) averaged over independent runs.
}
\label{tab:timing_poisson_lambda_taylor}
\small
\begin{tabular}{c c c c}
\toprule
$D$ &  $\Lambda$ computation time & MCMC time &  $\Lambda$ computation time / MCMC \\
\midrule
5  & $9.5 \times 10^{-5}$ & $1.71$ & $5.6 \times 10^{-5}$ \\
10 & $4.0 \times 10^{-3}$ & $1.74$ & $2.3 \times 10^{-3}$ \\
20 & $8.5 \times 10^{-4}$ & $1.79$ & $4.8 \times 10^{-4}$ \\
50 & $7.8 \times 10^{-4}$ & $1.81$ & $4.3 \times 10^{-4}$ \\
\bottomrule
\end{tabular}
\end{table}

\subsection{Details of the $\beta$-divergence.}
\label{sec:beta discussion}
Under the definition of $\beta$-divergence $\ell^{(\beta)}(y, f(\cdot ; \theta))=-\frac{1}{\beta-1} f(y ; \theta)^{\beta-1}+\frac{1}{\beta} \int f(z ; \theta)^\beta d z$, the loss $\loss$ can be rewriten as follows
\begin{align}
	\begin{aligned}
	\loss(\theta) &= \frac{1}{N}\sum_{n=1}^{N} \ell^{(\beta)}(y_{n}, f(\cdot ; \theta)) + \frac{1}{N} \reg(\theta) \\
	& = -\frac{1}{N}\sum_{n=1}^{N} \frac{1}{\beta-1} f(y_{n} ; \theta)^{\beta-1} +\frac{1}{\beta} \int f(z ; \theta)^\beta d z +  \frac{1}{N} \reg(\theta)\\
	& = \frac{1}{N}\sum_{n=1}^{N} \tilde{\ell}_{n}^{(\beta)} + \frac{1}{N} \left(\Omega^{(\beta)}(\theta) + \reg(\theta)\right) ,
	\end{aligned}
\end{align}
where $\tilde{\ell}_{n}^{(\beta)}(\theta) = -\frac{1}{\beta-1} f(y_{n} ; \theta)^{\beta-1}$ and $\Omega^{(\beta)}(\theta) = \frac{N}{\beta} \int f(z ; \theta)^\beta d z $.

Then the loss $\loss$ can be rewritten as 
\begin{align}
	\loss(\theta) = \frac{1}{N} \sum_{n=1}^{N} \ell_{n}^{(\beta)} (\theta) ,
\end{align}
where $\ell_{n}^{(\beta)}(\theta)  =  \tilde{\ell}_{n}^{(\beta)} + \frac{1}{\beta} \int f(z ; \theta)^\beta d z + \frac{1}{N} \reg(\theta)$.

Similarly, we can compute $\widehat{\mathcal{J}}^{(\beta)} = \frac{1}{N}\sum_{n=1}^N \nabla^2 \ell_{n}^{(\beta)}(\hat\theta)$ and $\widehat{\mathcal{I}}^{(\beta)} = \frac{1}{N}\sum_{n=1}^N \nabla \ell_{n}^{(\beta)}(\hat\theta)(\nabla \ell_{n}^{(\beta)}(\hat\theta))^\top$ to use our \cref{alg:uq_tuning} under $\beta$-divergence loss. 
\subsection{Full Experiment Results}
\label{sec: Full Experiment Results}
Due to space constraints, we are unable to report parameter errors and detailed confidence intervals in the main text.
This subsection therefore presents the complete experimental results for all settings.
To aid interpretation of the reported relative covariance errors, we additionally compare the marginal variances of the estimated covariance $\hat{\mathcal{S}}$ with those of the target covariance $\mathcal{S}_{\star}$.

\Cref{Fig: Linear regression on synthetic dataset} shows that for both datasets, using our results leads to the desired marginal variances when using either a small or large batch size. 
The continuous-time tuning performs well when the batch size is small, since a small batch size requires using a small learning rate.
However, the variances are too large in the large batch size case. 
The large-sample+well-specified tuning, on the other hand, leads to excessive variance for both small and large batch size regimes since the assumption that the model is well-specified is violated. \Cref{Fig: Poisson regression on synthetic dataset} shows that theories based on the heuristic SGD noise $\overline{C} = \frac{1}{B}H$ 
lead to an excessively large stationary covariance for the simulated data but a too smaller covariance for the German credit data. 
The continuous-time tuning leads to too large covariance for the large batch size in both cases. 
In our theory, on the other hand, is accurate in all scenarios. 

\begin{figure}[t]
	\centering
	\includegraphics[width=\textwidth]{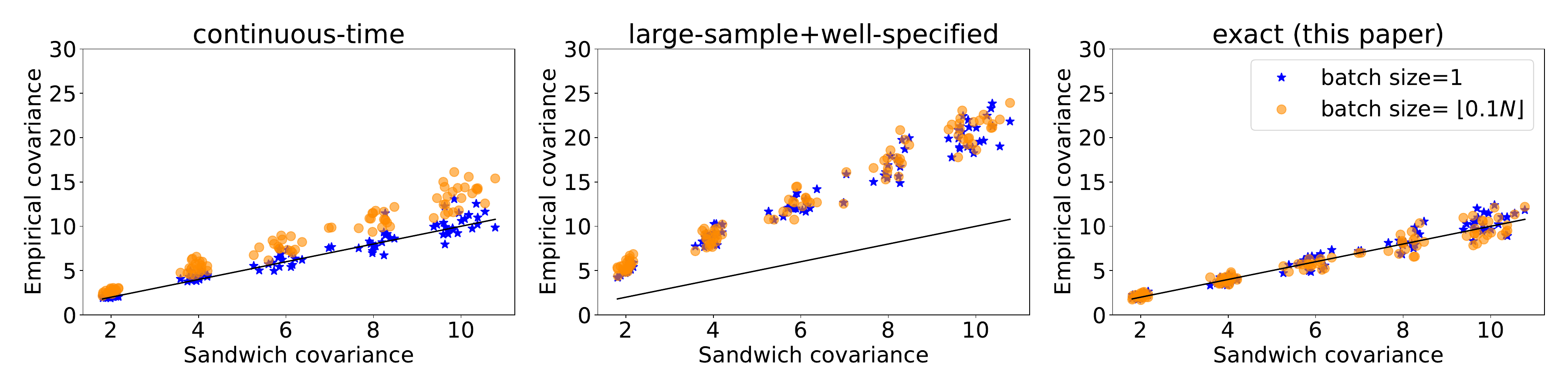}
	\includegraphics[width=\textwidth]{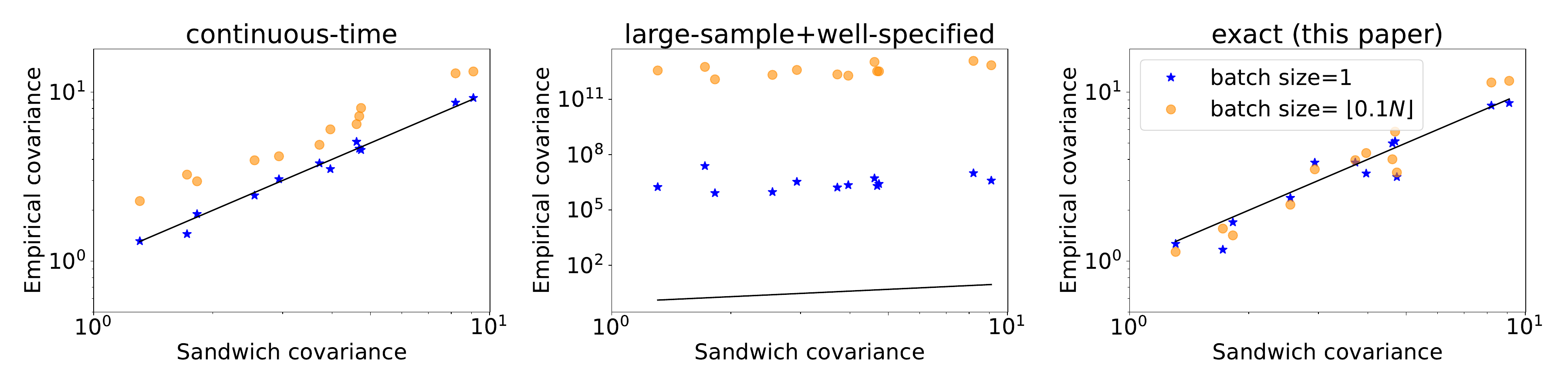}
	\caption{Comparison of step size tuning guidance for linear regression with \textbf{(top)} simulated misspecified data with heteroskedastic noise 
		and \textbf{(bottom)} the classic Boston housing dataset.
	}
	\label{Fig: linear regression on boston housing}
	\label{Fig: Linear regression on synthetic dataset}
\end{figure}

\begin{figure}[t]
	\includegraphics[width=\textwidth]{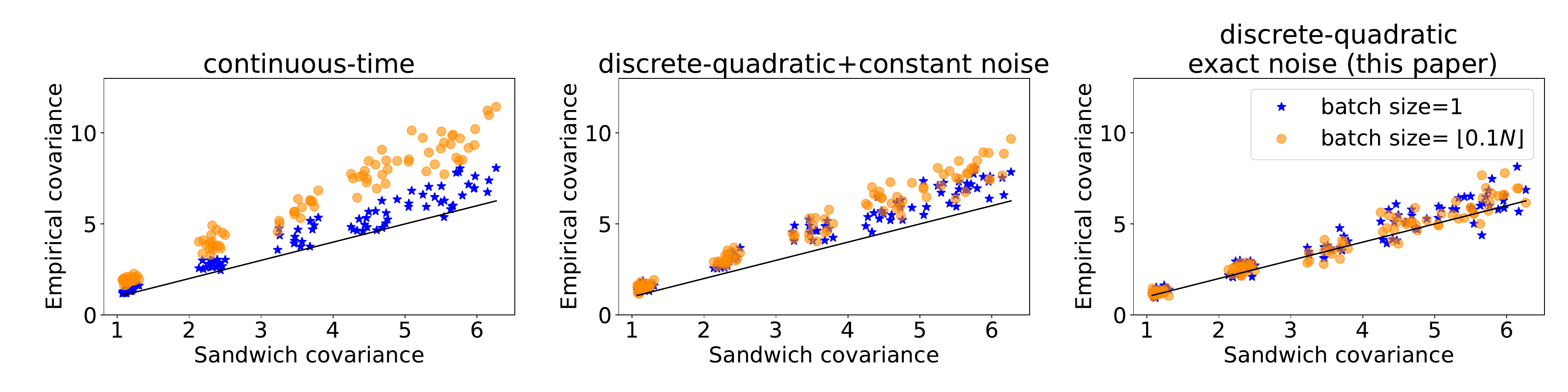}
	\includegraphics[width=\textwidth]{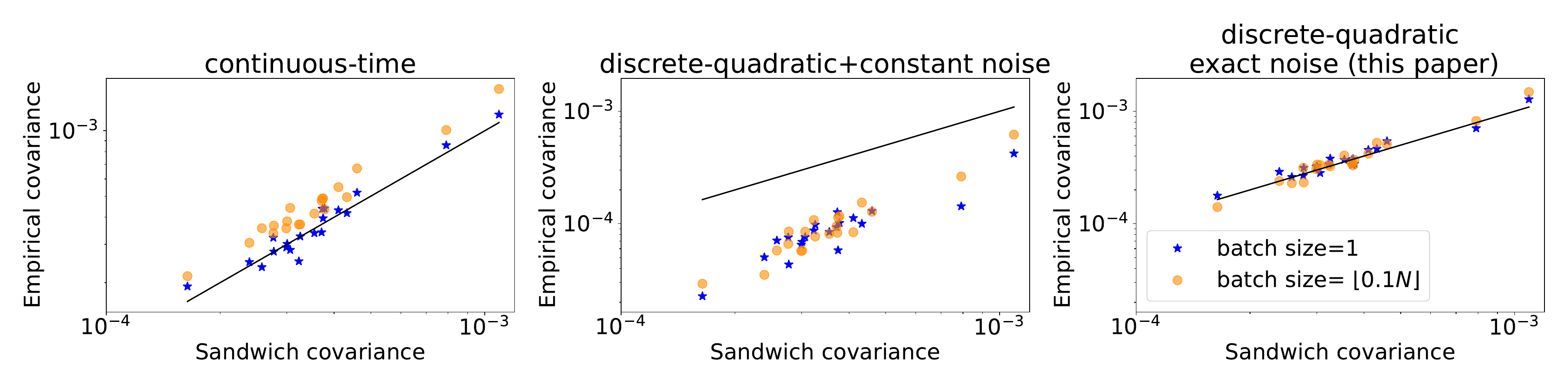}
	\caption{Comparison of step size tuning guidance for Poisson regression with \textbf{(top)} simulated well-specified data 
		and \textbf{(bottom)} the German credit data.
	}
	\label{Fig: Poisson regression on synthetic dataset}
	\label{Fig: Poisson regression on credibility dataset}
\end{figure}

\begin{table*}[htbp]
\centering
\caption{
Full results for linear regression experiments with simulated data. 
Calibration error is the Kolmogorov--Smirnov distance to $\mathrm{Unif}(0,1)$; lower is better.
Covariance error is $(\|\mathcal{S}_{\star}-\hat{\mathcal{S}}\|_{F})/\|\mathcal{S}_{\star}\|_{F}$; lower is better.
Within each metric row and loss block, for a fixed batch size $B$, bold indicates methods whose 95\% confidence intervals overlap with the confidence interval of the method with the lowest mean error.
}
\label{tab:full_robust_linear_regression}

\begingroup
\setlength{\tabcolsep}{3pt}
\renewcommand{\arraystretch}{1.08}
\newcommand{\smallcellci}[3]{%
  \begin{tabular}{@{}c@{}}
  #1\\[-1pt]
  {\scriptsize [#2, #3]}
  \end{tabular}%
}

\begin{tabular}{lcccc|ccccc}
\toprule
& \multicolumn{4}{c|}{Log loss}
& \multicolumn{5}{c}{$\beta$-loss ($\beta = 1.5$)} \\
\cmidrule(lr){2-5}\cmidrule(lr){6-10}
$B$
& Posterior
& CT
& LR+WS
& DQ+exact
& NUTS
& Sandwich Gauss
& CT
& LR+WS
& DQ+exact \\
\midrule

\midrule
\multicolumn{10}{l}{\textbf{Calibration error}} \\
\midrule
$16$
& 0.418
& \textbf{\smallcellci{0.171}{0.141}{0.206}}
& \smallcellci{0.529}{0.484}{0.580}
& \textbf{\smallcellci{0.169}{0.129}{0.214}}
& 0.195
& 0.156
& \textbf{\smallcellci{0.201}{0.162}{0.241}}
& \textbf{\smallcellci{0.178}{0.148}{0.207}}
& \textbf{\smallcellci{0.172}{0.133}{0.210}}
\\

$\lfloor 0.1 \times N \rfloor$
& 0.418
& \textbf{\smallcellci{0.179}{0.154}{0.207}}
& \smallcellci{0.517}{0.469}{0.581}
& \textbf{\smallcellci{0.174}{0.139}{0.216}}
& 0.195
& 0.156
& \textbf{\smallcellci{0.196}{0.157}{0.228}}
& \textbf{\smallcellci{0.177}{0.142}{0.216}}
& \textbf{\smallcellci{0.190}{0.151}{0.231}}
\\

\midrule
\multicolumn{10}{l}{\textbf{Covariance error}} \\
\midrule
$16$
& 0.943
& \textbf{\smallcellci{0.672}{0.629}{0.715}}
& \smallcellci{0.995}{0.995}{0.996}
& \textbf{\smallcellci{0.664}{0.616}{0.710}}
& 0.795
& 0.000
& \textbf{\smallcellci{0.640}{0.595}{0.685}}
& \smallcellci{1.115}{1.023}{1.204}
& \textbf{\smallcellci{0.695}{0.651}{0.734}}
\\

$\lfloor 0.1 \times N \rfloor$
& 0.943
& \smallcellci{0.975}{0.905}{1.045}
& \smallcellci{0.996}{0.995}{0.996}
& \textbf{\smallcellci{0.672}{0.625}{0.726}}
& 0.799
& 0.000
& \smallcellci{1.006}{0.948}{1.068}
& \smallcellci{1.322}{1.211}{1.438}
& \textbf{\smallcellci{0.748}{0.713}{0.784}}
\\

\bottomrule
\end{tabular}
\endgroup
\end{table*}

\begin{table*}[htbp]
\centering
\caption{
Full results for linear regression experiments with Boston housing data.
See \cref{tab:full_robust_linear_regression} caption for further explanation. 
}
\label{tab:full_robust_linear_regression_cov_real}

\begingroup
\setlength{\tabcolsep}{3pt}
\renewcommand{\arraystretch}{1.08}
\newcommand{\smallcellci}[3]{%
  \begin{tabular}{@{}c@{}}
  #1\\[-1pt]
  {\scriptsize [#2, #3]}
  \end{tabular}%
}

\begin{tabular}{lcccc|ccccc}
\toprule
& \multicolumn{4}{c|}{Log loss}
& \multicolumn{5}{c}{$\beta$-loss ($\beta = 1.5$)} \\
\cmidrule(lr){2-5}\cmidrule(lr){6-10}
$B$
& Posterior
& CT
& LR+WS
& DQ+exact
& NUTS
& Sandwich Gauss
& CT
& LR+WS
& DQ+exact \\
\midrule

\multicolumn{10}{l}{\textbf{Covariance error}} \\
\midrule

$16$
& 0.358
& \textbf{\smallcellci{0.247}{0.194}{0.310}}
& \smallcellci{$9.23{\times}10^{8}$}{$3.62{\times}10^{4}$}{$6.17{\times}10^{9}$}
& \textbf{\smallcellci{0.337}{0.262}{0.405}}
& 2.528
& 0
& \textbf{\smallcellci{2.054}{1.723}{2.328}}
& $\infty$
& \textbf{\smallcellci{2.782}{0.965}{9.328}}
\\

$\lfloor 0.1 \times N \rfloor$
& 0.358
& \smallcellci{0.589}{0.443}{0.804}
& \smallcellci{$1.40{\times}10^{7}$}{$4.85{\times}10^{3}$}{$9.01{\times}10^{7}$}
& \textbf{\smallcellci{0.352}{0.274}{0.441}}
& 2.528
& 0
& \smallcellci{3.126}{2.313}{5.338}
& $\infty$
& \textbf{\smallcellci{1.398}{0.844}{2.132}}
\\

\bottomrule
\end{tabular}
\endgroup
\end{table*}

\begin{table}[htbp]
\centering
\caption{%
Results for Poisson regression experiments. 
See \cref{tab:full_robust_linear_regression} caption for further explanation. 
}
\label{tab:full poisson_glm_and_german}
\begin{tabular}{l l c c c}
\toprule
& & \multicolumn{2}{c}{\textbf{Simulated}} &  \textbf{Credit} \\
\cmidrule(lr){3-4} \cmidrule(lr){5-5}
$B$ & method %
& calib.\ err. & cov.\ err.  & cov.\ err.\\
\midrule
$16$
& CT %
& \textbf{\cellci{0.069}{0.062}{0.075}}
& \textbf{\cellci{0.207}{0.199}{0.215}} 
& \textbf{\cellci{0.132}{0.112}{0.168}} \\
& DQ+const %
& \cellci{0.646}{0.639}{1.344}
& \cellci{0.672}{0.664}{0.678} 
& \cellci{0.982}{0.975}{0.987}  \\
& DQ+exact %
& \textbf{\cellci{0.074}{0.068}{0.080}}
& \textbf{\cellci{0.208}{0.201}{0.217}} 
& \textbf{\cellci{0.157}{0.132}{0.193}}  \\
\midrule
$\lfloor0.1\!\times\!N\rfloor$
& CT %
& \cellci{0.089}{0.078}{0.100}
& \cellci{0.230}{0.218}{0.245} 
& \cellci{0.191}{0.155}{0.240}  \\
& DQ+const %
& \cellci{1.376}{1.370}{1.382}
& \cellci{0.991}{0.990}{0.992} 
& \cellci{0.997}{0.996}{0.999}  \\
& DQ+exact %
& \textbf{\cellci{0.075}{0.066}{0.083}}
& \textbf{\cellci{0.211}{0.203}{0.220}} 
& \textbf{\cellci{0.154}{0.138}{0.181}}  \\
\bottomrule
\end{tabular}

\end{table}

\section{Application: Stationary Covariance for a Fixed Learning Rate}
\label{sec:applications}
In this section, we discuss how our theory can be used to justify the stationary covariance structure at a fixed learning rate.

\subsection{Linear Regression}
\label{section:Linear Regression}

As an illustration of the usefulness of, and new insights provided by \cref{theorem: general case covariance matrix of noise}, we first focus on the special case of linear regression
without regularization (i.e., where $\reg \equiv 0$). 
Since in the case of linear regression the proxy algorithm is identical to the exact algorithm, we will give all our results in terms of the original process $(\theta_{t})_{t \ge 0}$. 
In linear regression we can specialize \cref{eq: covariance of stationary noise} to obtain
\begin{align}
	\begin{aligned}
		\overline{C}_{\theta} 
		&= \textstyle \frac{1}{B} ( N^{-1} \sum_{n=1}^{N} x_{n}x_{n}^{\top}\Sigma_{\theta} x_{n}x_{n}^{\top} -  \hess \Sigma_{\theta} \hess ) + \frac{1}{BN}\sum_{n=1}^N r_{n}^{2}x_{n}x_{n}^{\top},
	\end{aligned}
	\label{eq: SGD noise for linear regression}
\end{align}
where $r_{n} = y_{n} - \MLE^{\top}x_{n}$ is the residual and $\hess = N^{-1} \sum_{n=1}^{N} x_{n}x_{n}^{\top}$. 

\paragraph{Relation to large-sample approximation of \citet{ziyin2021strength}.}
We can recover the approximation given in \cref{eq: sgd noise in well-specified linear model} by making the same simplifying assumptions and approximations (see \cref{section: discussion for well-specified linear model} for details). 
First, if $x_{n} \dist \Norm(0, A)$ and $N$ is large, then, using the properties of the Gaussian,
the first term on the righthand side of \cref{eq: SGD noise for linear regression} is well-approximated by $2 A\Sigma_{\theta}A + \Tr[A\Sigma_{\psi}]A$ and $\hess \approx A$. 
Hence, the first two terms together are approximately equal to $A\Sigma_{\theta}A + \Tr[A\Sigma_{\psi}]A$.
However, in many scenarios the covariates may not be normally distribution (e.g., they may be binary or have heavier tails) and $N$ may not be large relative to 
the parameter/covariate dimension $D$. 
To simplify the final term in \cref{eq: SGD noise for linear regression}, we must also assume the model is well-specified, 
which implies that $x_{n}$ and $r_{n}$ are independent and $r_{n} \dist \Norm(0, \sigma^{2})$.
Hence, when $N$ is large, the final term is approximately $\sigma^{2} A$.
However, when the model is misspecified, the term $N^{-1} \sum_{i=1}^N r_{i}^{2}x_{i}x_{i}^{\top}$ can capture additional variability due to,
for example, a poor model fit, heteroskedastic errors, and/or heavy-tailed errors.
We illustrate this latter point next. 
\paragraph{Numerical illustrations.}

To validate our theory, we compare the predicted stationary covariance structure obtained from combining \cref{thm:SGLD-stationary-covariance,eq: SGD noise for linear regression}
with predictions based on (1) the continuous-time theory and (2) the discrete-time theory that assumes large $N$ and a well-specified model. %
We focus on the effect of varying the (scalar) learning rate. 

\paragraph{\emph{Simulated misspecified data.}}
First, we consider a misspecified simulated dataset with heteroskedastic error generated according to the model 
\begin{align}
	\label{eq: linear regression misspecified model}
	y_{n} \dist \Norm(x_{n}^{\top}\theta_{\star}, 1+\|x_{i}\|_{2}^{2}),
\end{align}
where $\theta_{\star} \dist \Norm(0, I_{D})$ is fixed and $x_{n} \distiid \Norm(0, I_{D})$.
We take $D = 20$ and $N = 2{,}000$. 
\Cref{Fig: linear regression fixed_lr}(left) illustrates the predicted covariance for the parameters $(\theta_{1}, \theta_{2})^{\top}$.
The results show that our theory delivers the most accurate covariance predictions across all learning rate levels. In contrast, the continuous-time theory underestimates the parameter variances, while the discrete-time approximation
that assume $N$ is large and the model is correct overestimates them.

\paragraph{\emph{Boston housing data.}}
Next, we reconsider the real-world Boston housing data
Similar to the results on simulated data, \Cref{Fig: linear regression fixed_lr}(right) demonstrates that our theory can accurately predict the covariance. 
The alternative approximations consistently underestimate it.

\begin{figure}[t]
	\centering
	\begin{minipage}[b]{0.48\textwidth}
		\centering
		\includegraphics[width=\textwidth]{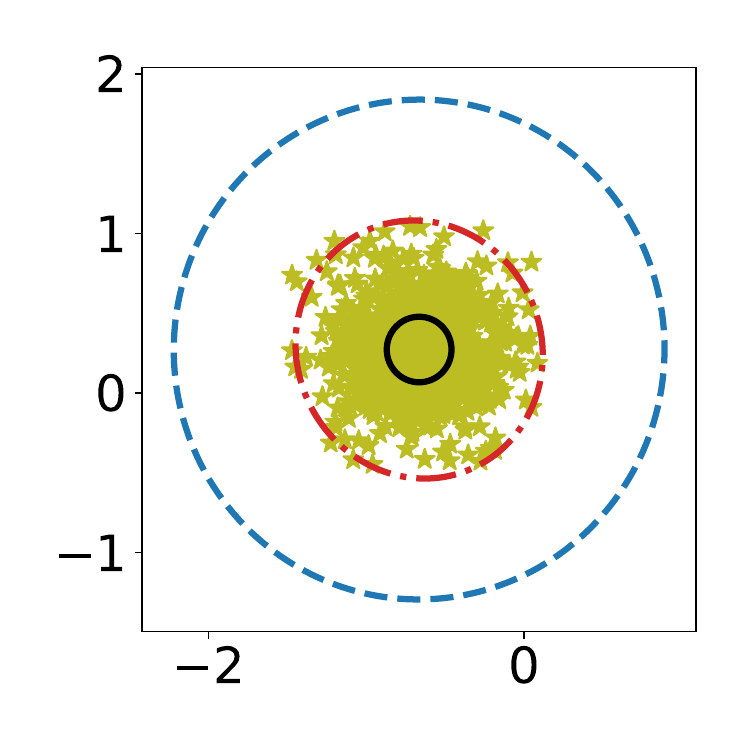} 
	\end{minipage}
	\hspace{0.0001\textwidth}  %
	\begin{minipage}[b]{0.48\textwidth}
		\centering
		\includegraphics[width=\textwidth]{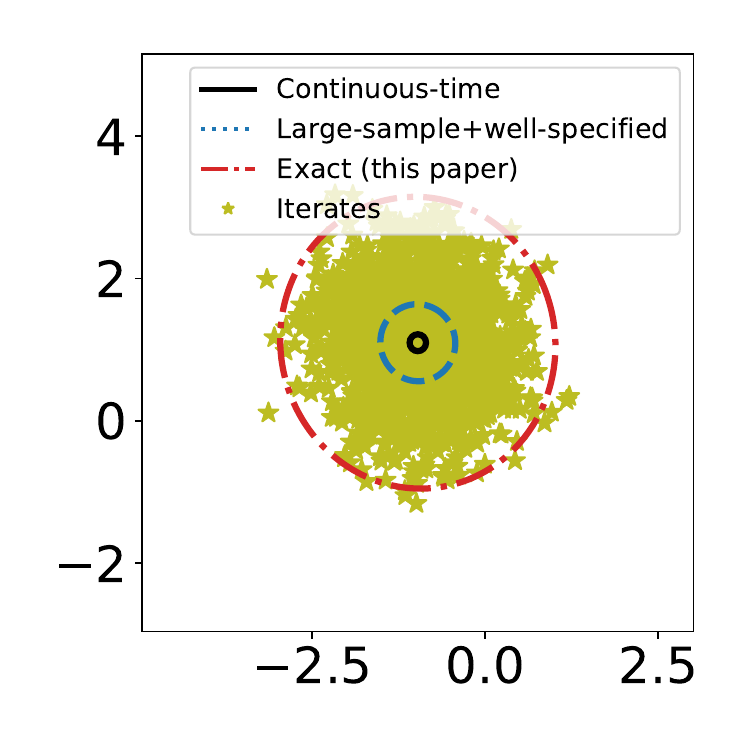} 
	\end{minipage}

	\caption{Comparison of estimated stationary covariance structure for linear regression at $3\sigma$ confidence region on \textbf{(left)} simulated misspecified data with heteroskedastic noise 
		and \textbf{(right)} the classic Boston housing dataset with $\lambda = 0.1$ and $B = 32$.
		Our theory provides more accurate stationary covariance predictions in both cases.
	}\vspace{-1em}
	\label{Fig: linear regression fixed_lr}
\end{figure}

\subsection{Poisson Regression}
\label{sec: poission regression with fixed lr}
Similar to the linear regression experiments, we compare the stationary covariance predicted by our theory with those derived from continuous-time theory and the discrete-time quadratic loss proxy with constant noise (that is, using $\overline{C}_{\psi} \approx \frac{1}{B}\hess$ in \cref{eq: SGLD stationary covariance}).
However, unlike in linear regression, the proxy algorithm is no longer exact, and so we must rely on our error analysis to justify its use.

\begin{figure}[t]
	\centering
	\begin{minipage}[b]{0.48\textwidth}
		\centering
		\includegraphics[width=\textwidth]{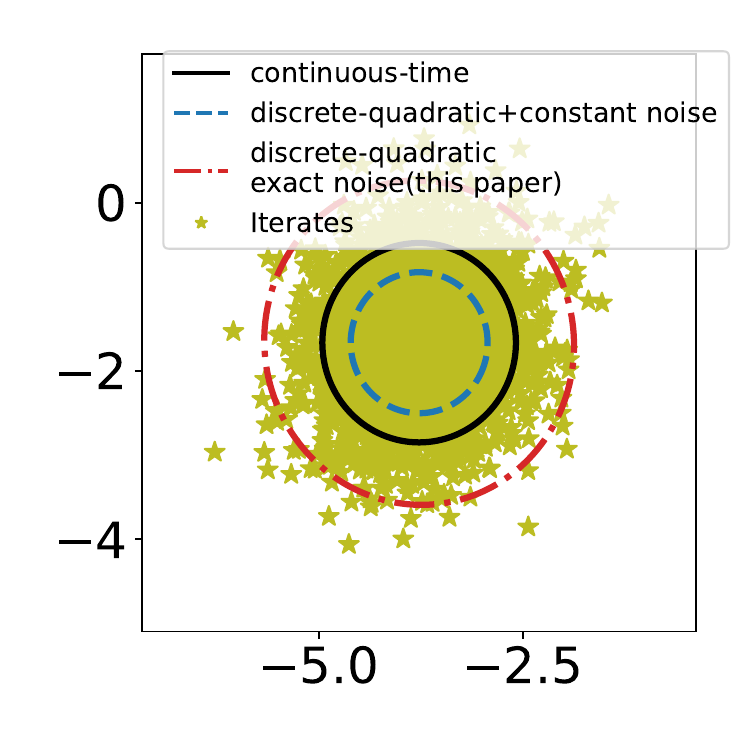} 
	\end{minipage}
	\hspace{0.0001\textwidth}  %
	\begin{minipage}[b]{0.48\textwidth}
		\centering
		\includegraphics[width=\textwidth]{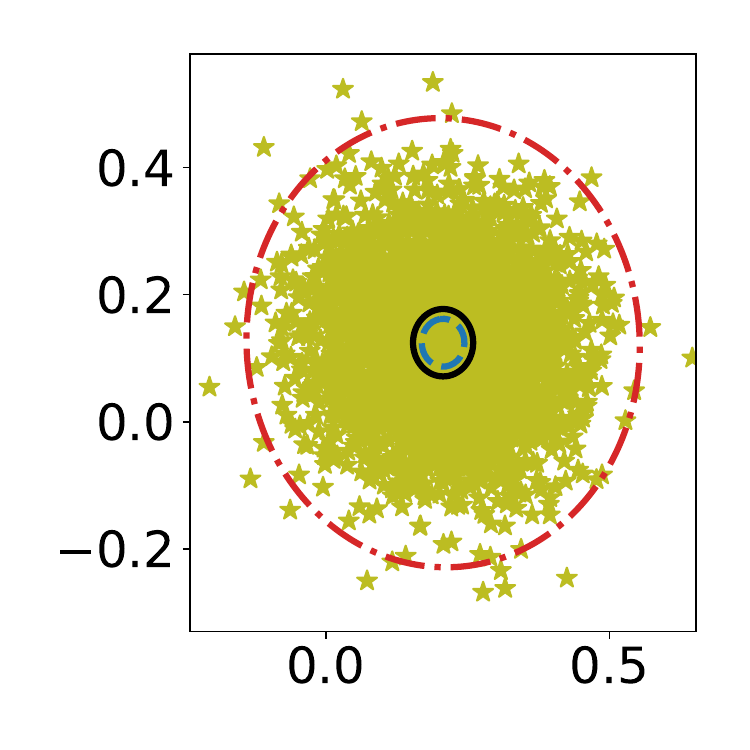} 
	\end{minipage}
	
	\caption{Comparison of estimated stationary covariance structure for Poisson regression at $3\sigma$ confidence region with \textbf{(left)} simulated well-specified data and \textbf{(right)} the German credit data by setting batch size $\lambda = 0.1$, and $B = 32$.
	} %
\label{Fig: Poisson regression fixed_lr}
\end{figure}

\begin{table}[htbp] %
	\centering %
	\begin{tabular}{@{}lccc@{}} %
		\toprule
		Learning Rate $\lambda$ & continuous-time & discrete-quadratic+constant noise & discrete-quadratic+exact noise  \\
		\midrule
		\multicolumn{4}{c}{$\left|\Sigma_\psi-\Sigma_\theta\right|_F$ \textbf{for Poisson regression on well-specified simulated dataset}} \\ %
		\midrule
		0.1   & 0.237 & 0.302 & \textbf{0.030} \\
		0.3   & 0.479 & 0.631 & \textbf{0.096}  \\
		0.5  & 0.545 & 0.651 & \textbf{0.202} \\
		\midrule
		\multicolumn{4}{c}{$\left|\Sigma_\psi-\Sigma_\theta\right|_F$ \textbf{for Poisson regression on misspecified German credit dataset}} \\ %
		\midrule
		0.1   & 0.0367 & 0.037& \textbf{0.004}  \\
		0.3   & 0.098 & 0.099 & \textbf{0.025}  \\
		0.5  & 0.124 & 0.126 & \textbf{	0.041} \\
		\bottomrule
	\end{tabular}
	\caption{Comparison of difference between estimated stationary covariance structure $\Sigma_{\psi}$ and the ground truth using Frobenius norm for Poisson regression.}
	\label{Table: Frobenius norm comparison for Poisson regression}
\end{table}

\Cref{Fig: Poisson regression fixed_lr}(right) shows that our theory provides an accurate estimate of the stationary covariance while alternatives provide severe underestimates. 

For both simulated and real-world dataset, our approximation demonstrates an improvement in accuracy with errors that are 3–10 times smaller than the baseline approaches as shown in \cref{Table: Frobenius norm comparison for Poisson regression}. 

\subsection{Optimal weight decay and batch size}
A direct application of accurate stationary covariance prediction is to estimate the test loss. 
To simplify our analysis, we will focus on linear regression. 
The test loss depends on the stationary covariance by $\loss_{\text{test}} =  \frac{1}{M} \sum_{m=1}^{M} (y_{m}-\MLE^{\top}x_{m})^{2} + \frac{1}{M} \sum_{m=1}^{M}x_{m}^{\top} \Sigma_{\theta} x_{m}$ (see \cref{eq: test loss for linear regression}), where $\{(x_{m}, y_{m})\}_{m=1}^{M}$ is the test dataset.
As illustrated in \cref{Fig: Test loss}, our theory offers the most accurate test loss estimation across different decay weights and batch sizes.
\begin{figure}[t]
	\centering
	\begin{minipage}[b]{0.48\textwidth}
		\centering
		\includegraphics[width=\textwidth]{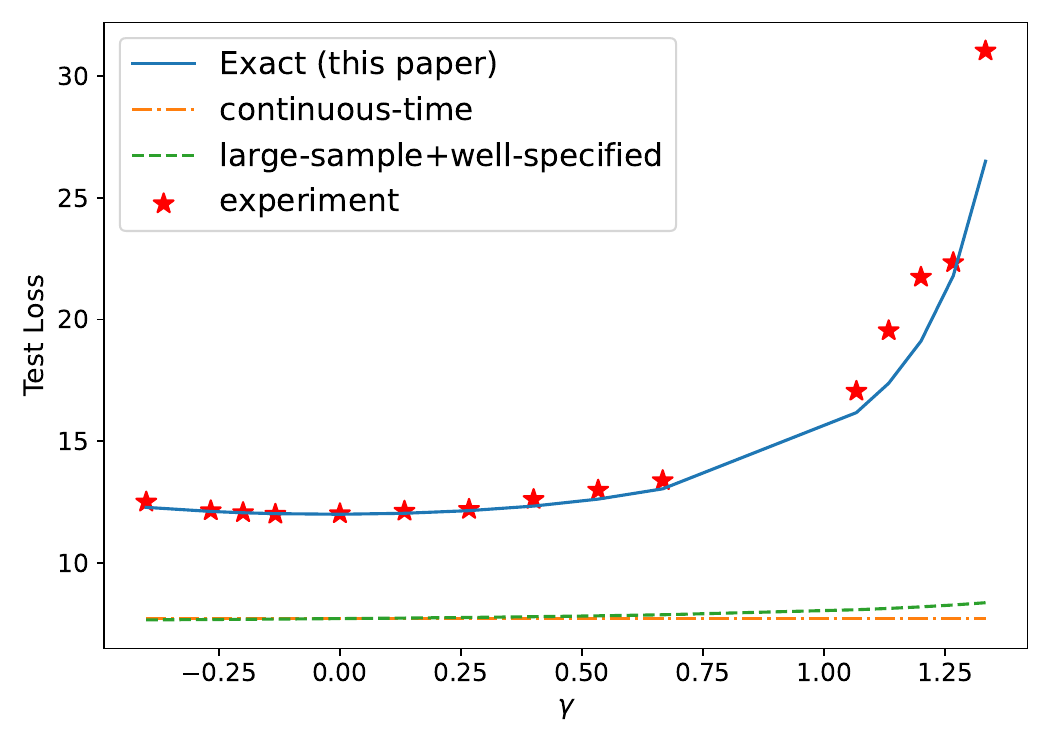} 
	\end{minipage}
	\hspace{0.0001\textwidth}  %
	\begin{minipage}[b]{0.48\textwidth}
		\centering
		\includegraphics[width=\textwidth]{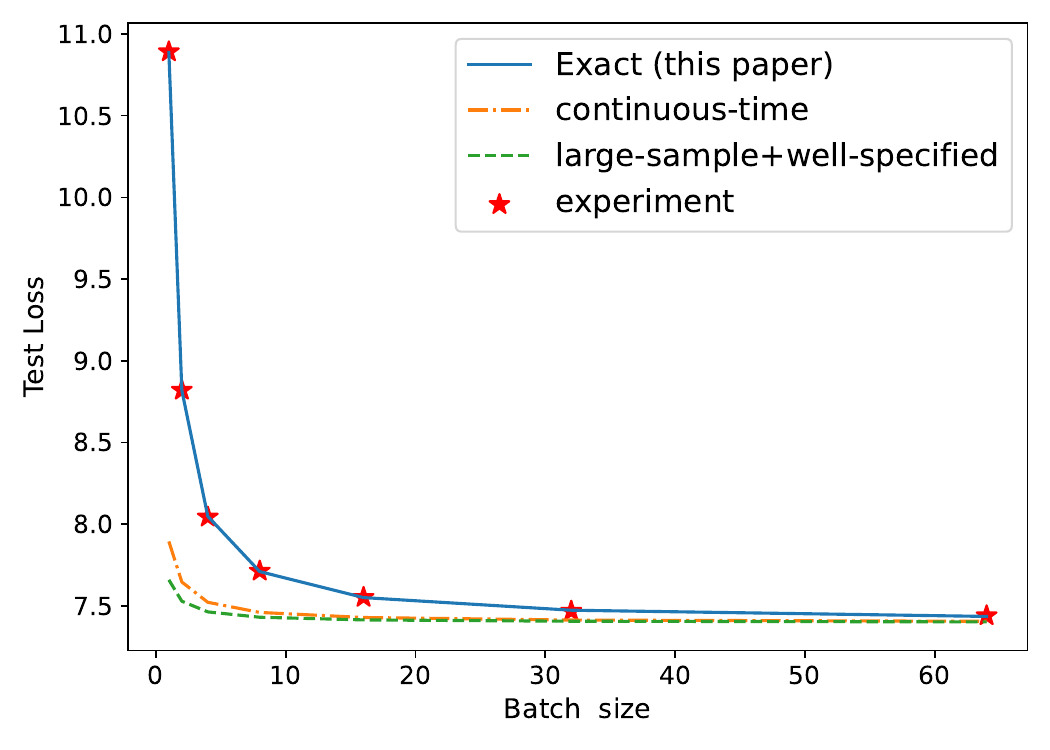} 
	\end{minipage}
	\caption{Comparison of estimated test loss ridge regression ($\Gamma = \gamma I_{D}$) on simulated misspecified data with heteroskedastic noise considered in \cref{eq: linear regression misspecified model}. 
		$\textbf{(left)}$ We set $\lambda = 0.1$, $B = 32$. $\textbf{(right)}$ We set $\lambda = 0.1$, $\gamma = 0$. }
	\label{Fig: Test loss}
\end{figure}

\subsubsection{More discussion about \cref{section:Linear Regression}}
\label{section: discussion for well-specified linear model}
Recall that in linear regression we can specialize \cref{eq: covariance of stationary noise} to obtain

\begin{align}
		\overline{C}_{\theta} 
		&= \frac{1}{B} ( N^{-1} \sum_{n=1}^{N} x_{n}x_{n}^{\top}\Sigma_{\theta} x_{n}x_{n}^{\top} -  \hess \Sigma_{\theta} \hess ) + \frac{1}{BN}\sum_{n=1}^N r_{n}^{2}x_{n}x_{n}^{\top},
\end{align}

where $r_{n} = y_{n} - \MLE^{\top}x_{n}$ is the residual and $\hess = N^{-1} \sum_{n=1}^{N} x_{n}x_{n}^{\top}$. 

Now suppose that the data $\{(x_{n}, y_{n})\}_{n=1}^{N}$ are generated from a linear model, there exists a $\theta_{\star} \in \reals^{D}$ such that $y_{n} = x_{n}^{\top} \theta_{\star} + \epsilon_{n}$, where $\epsilon_{n} \distiid \Norm(0, \sigma^{2})$, for $i=1, 2, ..., N$.
Now we will focus on the MSE loss defined as 
\begin{equation}
	\loss(\theta)  = \frac{1}{N} \sum_{n=1}^{N} \ell(x_{n}, y_{n}, \theta)= \frac{1}{2N \sigma^{2}} \sum_{n=1}^{N}(y_{n}-x_{n}^{\top}\theta)^{2}.
	\label{eq: linear regression MSE loss}
\end{equation}
Note that $\MLE \sim \Norm\left(\theta_{\star}, \sigma^{2}\left(\mathbf{X}^{\top}\mathbf{X}\right)^{-1}\right)$, where $\mathbf{X} \in \reals^{N \times D}$, then we have
\begin{equation}
	\begin{aligned}
		\frac{1}{N} \mathbb{E}\left[\sum_{i=1}^{N}r_{i}^{2}x_{i}x_{i}^{\top} \right]
		&= \frac{1}{N} \sum_{i=1}^{N} \mathbb{E}\left[\left( y_{i} -  x_{i}^{\top}\MLE \right)^{2}\right]x_{i}x_{i}^{\top}\\
		& = \frac{1}{N} \sum_{i=1}^{N}\left(  \mathbb{E}\left[\left( y_{i} - \mathbb{E}\left[ y_{i}\right]\right)^{2} \right] +  \mathbb{E}\left[\left(  x_{i}^{\top} \MLE  - \mathbb{E}\left[ y_{i}\right]\right)^{2} \right]\right)x_{i}x_{i}^{\top}\\
		& = \frac{1}{N} \sum_{i=1}^{N} \sigma^{2}\left(I + \left(\mathbf{X}^{\top}\mathbf{X}\right)^{-1}\right)x_{i}x_{i}^{\top}\\
		& = \sigma^{2}\left( A + \frac{1}{N} I \right).
	\end{aligned}
\end{equation}
Then we have
\begin{align}
	\lim_{N \ to \infty}\mathbb{E}\left[\sum_{i=1}^{N}r_{i}^{2}x_{i}x_{i}^{\top} \right] = \sigma^{2} A. 
\end{align}
Under the assumptions of $x_{n} \dist \Norm(0, A)$ and $N$ being large, we have
\begin{align}
	\lim_{N \to \infty }\frac{1}{N} \sum_{n=1}^{N} x_{n}x_{n}^{\top}\Sigma_{\theta} x_{n}x_{n}^{\top} -  \hess \Sigma_{\theta} \hess = A\Sigma_{\theta}A + \Tr[A\Sigma_{\psi}]A.
\end{align}
Then, we will get exactly the same result of Lemma 1 in \citet{ziyin2021strength}. 

\subsubsection{Test Loss of Linear Regression}
\label{eq: test loss for linear regression}
The test loss in linear regression can be decomposed as follows:
\begin{align}
\loss_{\text{test}}
&=
\mathbb{E}\!\left[
\frac{1}{M}\sum_{m=1}^{M}
\bigl(y_m - \theta_t^{\top} x_m\bigr)^2
\right] \\
&=
\mathbb{E}\!\left[
\frac{1}{M}\sum_{m=1}^{M}
\bigl(
y_m - \hat{\theta}^{\top} x_m
+
(\hat{\theta} - \theta_t)^{\top} x_m
\bigr)^2
\right] \\
&=
\frac{1}{M}\sum_{m=1}^{M}
\bigl(y_m - \hat{\theta}^{\top} x_m\bigr)^2
+
\frac{1}{M}\sum_{m=1}^{M}
x_m^{\top}
\mathbb{E}\!\left[
(\theta_t - \hat{\theta})(\theta_t - \hat{\theta})^{\top}
\right]
x_m \\
&=
\frac{1}{M}\sum_{m=1}^{M}
\bigl(y_m - \hat{\theta}^{\top} x_m\bigr)^2
+
\frac{1}{M}\sum_{m=1}^{M}
x_m^{\top} \Sigma_{\theta} x_m,
\end{align}
where $\Sigma_{\theta} = \mathbb{E}\!\left[(\theta_t - \hat{\theta})(\theta_t - \hat{\theta})^{\top}\right]$.

\section{Proofs from Main Text}
\label{section: Proofs}

\begin{lemma}
	Assume the parameters $\psi$ are updated based on discrete-time proxy algorithm \cref{eq: proxy algorithm framework for general loss}, $\reg(\psi) = \frac{1}{2}\psi^{\top} \Gamma \psi$, and the stationary distribution of $\psi$ exists, then the stationary mean $\mu_{\psi}$ satisfies $\mu_{\psi} = \MLE$.
	If the parameters $\psi$ are updated based on discrete-time proxy algorithm \cref{eq: proxy algorithm framework for general loss}, and the stationary distribution of $\psi$ exists, then the stationary mean $\mu_{\psi}$ satisfies $\mu_{\psi} = \MLE$.
\end{lemma}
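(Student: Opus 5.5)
The plan is to take expectations of the proxy update \cref{eq: proxy discrete-time update for general loss} under stationarity. Write the update as
\[
\psi_t - \MLE = \bigl(I - \Lambda \nabla G_t(\MLE)\bigr)(\psi_{t-1}-\MLE) - \Lambda G_t(\MLE) + \sqrt{2\beta^{-1}\Lambda}\,\xi_{t-1}.
\]
The minibatch $S_t$ and the Gaussian noise $\xi_{t-1}$ are drawn independently of $\psi_{t-1}$. Hence conditioning on $\psi_{t-1}$ and averaging over $S_t$ gives $\E[\nabla G_t(\MLE)] = \hess$. Here $\hess$ is the full-data Hessian $N^{-1}\sum_n \mathcal{J}_n + N^{-1}\nabla^2\reg(\MLE)$. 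This identity holds whether sampling is with or without replacement, because each index is marginally uniform on $\{1,\dots,N\}$. By the same argument, $\E[G_t(\MLE)] = \nabla\loss(\MLE)$, and $\E[\xi_{t-1}]=0$.

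The key observation is that $\MLE$ is the minimizer of $\loss$ (the MAP/regularized estimator), so $\nabla\loss(\MLE)=0$. For the quadratic prior $\reg(\theta)=\frac12\theta^\top\Gamma\theta$, the regularizer gradient $N^{-1}\Gamma\MLE$ is included in $G_t$, and the first-order condition $N^{-1}\sum_n\nabla\ell_n(\MLE)+N^{-1}\Gamma\MLE=0$ makes the drift term mean zero. For general $\reg$, the same holds with $\nabla\reg(\MLE)$ in place of $\Gamma\MLE$. Taking total expectations, and using independence to factor $\E[\nabla G_t(\MLE)(\psi_{t-1}-\MLE)] = \hess\,\E[\psi_{t-1}-\MLE]$, we obtain
\[
\E[\psi_t-\MLE] = (I-\Lambda\hess)\,\E[\psi_{t-1}-\MLE].
\]

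At stationarity $\E[\psi_t]=\E[\psi_{t-1}]=\mu_\psi$, so $\Lambda\hess(\mu_\psi-\MLE)=0$. Since $\Lambda$ is positive definite and $\hess$ is invertible, we conclude $\mu_\psi=\MLE$. Invertibility of $\hess$ is the one point that needs care. It follows from strong convexity, or in the quadratic-prior case from $\Gamma\succ0$ together with convexity of the $\ell_n$. We also implicitly need the stationary distribution to have a finite first moment, and I would state this as part of ``the stationary distribution exists.''

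The main (mild) obstacle is being precise about which point $\MLE$ denotes. The argument requires $\MLE$ to be the stationary point of the \emph{regularized} objective $\loss$, not the unregularized MLE; otherwise the mean picks up an $O(\Gamma\MLE/N)$ shift. I would make that convention explicit at the start of the proof.
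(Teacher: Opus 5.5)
Your proposal is correct and follows essentially the same route as the paper: take expectations of the proxy update at stationarity, use $\E[G_t(\MLE)] = \nabla\loss(\MLE) = 0$ together with $\E[\nabla G_t(\MLE)] = \hess$, and cancel $\Lambda\hess$. You also state explicitly two things the paper's proof uses without saying: that $\hess$ is invertible and that the stationary distribution has a finite first moment.
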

\begin{proof}
	Now we assume that $\psi_{0}$ are sampled from the stationary distribution.
	Then by taking expectation we have
	\begin{align}
		\begin{aligned}
			\mu_{\psi} = \mathbb{E} \left[ \psi_{t} \right]& = \mathbb{E} \left[\psi_{t-1} - \Lambda \left\lbrace 	G_{t}(\MLE) +   \grad G_{t}(\MLE)\left( \psi_{t-1}-\MLE\right)\right\rbrace+\sqrt{2\beta^{-1} \Lambda}\,\xi_{t-1} \right]\\
			& = \mu_{\psi} - \Lambda \mathbb{E}(	G_{t}(\MLE))- \Lambda\mathbb{E}\left(\grad G_t(\hat{\theta})(\psi_{t-1}-\hat{\theta})\right)
		\end{aligned}
	\end{align}
        Then we have
        \begin{align}
             \Lambda \mathbb{E}(	G_{t}(\MLE))+\Lambda (\mathcal{J}+\frac{1}{N}\Gamma)(\mu_\psi-\hat{\theta})= 0
        \end{align}
        Since $\MLE$ satisfies $\nabla \loss(\MLE) = 0$, we have 
        	\begin{align}
		\mathbb{E}(G_t(\hat{\theta)})=\frac{\Gamma}{N} \MLE + \frac{1}{N}\sum_{n=1}^{N} \nabla \ell\left(x_{n}, y_{n}, \MLE\right) = 0.
		\label{eq: property of MLE}
	\end{align}
        Therefore, combining the previous two displayed equations and if , we conclude that $\mu_{\psi} = \MLE$.

\end{proof}

\subsection{Proof of \cref{thm:SGLD-stationary-covariance}}
\begin{proof}
	The proxy algorithm leads to the discrete-time update
        \begin{align}
        \psi_{t} = \psi_{t-1}
        - \Lambda
          \Bigl[
            G_{t}(\MLE)
            + \nabla G_{t}(\MLE)(\psi_{t-1} - \MLE)\,
          \Bigr]
        + \sqrt{2\beta^{-1}\Lambda}\,\xi_{t-1},
        \end{align}
	where $\xi_{t-1} \sim \Norm(0, I)$.	
	Let
        \begin{align}
        \eta_{t-1}
        = \Bigl[G_{t}(\MLE)
          + \nabla G_{t}(\MLE)(\psi_{t-1} - \MLE)\,\Bigr]
        - \E\Bigl[G_{t}(\MLE)
          + \nabla G_{t}(\MLE)(\psi_{t-1} - \MLE)\,\Bigr],
        \end{align}
	it follows from \cref{eq: property of MLE} that $\text{Cov}(\eta_{t-1}) = \overline{C}_{\psi}$. 
	Noting that $\mathcal{J} = \frac{1}{N} \sum_{n=1}^{N}\mathcal{J}_{n}$, we have
        \begin{align}
        \hess = \E\bigl[\nabla G_{t}(\MLE)\bigr]=\mathcal{J} + \frac{1}{N} \Gamma.
        \end{align}
        
	\cref{eq: proxy discrete-time update for general loss} can also be rewritten as 
	\begin{align}
		\begin{aligned}
			\psi_{t}-\MLE
			& =  \psi_{t-1}-\MLE - \Lambda \Bigl[G_{t}(\MLE)+ \nabla G_{t}(\MLE)(\psi_{t-1} - \MLE) \Bigr]+ \sqrt{2\beta^{-1} \Lambda}\,\xi_{t-1}\\
			& =  \psi_{t-1}-\MLE - \Lambda \E\Bigl[G_{t}(\MLE)
          + \nabla G_{t}(\MLE)(\psi_{t-1} - \MLE)\,\Bigr] -\Lambda\eta_{t-1} + \sqrt{2\beta^{-1} \Lambda}\,\xi_{t-1}\\
			& = \psi_{t-1} -\MLE- \Lambda \left(\frac{1}{N} \sum_{n=1}^{N}   \mathcal{J}_{n}-\frac{1}{N}\Gamma \right)(\psi_{t-1}-\MLE) - \Lambda \eta_{t-1}  + \sqrt{2\beta^{-1} \Lambda}\,\xi_{t-1}\\
			& = \left( I - \Lambda \mathcal{J}-\frac{1}{N} \Lambda \Gamma\right) \left( \psi_{t-1} -\MLE\right) - \Lambda \eta_{t-1} + \sqrt{2\beta^{-1} \Lambda}\,\xi_{t-1}\\
			& = \left( I - \Lambda \hess \right) \left( \psi_{t-1} -\MLE\right) - \Lambda \eta_{t-1} + \sqrt{2\beta^{-1} \Lambda}\,\xi_{t-1}.
		\end{aligned}
	\end{align}
	Note that
	\begin{align}
		\begin{aligned}
			\Sigma_{\psi} &= \mathbb{E} \left[ \left( \psi_{t}-\MLE\right) \left( \psi_{t}-\MLE\right) ^{\top}\right]\\
			& = \left( I - \Lambda \hess \right)  \Sigma_{\psi} \left( I - \Lambda \hess \right)^{\top}+ \Lambda \overline{C}_{\psi} \Lambda +  \frac{2\Lambda}{\beta}.
		\end{aligned}
	\end{align}
	Then, after some algebra, we have
	\begin{align}
		\Lambda \hess  \Sigma_{\psi} +  \Sigma_{\psi}  \hess \Lambda=\Lambda \left(\overline{C}_{\psi}+ \hess  \Sigma_{\psi} \hess \right)\Lambda + \frac{2\Lambda}{\beta}.
		\label{eq:relationship for general sgld}
	\end{align}
\end{proof}

\subsection{Proof of \cref{theorem: general case covariance matrix of noise}}

The covariance of the gradient noise for parameter $\psi$ is given by %
\[
C(\psi)= \begin{cases}
	\frac{1}{B}\left[\frac{1}{N} \sum_{n=1}^N \nabla \tilde{\ell}_{n}(\psi) \nabla \tilde{\ell}_{n}(\psi)^{\top}-\nabla \tilde\loss(\psi) \nabla \tilde\loss(\psi)^{\top}\right] & \text {if with replacement} \\ 
	\frac{N-B}{B(N-1)}\left[\frac{1}{N} \sum_{n=1}^N \nabla \tilde{\ell}_{n}(\psi) \nabla \tilde{\ell}_{n}(\psi)^{\top}-\nabla \tilde\loss(\psi) \nabla \tilde\loss(\psi)^{\top}\right] & \text {if without replacement.}\end{cases}
\label{eq: SGD noise covariance for both sampling with and without replacement}
\]
We focus on sampling with replacement since our results can be easily extended to the sampling without replacement case by substituting each 
$C(\psi)$ term with $\frac{N-B}{N-1}C(\psi)$.

We have 
\begin{align}
    \begin{aligned}
        C(\psi_{t-1}) & =\frac{1}{N B} \sum_{n=1}^{N} \nabla \ell_{n}\left(\psi_{t-1}\right) \nabla \ell_{n}\left(\psi_{t-1}\right)^{\top}-\frac{1}{B} \nabla \mathcal{L}\left(\psi_{t-1}\right) \nabla \mathcal{L}\left(\psi_{t-1}\right)^{\top} \\
        & =  \underbrace{\frac{1}{B}\frac{1}{N} 
        \sum_{n=1}^{N} \left[\nabla\ell\left( x_n,y_n, \MLE\right)+ \mathcal{J}_n \left( \psi_{t-1} - \MLE\right)\right] \left[\nabla\ell\left( x_n,y_n, \MLE\right)+ \mathcal{J}_n \left( \psi_{t-1} - \MLE\right)\right]^{\top}}_{C_{3}(\psi_{t-1})} \\
        &\quad - \underbrace{\frac{1}{B} \left[ \frac{1}{N}\sum_{n=1}^{N}\nabla\ell\left( x_n,y_n, \MLE\right)+ \mathcal{J}_n \left( \psi_{t-1} - \MLE\right)\right]\left[ \frac{1}{N}\sum_{n=1}^{N}\nabla\ell\left( x_n,y_n, \MLE\right)+ \mathcal{J}_n \left( \psi_{t-1} - \MLE\right)\right]^{\top}}_{C_{4}(\psi_{t-1})}.
    \end{aligned}
\end{align}
Note that
\begin{align}
    \begin{aligned}
        \mathbb{E}\left[C_{3}(\psi_{t-1}) \right] &= \frac{1}{BN} \sum_{n=1}^{N}  \left[\nabla\ell\left(x_{n}, y_{n}, \MLE\right) \right] \left[\nabla\ell\left(x_{n}, y_{n}, \MLE\right) \right]^{\top} + \frac{1}{BN} \sum_{n=1}^{N} \mathcal{J}_{n} \mathbb{E}\left[\left(\psi_{t-1}-\MLE\right)\left(\psi_{t-1}-\MLE\right)^{\top}\right] \mathcal{J}_{n}^{\top}\\
        & = \frac{1}{BN} \sum_{n=1}^{N}  \left[\nabla\ell\left(x_{n}, y_{n}, \MLE\right) \right] \left[\nabla\ell\left(x_{n}, y_{n}, \MLE\right) \right]^{\top} + \frac{1}{BN} \sum_{n=1}^{N} \mathcal{J}_{n} \Sigma_{\psi} \mathcal{J}_{n}\\
        & = \frac{1}{B} \mathcal{I} + \frac{1}{BN} \sum_{n=1}^{N} \mathcal{J}_{n} \Sigma_{\psi} \mathcal{J}_{n}.
    \end{aligned}
\end{align}
Also note that, using \cref{eq: property of MLE},
\begin{align}
    \begin{aligned}
        \mathbb{E} \left[ C_{4}(\psi_{t-1}) \right] 
        &= \frac{1}{B} \left\lbrace  \left(\frac{1}{N}\sum_{n=1}^{N}  \nabla\ell\left(x_{n}, y_{n}, \MLE\right)  \right) \left(\frac{1}{N}\sum_{n=1}^{N}  \nabla\ell\left(x_{n}, y_{n}, \MLE\right)  \right)^{\top} + \mathcal{J}\mathbb{E}\left[\left(\psi_{t-1}-\MLE\right)\left(\psi_{t-1}-\MLE\right)^{\top}\right]\mathcal{J}\right\rbrace  \\
        & =  \frac{1}{B} \left(\frac{1}{N^{2}} \Gamma \MLE \MLE^{\top} \Gamma^{\top} + \mathcal{J} \Sigma_{\psi}\mathcal{J}\right).
    \end{aligned}
\end{align}
Therefore, we have
\begin{align}
    \begin{aligned}
        \overline{C}_{\psi} &= \mathbb{E}\left[C_{3}(\psi_{t-1}) \right] -\mathbb{E} \left[ C_{4}(\psi_{t-1}) \right]\\
        & = \frac{1}{B} \left( \mathcal{I} - \frac{1}{N^{2}} \Gamma \MLE \MLE^{\top} \Gamma^{\top} + \frac{1}{N} \sum_{n=1}^{N} \mathcal{J}_{n} \Sigma_{\psi} \mathcal{J}_{n} -  \mathcal{J} \Sigma_{\psi}\mathcal{J}\right).
    \end{aligned}
    \label{eq: stationary noise for general loss}
\end{align}

\subsection{Proof of \cref{prop: mixing time}}
\label{sec: Proof of prop: mixing time}
We analyze the mixing behavior under the proxy dynamics \cref{eq: proxy algorithm framework for general loss}.
For each coordinate projection $f_i(\theta) \defas \theta_i$, the theoretical lag-$k$ autocorrelation is defined as
\begin{align}
\rho_{k,i}
&\defas
\mathrm{Corr}_{\pi_\theta}\!\bigl(\theta_{0,i}, \theta_{k,i}\bigr) \nonumber\\
&=
\frac{\mathrm{Cov}_{\pi_\theta}(\theta_{0,i}, \theta_{k,i})}
{\mathrm{Var}_{\pi_\theta}(\theta_{0,i})}
=
\frac{\E_{\pi_\theta}\!\left[
(\theta_{0,i}-\MLE_i)(\theta_{k,i}-\MLE_i)
\right]}
{(\Sigma_\psi)_{ii}} \nonumber\\
&=
\frac{\bigl(
\E_{\pi_\theta}\!\left[
(\theta_0-\MLE)(\theta_k-\MLE)^\top
\right]
\bigr)_{ii}}
{(\Sigma_\psi)_{ii}} .
\end{align}

Under the proxy update \cref{eq: proxy discrete-time update for general loss}, the iterates satisfy
\begin{align}
\psi_t - \MLE
=
( I - \Lambda \hess ) (\psi_{t-1}-\MLE)
- \Lambda \eta_{t-1}
+ \sqrt{2\beta^{-1}\Lambda}\,\xi_{t-1},
\end{align}
where $\xi_{t-1} \sim \Norm(0,I)$ and
\begin{align}
\eta_{t-1}
&=
\Bigl[
G_t(\MLE)
+ \nabla G_t(\MLE)(\psi_{t-1}-\MLE)
\Bigr]
-
\E\Bigl[
G_t(\MLE)
+ \nabla G_t(\MLE)(\psi_{t-1}-\MLE)
\Bigr].
\end{align}
Iterating forward,
\begin{align}
\psi_{t+k}-\MLE
=
( I - \Lambda \hess ) (\psi_{t+k-1}-\MLE)
- \Lambda \eta_{t+k-1}
+ \sqrt{2\beta^{-1}\Lambda}\,\xi_{t+k-1}.
\end{align}

Define the lag-$k$ cross-covariance matrix
\begin{align}
\Xi_k
&\defas
\E_{\pi_\psi}\!\left[
(\psi_{t+k}-\MLE)(\psi_t-\MLE)^\top
\right].
\end{align}
Then
\begin{align}
\Xi_k
&=
( I - \Lambda \hess ) \Xi_{k-1}
- \Lambda
\E_{\pi_\theta}\!\left[
\eta_{t+k-1}(\psi_t-\MLE)^\top
\right].
\end{align}
Since $\eta_{t+k-1}$ is conditionally mean-zero given the past,
\begin{align}
\E_{\pi_\theta}\!\left[
\eta_{t+k-1}(\psi_t-\MLE)^\top
\right]
=
\E\!\left[
\E\!\left[
\eta_{t+k-1}(\psi_t-\MLE)^\top \mid \mathcal{F}_{t+k-1}
\right]
\right]
= 0.
\end{align}
Therefore,
\begin{align}
\Xi_k
=
( I - \Lambda \hess )^k \Xi_0
=
( I - \Lambda \hess )^k \Sigma_\psi .
\end{align}

Consequently, the lag-$k$ autocorrelation for coordinate $i$ is
\begin{align}
\rho_{k,i}
=
\frac{\bigl(( I - \Lambda \hess )^k \Sigma_\psi\bigr)_{ii}}
{(\Sigma_\psi)_{ii}} .
\end{align}

When $\Lambda=\lambda I$ and $\widehat H$ is symmetric positive definite, the eigenvalues of
$I-\lambda \widehat H$ are $1-\lambda \mu_i(\widehat H)$. Hence
\[
\|I-\lambda \widehat H\|_2
=
\max_i |1-\lambda \mu_i(\widehat H)|.
\]
According to the condition $0<\lambda<2/\mu_{\max}(\widehat H)$, we have
$0<\lambda\mu_i(\widehat H)<2$ for all eigenvalues $\mu_i(\widehat H)$, and therefore
\[
|1-\lambda\mu_i(\widehat H)|<1.
\]
Since $\widehat H$ is symmetric positive definite,
the spectral norm satisfies
\[
\|A\|_2 \defas \sup_{\|x\|_2=1}\|Ax\|_2
= \max_i |\mu_i(A)|.
\]
Hence,
\[
\|I-\lambda \widehat H\|_2
=
\max_i |1-\lambda\mu_i(\widehat H)|
<1.
\]
Therefore, the Neumann series converges and
\begin{align}
\sum_{k=0}^{\infty}(I-\lambda\widehat H)^k
=
(\lambda\widehat H)^{-1}.
\end{align}
Hence,
\begin{align}
\sum_{k=1}^{\infty}\rho_{k,i}
&=
\frac{
\bigl(( (\Lambda\hess)^{-1}-I )\Sigma_\psi\bigr)_{ii}
}
{(\Sigma_\psi)_{ii}}
=
\frac{\bigl((\Lambda\hess)^{-1}\Sigma_\psi\bigr)_{ii}}
{(\Sigma_\psi)_{ii}}
-1 .
\end{align}

The coordinate-wise integrated autocorrelation time
\[
\tau_{\mathrm{int}}(f_i)
\defas
1 + 2\sum_{k=1}^{\infty}\rho_{k,i}
\]
thus satisfies
\begin{align}
\tau_{\mathrm{int}}(f_i)
=
2\frac{\bigl((\Lambda\hess)^{-1}\Sigma_\psi\bigr)_{ii}}
{(\Sigma_\psi)_{ii}}
-1 .
\end{align}

Let $w\defas \Sigma_\psi^{1/2}v$. Then
\[
\frac{v^\top(\Lambda\hess)^{-1}\Sigma_\psi v}{v^\top\Sigma_\psi v}
=
\frac{w^\top \Bigl(\Sigma_\psi^{1/2}(\Lambda\hess)^{-1}\Sigma_\psi^{-1/2}\Bigr) w}{w^\top w}.
\]
The matrix
\[
M \defas \Sigma_\psi^{1/2}(\Lambda\hess)^{-1}\Sigma_\psi^{-1/2}
\]
is similar to $(\Lambda\hess)^{-1}$, hence $\mu_{\max}(M)=\mu_{\max}((\Lambda\hess)^{-1})
=1/\mu_{\min}(\Lambda\hess)$.

By Rayleigh--Ritz,
\[
\sup_{v\neq 0}\frac{v^\top(\Lambda\hess)^{-1}\Sigma_\psi v}{v^\top\Sigma_\psi v}
=
\sup_{w\neq 0}\frac{w^\top M w}{w^\top w}
=
\lambda_{\max}(M)
=
\frac{1}{\lambda_{\min}(\Lambda\hess)}.
\]
Then we have
\[
\tau \defas \sup_v \tau_{\mathrm{int}}(f_v)
=
2\cdot \frac{1}{\mu_{\min}(\Lambda\hess)} - 1.
\]

\subsection{Proof of \cref{thm:wasserstein-error-bound}}

Since there exists a coupling of $\theta_{0} \dist \nu$ and $\psi_{0} \dist \nu'$ such that $W_{2}^{2}(\nu, \nu') = \E(\|\theta_{0} - \psi_{0}\|^{2})$,
we assume $(\theta_{0}, \psi_{0})$ follow this joint distribution.
Using the recursions for $\theta_{t}$ and $\psi_{t}$, and using the assumption that $\Lambda = \lambda I$, we have
\begin{equation}
\begin{aligned}
    	 \left\lVert\theta_{t} - \psi_{t}\right\rVert^{2} 
	&=  \left\lVert \theta_{t-1} - \psi_{t-1}\right\rVert^{2} 
         + \underbrace{\left\lVert 
    \lambda\,\Bigl[
      G_{t}(\theta_{t-1})
      - G_{t}(\MLE)
      - \nabla G_{t}(\MLE)(\psi_{t-1}-\MLE)\,
    \Bigr]\right\rVert^{2}}_{\star} \\
	&\phantom{=~~}  \underbrace{-~2\lambda\left\langle \theta_{t-1} - \psi_{t-1},  G_{t}(\theta_{t-1})
      - G_{t}(\MLE)
      - \nabla G_{t}(\MLE)(\psi_{t-1}-\MLE)\right\rangle}_{\star\star}
\end{aligned}
\end{equation}
Let $(\mathcal{F}_{t})_{t \ge 0}$ denote the filtration associated with $\{(\theta_{t}, \psi_{t})\}_{t \ge 0}$ and $\E_{t} \defas \E( \cdot \mid \mathcal{F}_{t})$. 
Let $I$ denote an independent random variable uniformly distributed on $\{1,\dots,N\}$.
We can bound the expected squared error as 
\begin{align}
\E_{t-1}(\star) 
&= \lambda^{2} \E_{t-1}\left[ \left\lVert  G_{t}(\theta_{t-1})
      - G_{t}(\MLE)
      - \nabla G_{t}(\MLE)(\psi_{t-1}-\MLE)\right\rVert^{2} \right]\\
&\leq  \frac{\lambda^{2}}{B} \sum_{n \in S_{t}}\E_{t-1}\left[ \left\lVert  \left(\nabla \ell_{n}(\theta_{t-1}) -\nabla \ell_{n}\big(\MLE\big) -\mathcal{J}_{n} \big(\psi_{t-1}-\MLE\big)\right) \right\rVert^{2} \right]\\
& =  \lambda^{2} \E_{t-1}\left[ \left\lVert  \left(\nabla \ell_{I}(\theta_{t-1}) -\nabla \ell_{I}\big(\MLE\big) -\mathcal{J}_{I} \big(\psi_{t-1}-\MLE\big)\right) \right\rVert^{2} \right]\\
&\leq 2\lambda^{2} \E_{t-1}\left[ \left\lVert \nabla \ell_{I}(\theta_{t-1}) - \nabla \ell_{I}(\psi_{t-1})\right\rVert^{2}\right] 
 +  2\lambda^{2} \E_{t-1}\left[ \left\lVert \nabla \ell_{I}(\psi_{t-1}) -\nabla \ell_{I}\big(\MLE\big) -\mathcal{J}_{I} \big(\psi_{t-1}-\MLE\big) \right\rVert^{2}\right].
\end{align}
It follows from Taylor's remainder theorem and Assumption \ref{assump:statistic-smooth} that 
\begin{align}
\left\lVert \nabla \ell_{n}(\psi_{t-1}) -\nabla \ell_{n}\big(\MLE\big) -\mathcal{J}_{n} \big(\psi_{t-1}-\MLE) \right\rVert^{2}
&\le \frac{M_{n}^{2}}{4}\norm{\psi_{t-1}-\MLE}^{4}. 
\end{align}
Using the fact that convexity and $L$-smoothness imply $L$-co-coercivity, we thus obtain 
\begin{align}
\lefteqn{\E_{t-1}(\star)} \\
& \leq 2L \lambda^{2} \E_{t-1}\left[ \left\langle \theta_{t-1}-\psi_{t-1}, \nabla \ell_{I}(\theta_{t-1}) - \nabla \ell_{I}(\psi_{t-1}) \right\rangle \right] + \frac{\lambda^{2}}{2} \E_{t-1}\left[M_{I}^{2} \right] \norm[\big]{\psi_{t-1} - \MLE}^{4}\\
& = 2L \lambda^{2} \left\langle \theta_{t-1}-\psi_{t-1}, \grad\loss(\theta_{t-1}) - \grad\loss(\psi_{t-1}) \right\rangle +  \frac{\lambda^{2} \overline{M^2}}{2} \norm[\big]{\psi_{t-1} - \MLE}^{4}. 
\end{align}
Furthermore, for any $c > 0$, and again using Taylor's remainder theorem and Assumption \ref{assump:statistic-smooth}, we have 
\begin{equation}
	\begin{aligned}
\E_{t-1}(\star\star) 
& = -2\lambda\E_{t-1}\left[\left< \theta_{t-1} - \psi_{t-1}, \left(\nabla \ell_{I}(\theta_{t-1}) -\nabla \ell_{I}(\MLE) -\mathcal{J}_{I} \big(\psi_{t-1}-\MLE\big)\right) \right>\right]\\
&= -2\lambda\left< \theta_{t-1} - \psi_{t-1}, \grad\loss(\theta_{t-1}) - \grad\loss(\MLE) -\mathcal{J}\left(\psi_{t-1} - \MLE\right) \right> \\
&\le  -2\lambda\left<\theta_{t-1} - \psi_{t-1}, \grad\loss(\theta_{t-1}) - \grad\loss(\psi_{t-1})\right> + \lambda \overline{M}\norm{\theta_{t-1} - \psi_{t-1}} \norm[\big]{\psi_{t-1} - \MLE}^{2} \\ 
&\le -2\lambda\left<\theta_{t-1} - \psi_{t-1}, \grad\loss(\theta_{t-1}) - \grad\loss(\psi_{t-1})\right> 
    + 2\lambda c \norm{\theta_{t-1} - \psi_{t-1}}^{2} + \frac{\lambda \overline{M}^2}{8c} \norm[\big]{\psi_{t-1} - \MLE}^{4}. 
\end{aligned}
\end{equation}
Thus, using Assumption \ref{assump:loss} and choosing $c = \mu/2$,
we have
\begin{align}
\E\rbra{\|\theta_{t} - \psi_{t}\|^{2}} 
&\le \rbra[\big]{1 - 2\lambda\mu + 2\lambda c + 2\lambda^{2}\mu L}\E\rbra{\|\theta_{t-1} - \psi_{t-1}\|^{2}}
    + \cbra*{\frac{\lambda^{2} \overline{M^2}}{2} + \frac{\lambda  \overline{M}^2}{8c}}\E\rbra*{\norm[\big]{\psi_{t-1} - \MLE}^{4}} \\
& = \cbra[\big]{1 - \lambda\mu \rbra*{1 - 2\lambda L}}\E\rbra{\|\theta_{t-1} - \psi_{t-1}\|^{2}}+ \lambda  \cbra*{\frac{\lambda  \overline{M^2}}{2} + \frac{\overline{M}^2}{4\mu}}\E\rbra*{\norm[\big]{\psi_{t-1} - \MLE}^{4}}.
\end{align}
Hence, we obtain the overall bound given in \cref{eq:wasserstein-error-bound}.

\subsection{Proof of \cref{cor:stationary-wasserstein-error-bound}}

First, we give a lemma bounding the stationary fourth moment of $\psi_{t-1}$.
\begin{lemma}\label{lem:fourth-moment-bound}
    Under the conditions of \cref{cor:stationary-wasserstein-error-bound}, if $\hat{\mu}$ and $\hat{L}$ denote, respectively, the smallest and largest eigenvalues of $\hess = \grad^2 \loss(\MLE)$ and $\lambda \leq \min\{1/(4\hat{\mu}), B\hat{\mu}/(200 L^2)\}$, then for $\psi_\infty \dist \pi_\psi$, satisfies
\begin{align}
\E(\norm{\psi_\infty - \MLE}^4 )
\le  
  96\frac{\lambda^2\tau_4^4}{\hat{\mu}^2B^2} \, 
  + 24 \frac{\lambda \tau_4^2}{\hat{\mu}^2 B \beta}D
  + 12 \frac{D^2}{\hat{\mu}^2\beta^2}
+48\frac{\lambda D(D+2)}{\hat{\mu}\beta^{2}},
\end{align}
where $\tau_4^4 \defas N^{-1}\sum_{n=1}^N \norm[\big]{\grad \ell(x_{I}, y_{I}, \MLE)}^{4}$.
\end{lemma}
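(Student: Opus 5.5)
Let $e_t \defas \psi_t - \MLE$. As in the proof of \cref{thm:SGLD-stationary-covariance}, I would write the proxy recursion as
\[
e_t = (I - \lambda \hess_t) e_{t-1} - \lambda g_t + \sqrt{2\lambda/\beta}\,\xi_{t-1},
\]
where $\hess_t \defas \grad G_t(\MLE)$ is the minibatch Hessian, with $\E \hess_t = \hess$, and $g_t \defas G_t(\MLE)$ is the minibatch gradient at $\MLE$, with $\E g_t = 0$ by \cref{eq: property of MLE}. The three summands involve $(\hess_t, g_t, \xi_{t-1})$, which are independent of $e_{t-1}$. The plan is to expand $\norm{e_t}^4 = (\norm{e_t}^2)^2$, take conditional expectations given $\mathcal{F}_{t-1}$, and obtain a one-step inequality of the form
\[
\E\norm{e_t}^4 \le (1 - c\lambda\hat\mu)\,\E\norm{e_{t-1}}^4 + (\text{source terms}).
\]
At stationarity this gives $\E\norm{e_\infty}^4 \le (\text{source})/(c\lambda\hat\mu)$.

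\textbf{Contraction of the linear part.} Write $a \defas (I-\lambda\hess_t)e_{t-1}$. Its mean is $(I-\lambda\hess)e_{t-1}$, and $\norm{(I-\lambda\hess)e}^2 \le (1-\lambda\hat\mu)^2\norm{e}^2$ since $\lambda \hat L < 1$. Its fluctuation is $\lambda(\hess_t - \hess)e_{t-1}$, an average of $B$ i.i.d.\ centered terms, each bounded by $2L\norm{e_{t-1}}$. Its second moment is therefore $\le 4\lambda^2L^2\norm{e}^2/B$, and its fourth moment is $O(\lambda^4L^4\norm{e}^4/B^2)$ by the Rosenthal/Marcinkiewicz--Zygmund inequality for sums of i.i.d.\ vectors. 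Hence $\E_{t-1}\norm{a}^4 \le (1-\lambda\hat\mu)^4\norm{e}^4 + O(\lambda^2 L^2/B)(1-\lambda\hat\mu)^2\norm{e}^4 + \dots$. The condition $\lambda \le B\hat\mu/(200L^2)$ is exactly what ensures that the $\lambda^2L^2/B$ corrections are absorbed, giving $\E_{t-1}\norm{a}^4 \le (1-2\lambda\hat\mu)\norm{e}^4$, say. I expect this to be where the constant $200$ comes from.

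\textbf{Noise terms.} Let $b \defas -\lambda g_t + \sqrt{2\lambda/\beta}\,\xi$. Then
\[
\norm{a+b}^4 = \bigl(\norm a^2 + 2\langle a,b\rangle + \norm b^2\bigr)^2 .
\]
I would expand this and discard the terms that are odd in the centered $b$ (the cross term $\langle a,b\rangle\norm a^2$ has zero conditional mean). This is the delicate point: $\hess_t$ and $g_t$ share the same minibatch, so $\E[\langle a,g_t\rangle \norm a^2]$ is not exactly zero. I would bound it by Cauchy--Schwarz and Young, splitting it into $\epsilon\lambda\hat\mu\norm e^4$ plus a source term, and absorb the former into the contraction.

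The remaining terms are bounded by Young's inequality, $4\langle a,b\rangle^2 + 2\norm a^2\norm b^2 \le 6\norm a^2\norm b^2 \le \epsilon\lambda\hat\mu\norm a^4 + \tfrac{9}{\epsilon\lambda\hat\mu}\norm b^4$, together with $\E\norm b^4$ and $\E\norm b^2$. The needed moments are:
\begin{itemize}
\item $\E\norm{g_t}^4 \lesssim \tau_4^4/B^2$ (Rosenthal, valid once $B$ is large relative to the higher-order correction) and $\E\norm{g_t}^2 \le \tau_4^2/B$;
\item $\E\norm\xi^2 = D$ and $\E\norm\xi^4 = D(D+2)$.
\end{itemize}
The cross moment $\E\norm{g}^2\norm\xi^2 = \E\norm{g}^2\,D$ produces the $\tau_4^2 D/(B\beta)$ term.

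\textbf{Stationarity.} Dividing the source terms by $\lambda\hat\mu$ yields each term of the claim:
\begin{itemize}
\item from $\lambda^4\tau_4^4/B^2$ squared-type terms, after the Young split, the term $\lambda^2\tau_4^4/(\hat\mu^2B^2)$;
\item the mixed term $\lambda\tau_4^2D/(\hat\mu^2B\beta)$;
\item from $(\lambda D/\beta)^2/(\lambda\hat\mu)^2$, the term $D^2/(\hat\mu^2\beta^2)$;
\item from the pure fourth moment $\lambda^2 D(D+2)/\beta^2$ divided by $\lambda\hat\mu$, the term $\lambda D(D+2)/(\hat\mu\beta^2)$.
\end{itemize}
The numerical constants $96, 24, 12, 48$ should come out of the Young splits with $\epsilon$ chosen so that half of the contraction $\lambda\hat\mu$ survives.

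The main obstacle is controlling the multiplicative-noise fourth moment in the contraction step. I would first try a clean bound via $\norm{\hess_t - \hess} \le 2L$ and conditional variance, and fall back on crude Cauchy--Schwarz if the i.i.d.\ moment inequality gets messy.
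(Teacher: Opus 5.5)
Your outline has the same structure as the paper's proof: a one-step recursion for $\E\|e_t\|^4$, absorption of the $\lambda^2L^2/B$ corrections via $\lambda\le B\hat\mu/(200L^2)$, and division by the surviving contraction at stationarity. Up to absolute constants it does give a bound of the stated form. It does not give the stated constants, though, and the claim that $96,24,12,48$ \emph{should come out of the Young splits} is where it fails. Your split $6\|a\|^2\|b\|^2\le\epsilon\lambda\hat\mu\|a\|^4+\frac{9}{\epsilon\lambda\hat\mu}\|b\|^4$ turns the mixed term into $\E\|b\|^4/(\lambda\hat\mu)^2$. Since $\E\|\sqrt{2\lambda/\beta}\,\xi\|^4=4\lambda^2D(D+2)/\beta^2$, the Gaussian part alone contributes about $\frac{36D(D+2)}{\epsilon(2-\epsilon)\hat\mu^2\beta^2}\ge\frac{36D(D+2)}{\hat\mu^2\beta^2}$ as $\lambda\to0$ under your contraction $1-2\lambda\hat\mu$. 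Even with the full contraction $\approx4\lambda\hat\mu$ it is $9D(D+2)/(\hat\mu^2\beta^2)$, still too large when $D\le5$. Either way this exceeds the claimed $\frac{12D^2}{\hat\mu^2\beta^2}+\frac{48\lambda D(D+2)}{\hat\mu\beta^2}$. Your bullet getting $D^2/(\hat\mu^2\beta^2)$ from $(\lambda D/\beta)^2/(\lambda\hat\mu)^2$ is a product of \emph{second} moments, which Young's inequality does not produce.

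The missing step is a preliminary stationary second-moment bound. From $\E_{t-1}\|e_t\|^2\le\bigl((1-\lambda\hat\mu)^2+8\lambda^2L^2/B\bigr)\|e_{t-1}\|^2+2\lambda^2\tau_4^2/B+2\lambda D/\beta$ one gets $\E\|e_\infty\|^2\le\frac65\bigl(\frac{\lambda\tau_4^2}{\hat\mu B}+\frac{D}{\hat\mu\beta}\bigr)$. The paper then keeps the mixed term as $8\|u\|^2\E_{t-1}\|w\|^2$ and absorbs its $\|e_{t-1}\|^4$ part into the contraction, so the net denominator is $\ge2\lambda\hat\mu$. It bounds the remaining $16(\lambda^2\tau_4^2/B+\lambda D/\beta)\|e_{t-1}\|^2$ with the second-moment bound, giving $\frac{48}{5\hat\mu^2}(\lambda\tau_4^2/B+D/\beta)^2$; this is the source of $12$, $24$ and part of $96$. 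The term $3\E_{t-1}\|w\|^4$ supplies $48\lambda D(D+2)/(\hat\mu\beta^2)$ and the rest of $96$, using $\lambda\hat\mu\le1/4$.

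Your decomposition also makes things harder than necessary. The paper puts the Hessian fluctuation into the noise, $\eta_{t-1}=G_t(\MLE)+(\grad G_t(\MLE)-\hess)e_{t-1}$. Then $u=(I-\lambda\hess)e_{t-1}$ is $\mathcal F_{t-1}$-measurable and $w=-\lambda\eta_{t-1}+\sqrt{2\lambda/\beta}\,\xi_{t-1}$ is conditionally centered, so $\E_{t-1}[\|u\|^2\langle u,w\rangle]=0$ exactly and the shared-minibatch issue never arises. The only price is the $e$-dependent parts of $\E_{t-1}\|\eta_{t-1}\|^2\le2\tau_4^2/B+8L^2\|e_{t-1}\|^2/B$ and $\E_{t-1}\|\eta_{t-1}\|^4\le24\tau_4^4/B^2+384L^4\|e_{t-1}\|^4/B^2$, which the step-size condition absorbs.

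Your Cauchy--Schwarz patch can be made to work, but only after you peel off the exactly centered piece $\|\bar a\|^2\langle\bar a,g_t\rangle$, where $\bar a=(I-\lambda\hess)e_{t-1}$. Applied to the whole term, it leaves $O(\lambda\tau_4\|e_{t-1}\|^3/\sqrt B)$. Any Young split of that yields a source missing the factor $\lambda^2$, for example one of order $\tau_4^4/(\hat\mu^4B^2)$, which would destroy the $\lambda/B$ rate downstream.

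Finally, you cannot discard all terms odd in $b$. The cubic term $4\langle a,b\rangle\|b\|^2$ has nonzero conditional mean in general, because per-sample gradients can be skewed and $b$ is correlated with $a$. It has to be bounded, e.g.\ by $2\|a\|^2\|b\|^2+2\|b\|^4$, which is how the paper arrives at $\|u\|^4+8\|u\|^2\E_{t-1}\|w\|^2+3\E_{t-1}\|w\|^4$.
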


\begin{proof}
The recursion for $\psi_t$ can be rewritten as
\begin{equation}\label{eq:schemeB-et}
\psi_t - \MLE
=(I-\lambda\hess)(\psi_{t-1}-\MLE) - \lambda \eta_{t-1}
+ \sqrt{2\beta^{-1}\lambda}\,\xi_{t-1},
\end{equation}
where
$\eta_{t-1}\defas G_t(\MLE) + \nabla G_t(\MLE)(\psi_{t-1}-\MLE) - \hess(\psi_{t-1}-\MLE)$ and $\xi_{t}\sim \mathcal{N}(0,I_D)$.

Since the minibatch at time $t$ is independent of $\psi_{t-1}$ and
$\E[G_t(\MLE)]=\nabla \loss(\MLE)=0$, $\E[\nabla G_t(\MLE)]=\nabla^2\loss(\MLE)=\hess$,
we have $\E_{t-1}[\eta_{t-1}]=0$.
Using the multinomial formula and the fact that the expected gradient is zero at $\MLE$, we obtain
\begin{align}\label{eq: second-fourth moment of Gt}
\E\cbra{\norm{G_{t}(\MLE)}^{4}}
&\le \frac{3}{B^{2}}\underbrace{\E\cbra*{\norm[\big]{\grad \ell(x_{I}, y_{I}, \MLE)}^{4}}}_{\tau_{4}^{4}\,\defas} \text{  and } \quad\mathbb E\|G_t(\MLE)\|^2
\le\frac{\tau_4^2}{B};
\end{align}
Fix $u:=\psi_{t-1}-\MLE$, which is $\mathcal F_{t-1}$-measurable.
With minibatch sampling \emph{with replacement}, we can write
\[
\nabla G_t(\MLE)=\frac{1}{B}\sum_{b=1}^B \nabla^2\ell(x_{I_b},y_{I_b},\MLE),
\qquad I_1,\dots,I_B \ \text{i.i.d. and independent of }\mathcal F_{t-1}.
\]
Let $H_{I}:=\nabla^2\ell(x_I,y_I,\MLE)$ and $H:=\E[H_I]=\nabla^2\loss(\MLE)$.
Define the i.i.d.\ random vectors
\[
Z_b := (H_{I_b}-H)u, \qquad b=1,\dots,B.
\]
Then $\E_{t-1}[Z_b]=0$ and
\[
(\nabla G_t(\MLE)-H)u=\frac{1}{B}\sum_{b=1}^B Z_b.
\]
Moreover, by $L$-smoothness at $\MLE$, then $\|H\|_2\le \E\|H_I\|_2\le L$,
we have $\|H_I-H\|_2\le \|H_I\|_2+\|H\|_2\le 2L$, hence
\[
\|Z_b\|\le 2L\|u\|,\qquad
\E_{t-1}\|Z_b\|^2\le 4L^2\|u\|^2,\qquad
\E_{t-1}\|Z_b\|^4\le 16L^4\|u\|^4.
\]

Using independence and $\E_{t-1}[Z_b]=0$, the cross terms vanish:
\[
\E_{t-1}\Big\|\frac{1}{B}\sum_{b=1}^B Z_b\Big\|^2
=\frac{1}{B^2}\sum_{b=1}^B \E_{t-1}\|Z_b\|^2
\le \frac{4L^2}{B}\|u\|^2.
\]
Let $S:=\sum_{b=1}^B Z_b$. Since $\E_{t-1}[Z_b]=0$ and the $Z_b$'s are independent,
\[
\|S\|^2=\sum_{b=1}^B\|Z_b\|^2+2\sum_{1\le i<j\le B}\langle Z_i,Z_j\rangle.
\]
By $\langle Z_i,Z_j\rangle^2\le \|Z_i\|^2\|Z_j\|^2$, we have
\[
\E_{t-1}\|S\|^4
\le
\E_{t-1}\Big(\sum_{b=1}^B\|Z_b\|^2\Big)^2
+4\,\E_{t-1}\Big(\sum_{i<j}\langle Z_i,Z_j\rangle\Big)^2
\le
B\,\E_{t-1}\|Z_1\|^4+3B(B-1)\big(\E_{t-1}\|Z_1\|^2\big)^2.
\]
Using $(\E_{t-1}\|Z_1\|^2)^2\le \E_{t-1}\|Z_1\|^4$, we get
$\E_{t-1}\|S\|^4\le 3B^2\,\E_{t-1}\|Z_1\|^4$, hence
\begin{align}
\E_{t-1}\Big\|\frac{1}{B}\sum_{b=1}^B Z_b\Big\|^4
=\frac{1}{B^4}\E_{t-1}\|S\|^4
\le \frac{3}{B^2}\,\E_{t-1}\|Z_1\|^4
\le \frac{3}{B^2}\cdot 16L^4\|u\|^4
= \frac{48L^4}{B^2}\|u\|^4.
\end{align}
Then we have
\begin{align}
\E_{t-1}\big\|(\nabla G_t(\MLE)-\hess)(\psi_{t-1}-\MLE)\big\|^2
&\le \frac{4L^2}{B}\,\|\psi_{t-1}-\MLE\|^2, \label{eq:HG-second}\\
\E_{t-1}\big\|(\nabla G_t(\MLE)-\hess)(\psi_{t-1}-\MLE)\big\|^4
&\le \frac{48L^4}{B^2}\,\|\psi_{t-1}-\MLE\|^4. \label{eq:HG-fourth}
\end{align}
Since $\eta_{t-1}=G_t(\MLE)+(\nabla G_t(\MLE)-\hess)(\psi_{t-1}-\MLE)$,
the inequalities $\|a+b\|^2\le 2\|a\|^2+2\|b\|^2$ and $\|a+b\|^4\le 8\|a\|^4+8\|b\|^4$
combined with \cref{eq: second-fourth moment of Gt}, \cref{eq:HG-second}, \cref{eq:HG-fourth} yield
\begin{align}
\E_{t-1}\|\eta_{t-1}\|^2
&\le \frac{2\tau_4^2}{B}+\frac{8L^2}{B}\,\|\psi_{t-1}-\MLE\|^2,
\label{eq:eta-second}\\
\E_{t-1}\|\eta_{t-1}\|^4
&\le \frac{24\tau_4^4}{B^2}+\frac{384L^4}{B^2}\,\|\psi_{t-1}-\MLE\|^4.
\label{eq:eta-fourth}
\end{align}
By \cref{eq:schemeB-et}, $\E_{t-1}[\eta_{t-1}]=0$, $\E[\xi_{t-1}]=0$,
\cref{eq:eta-second}, and $\|I-\lambda\hess\|_2\le 1-\lambda\hat\mu$, we can obtain
\begin{align}
\mathbb E_{t-1}\|\psi_t - \MLE\|^2
&= \|(I-\lambda\hess)(\psi_{t-1}-\MLE)\|^2 + \lambda^2\E_{t-1}\|\eta_{t-1}\|^2 + 2\beta^{-1}\lambda D \\
&\le (1-\lambda\hat\mu)^2\|\psi_{t-1}-\MLE\|^2 + \lambda^2\E_{t-1}\|\eta_{t-1}\|^2 + 2\beta^{-1}\lambda D \\
&\le \Bigl((1-\lambda\hat\mu)^2+8\lambda^2L^2/B\Bigr)\|\psi_{t-1}-\MLE\|^2 + 2\lambda^2\tau_4^2/B + 2\beta^{-1}\lambda D.
\end{align}
Letting $t\to\infty$ yields
\begin{equation}\label{eq:Vinf-schemeB}
\mathbb E\|\psi_\infty-\MLE\|^2
\le
\frac{\frac{2\lambda^2\tau_4^2}{B}+2\beta^{-1}\lambda D}
{\lambda\hat\mu(2-\lambda\hat\mu)-\frac{8\lambda^2L^2}{B}}.
\end{equation}
Since $\lambda \le 1/4\hat{\mu}$, we have $2-\lambda\hat\mu\ge 7/4$ and
$\frac{8\lambda^2L^2}{B}\le \frac{1}{25}\lambda\hat\mu$, hence
\begin{equation}\label{eq:Vinf-simplified}
\mathbb E\|\psi_\infty-\MLE\|^2
\le
\frac{6}{5}\Bigl(\frac{\lambda\tau_4^2}{\hat\mu B}+\frac{D}{\hat\mu\beta}\Bigr).
\end{equation}
Then we expand the fourth moment of $\psi_t$ and take conditional expectation,
\begin{align}
\E_{t-1}\|\psi_t-\MLE\|^4
&=
\E_{t-1}\|(I-\lambda\hess)(\psi_{t-1}-\MLE)\|^4
+\E_{t-1}\|-\lambda\eta_{t-1}+\sqrt{2\beta^{-1}\lambda}\xi_{t-1}\|^4 \nonumber\\
&\quad
+ 4\,\E_{t-1}\| -\lambda\eta_{t-1}+\sqrt{2\beta^{-1}\lambda}\xi_{t-1}\|^2 \Big\langle (I-\lambda\hess)(\psi_{t-1}-\MLE),\,
-\lambda\eta_{t-1}+\sqrt{2\beta^{-1}\lambda}\xi_{t-1}\Big\rangle\\
&\quad
+2\,\E_{t-1}\Big(\|(I-\lambda\hess)(\psi_{t-1}-\MLE)\|^2
\|-\lambda\eta_{t-1}+\sqrt{2\beta^{-1}\lambda}\xi_{t-1}\|^2\Big)\nonumber\\
&\quad
+4\,\E_{t-1}\Big\langle (I-\lambda\hess)(\psi_{t-1}-\MLE),\,
-\lambda\eta_{t-1}+\sqrt{2\beta^{-1}\lambda}\xi_{t-1}\Big\rangle^2.\label{eq:fourth-expand}
\end{align}
Using $\langle a,b\rangle^2\le \|a\|^2\|b\|^2$ and Cauchy-Schwarz inequality,
\begin{align}
\E_{t-1}\|\psi_t-\MLE\|^4
&\le
\E_{t-1}\|(I-\lambda\hess)(\psi_{t-1}-\MLE)\|^4
+3\E_{t-1}\|-\lambda\eta_{t-1}+\sqrt{2\beta^{-1}\lambda}\xi_{t-1}\|^4 \nonumber\\
&\quad
+8\,\E_{t-1}\Big(\|(I-\lambda\hess)(\psi_{t-1}-\MLE)\|^2
\|-\lambda\eta_{t-1}+\sqrt{2\beta^{-1}\lambda}\xi_{t-1}\|^2\Big)\nonumber.
\end{align}
Moreover,
\begin{align}
\E_{t-1}\|-\lambda\eta_{t-1}+\sqrt{2\beta^{-1}\lambda}\xi_{t-1}\|^2
&=
\lambda^2\E_{t-1}\|\eta_{t-1}\|^2+2\beta^{-1}\lambda D \nonumber\\
&\le
\lambda^2\Big(\frac{2\tau_4^2}{B}+\frac{8L^2}{B}\|\psi_{t-1}-\MLE\|^2\Big)+2\beta^{-1}\lambda D,
\label{eq:w2}
\end{align}
and using $\|u+v\|^4\le 8\|u\|^4+8\|v\|^4$ together with $\E\|\xi\|^4=D(D+2)$,
\begin{align}
\E_{t-1}\|-\lambda\eta_{t-1}+\sqrt{2\beta^{-1}\lambda}\xi_{t-1}\|^4
&\le
8\lambda^4\E_{t-1}\|\eta_{t-1}\|^4
+32\beta^{-2}\lambda^2 D(D+2)\nonumber\\
&\le
\frac{192\lambda^4\tau_4^4}{B^2}
+\frac{3072\lambda^4L^4}{B^2}\|\psi_{t-1}-\MLE\|^4
+32\beta^{-2}\lambda^2 D(D+2).
\label{eq:w4}
\end{align}
Combining \cref{eq:fourth-expand}, \cref{eq:w4} and
$\|(I-\lambda\hess)\|_2\le 1-\lambda\hat\mu$, we obtain
\begin{align}
\E_{t-1}\|\psi_t-\MLE\|^4
&\le
\Bigl((1-\lambda\hat\mu)^4+\frac{64(1-\lambda\hat\mu)^2\lambda^2L^2}{B}
+\frac{9216\lambda^4L^4}{B^2}\Bigr)\|\psi_{t-1}-\MLE\|^4 \nonumber\\
&\quad
+16(1-\lambda\hat\mu)^2\Bigl(\lambda^2\tau_4^2/B+\lambda D/\beta\Bigr)\|\psi_{t-1}-\MLE\|^2
+576\lambda^4\tau_4^4/B^2
+96\beta^{-2}\lambda^2 D(D+2).
\label{eq:fourth-recursion}
\end{align}
Taking full expectation and letting $t\to\infty$ gives
\begin{align}
\Bigl(1-(1-\lambda\hat\mu)^4-\frac{64(1-\lambda\hat\mu)^2\lambda^2L^2}{B}
-\frac{9216\lambda^4L^4}{B^2}\Bigr)\E\|\psi_\infty-\MLE\|^4
&\le
16(1-\lambda\hat\mu)^2\Bigl(\frac{\lambda^2\tau_4^2}{B}+\frac{\lambda D}{\beta}\Bigr)\E\|\psi_\infty-\MLE\|^2 \nonumber\\
&\quad+\frac{576\lambda^4\tau_4^4}{B^2}
+96\beta^{-2}\lambda^2 D(D+2).
\label{eq:M-infty-raw}
\end{align}
Under $\lambda\hat\mu\le 1/4$ and $\lambda\le B\hat\mu/(200L^2)$,
\begin{align}
1-(1-\lambda\hat\mu)^4 
-\frac{64(1-\lambda\hat\mu)^2\lambda^2 L^2}{B}
-\frac{9216\lambda^4 L^4}{B^2}
&\ge (4\lambda\hat\mu - 6\lambda^2\hat\mu^2) 
- 64\lambda^2 L^2/B - 9216\lambda^4 L^4/B^2 \nonumber\\
&\ge (5/2)\lambda\hat\mu - (8/25)\lambda\hat\mu - (36/625)\lambda\hat\mu \nonumber\\
&\ge 2\lambda\hat\mu.\label{eq:den-lower}
\end{align}
Using $(1-\lambda\hat\mu)^2\le 1$, \cref{eq:Vinf-simplified}, and \cref{eq:den-lower} in \cref{eq:M-infty-raw} yields
\begin{align}
\E\|\psi_\infty-\MLE\|^4
&\le
\frac{48}{5\hat{\mu}^{2}}
\Bigl(\frac{\lambda\tau_4^2}{B}+\frac{D}{\beta}\Bigr)^2
+72\frac{\lambda^2\tau_4^4}{\hat{\mu}^2B^2}
+48\frac{\lambda D(D+2)}{\hat{\mu}\beta^{2}}\\
&\le\Bigl( 72+ \frac{48}{5}\Bigr) \frac{\lambda^2\tau_4^4}{\hat{\mu}^2B^2} \, + \frac{96}{5} \frac{\lambda \tau_4^2}{\hat{\mu}^2 B \beta}D + \frac{48}{5} \frac{D^2}{\hat{\mu}^2\beta^2}
+48\frac{\lambda D(D+2)}{\hat{\mu}\beta^{2}}\\
&\le 96\frac{\lambda^2\tau_4^4}{\hat{\mu}^2B^2} \, + 24 \frac{\lambda \tau_4^2}{\hat{\mu}^2 B \beta}D + 12 \frac{D^2}{\hat{\mu}^2\beta^2}
+48\frac{\lambda D(D+2)}{\hat{\mu}\beta^{2}}
.\label{eq:almost-final}
\end{align}
\end{proof}

\begin{proof}[Proof of \cref{cor:stationary-wasserstein-error-bound}]
Taking the limit $t\to \infty$ and combining with \cref{lem:fourth-moment-bound} yields
\begin{align}
\begin{split}
W_2^2(\pi_\theta,\pi_\psi)
&\le
\frac{\lambda}{1-\bar\beta}\cbra*{\frac{\lambda \overline{M^2}}{2} + \frac{\overline{M}^2}{4\mu}}\,
\E\,\bigl\|\psi_\infty-\MLE\bigr\|^4\\
&\le
 \underbrace{\frac{2}{\mu}\cbra*{\frac{ \overline{M^2}}{8L} + \frac{\overline{M}^2}{4\mu}}}_{C_0} \times \cbra*{\frac{96\lambda^2\tau_4^4}{\hat{\mu}^2B^2} \, 
 +  \frac{24\lambda \tau_4^2}{\hat{\mu}^2 B \beta}D
 +  \frac{12 D^2}{\hat{\mu}^2\beta^2}
 +\frac{48\lambda D(D+2)}{\hat{\mu}\beta^{2}}  } \\
 &\le
 C_0 \times \cbra*{ \frac{96\lambda^2\tau_4^4}{\hat{\mu}^2B^2} \, 
 +  \frac{24\lambda \tau_4^2}{\hat{\mu}^2 B \beta}D 
 +  \frac{12 D^2}{\hat{\mu}^2\beta^2}
 +\frac{48\lambda D(D+2)}{\hat{\mu}\beta^{2}} } \\
&\le
\underbrace{\frac{96C_0\tau_4^4}{\hat\mu^2}}_{A_0}\,\frac{\lambda^2}{B^2}
+
\underbrace{\frac{12C_0\tau_4^2}{\hat\mu^2}\,D}_{A_1}\,\frac{2\lambda}{B\beta}
+
\underbrace{\bigl(\frac{12C_0}{\hat\mu^2}\,D^2+\frac{48C_0}{\hat\mu^2}\,D(D+2)\bigr)}_{A_2}\,\frac{1}{\beta^2}\\
&\leq A^2\left(\frac{\lambda}{B} + \frac{1}{\beta}\right)^2
\label{eq:wasserstein-error-bound-constant-sgld}
\end{split}
\end{align}
where $A^2 = \max\{A_0, A_1, A_2\}$
\end{proof}

\subsection{Proof of \cref{thm:covariance-error-bounds}}

\begin{proof}
Using \cref{cor:stationary-wasserstein-error-bound}, the proof of \cref{thm:covariance-error-bounds} is almost immediate.
Under Assumptions \ref{assump:statistic-convex}--\ref{assump:loss}, there exists a constant $c > 0$ such that 
$\Sigma_{\theta} \prec c\lambda I$ and $\Sigma_{\psi} \prec c\lambda I$ \citep[Theorem 4]{dieuleveut2020bridging}
and therefore $\sigma_{\theta,d}$ and $\sigma_{\psi,d}$ are of order $\lambda^{1/2}$.  
Hence, it follows from \cref{cor:stationary-wasserstein-error-bound,eq:general-stdev-and-cov-error-bounds} that the relative errors
of the stationary standard deviations and covariance satisfy \cref{eq:cov-relative-error,eq:stdev-relative-error}.
\end{proof}

\section{Proofs for Momentum Results} \label{sec:momentum-proofs}

\subsection{Proof of \cref{Proposition: stationary covariance analysis for SGLD with momentum}}

\begin{proof}
	Let 
        \begin{align}
        \eta_{t-1}
        = \Bigl[G_t(\MLE)
            + \nabla G_t(\MLE)(\psi_{t-1}-\MLE)\,\Bigr]
          - \E\Bigl[G_t(\MLE)
            + \nabla G_t(\MLE)(\psi_{t-1}-\MLE)\,\Bigr].
        \end{align}
	Then, we have
        \begin{align}
        \begin{aligned}
	\label{eq: sgd with momentum update}
        \psi_t - \MLE
        = \psi_{t-1}-\MLE
          - \Lambda\,\kappa\,m_{t-1}
          - \Lambda\E\Bigl[G_t(\MLE)
            + \nabla G_t(\MLE)(\psi_{t-1}-\MLE)\,\Bigr]
          - \Lambda\,\eta_{t-1}
          + \sqrt{2\beta^{-1}\Lambda}\,\xi_{t-1}.
        \end{aligned}
        \end{align}
	Since by assumption $ \reg(\psi) = \frac{1}{2} \psi^{\top} \Gamma \psi$, we have
	\begin{align}
		\E(G_t(\hat{\theta})) = \frac{1}{N}\sum_{n=1}^{N}\nabla \ell\left(x_{n}, y_{n}, \MLE\right) + \frac{1}{N}\Gamma \MLE = 0.
	\end{align}
	Then \cref{eq: sgd with momentum update} can be rewritten as 
	\begin{align}
		\begin{aligned}
			\psi_{t}-\MLE 
			& = \psi_{t-1}-\MLE - \Lambda \kappa m_{t-1} - \frac{\Lambda}{N} \sum_{n=1}^{N} \left\lbrace \mathcal{J}_{n} \left(\psi_{t-1}-\MLE\right)\right\rbrace - \frac{1}{N}\Lambda \Gamma (\psi_{t-1}-\MLE)- \Lambda \eta_{t-1}+ \sqrt{2\beta^{-1} \Lambda}\,\xi_{t-1}\\
			& = \left( I- \Lambda\mathcal{J}-\frac{\Lambda}{N} \Gamma \right)\left(\psi_{t-1}-\MLE\right) - \Lambda \kappa m_{t-1} - \Lambda \eta_{t-1}+ \sqrt{2\beta^{-1} \Lambda}\,\xi_{t-1}\\
			& = \left( I- \Lambda \hess \right)\left(\psi_{t-1}-\MLE\right) - \Lambda \kappa m_{t-1} - \Lambda \eta_{t-1}+ \sqrt{2\beta^{-1} \Lambda}\,\xi_{t-1}
		\end{aligned}
	\end{align}
    Assuming $\theta_{t}$ and $m_{t}$ are jointly sampled from stationary distribution, we have
	\begin{align}
		\begin{aligned}
			\Sigma_{\psi} &= \mathbb{E} \left[ \left( \psi_{t}-\MLE\right)  \left( \psi_{t}-\MLE\right)^{\top} \right]\\
			& =   \left( I - \Lambda \hess \right)  \Sigma_{\psi} \left( I - \Lambda \hess \right)^{\top} +\kappa^{2}\Lambda M \Lambda  + \Lambda \overline{C}_{\psi} \Lambda  - (D+ D^{\top}) + \frac{2\Lambda}{\beta} , 
		\end{aligned}
	\end{align}
	where $D = \kappa \left( I - \Lambda \hess \right) \mathbb{E} \left[ \left( \psi_{t-1} -\MLE \right)m_{t-1}^{\top}\right] \Lambda$, and $M = \mathbb{E} \left[ m_{t-1} m_{t-1}^{\top}\right]$. 
	
	The rest of proof mainly follows the proof of \citet[Theorem 3]{liu2021noise}. 
	According to \cref{eq: original SGD with momentum}, we have $\Lambda m_{t} = \psi_{t-1} - \psi_{t}$, so
	\begin{align}
		\begin{aligned}
			\Lambda M \Lambda 
			& = \mathbb{E} \left[ \left( \psi_{t-1}-\MLE-\psi_{t-2} + \MLE -\sqrt{2\beta^{-1} \Lambda}\,\xi_{t-1}\right) \left( \psi_{t-1}-\MLE-\psi_{t-2}+\MLE\right)^{\top} -\sqrt{2\beta^{-1} \Lambda}\,\xi_{t-1}\right] \\
			& = 2 \Sigma_{\psi} - \mathbb{E} \left[ \left(\psi_{t-1}-\MLE\right) \left(\psi_{t-2}^{\top}-\MLE\right)\right]- \mathbb{E} \left[ \left(\psi_{t-2}-\MLE\right) \left(\psi_{t-1}^{\top}-\MLE\right)\right] + \frac{2\Lambda}{\beta}.
		\end{aligned}
	\end{align}
	and 
	\begin{align}
		\begin{aligned}
			D 
			&= \kappa \left( I - \Lambda \hess \right) \mathbb{E} \left[ \left( \psi_{t-1} -\MLE \right)m_{t-1}^{\top}\right] \Lambda \\
			& = \kappa \left( I - \Lambda \hess \right) \mathbb{E} \left[ \left( \psi_{t-1} -\MLE \right)\left( \psi_{t-2}-\psi_{t-1}+\sqrt{2\beta^{-1} \Lambda}\,\xi_{t-1}\right)^{\top}\right]\\
			& = \kappa \left( I - \Lambda \hess \right) \left( \mathbb{E} \left[ \left(\psi_{t-1}-\MLE\right) \left(\psi_{t-2}-\MLE\right)^{\top}\right]  -\Sigma_{\psi}\right).
		\end{aligned}
	\end{align}
	Note that 
	\begin{align}
		\label{eq: equation 1}
		\begin{aligned}
			\mathbb{E} \left[ \left(\psi_{t-1}-\MLE\right) \left(\psi_{t-2}-\MLE\right)^{\top}\right] 
			& = \mathbb{E} \left[ \left(\psi_{t}-\MLE\right) \left(\psi_{t-1}-\MLE\right)^{\top} \right]\\
			& = \mathbb{E} \left[ \left(\left( I- \Lambda \hess \right)\left(\psi_{t-1}-\MLE\right) - \Lambda \kappa m_{t-1} - \Lambda \eta_{t-1}+\sqrt{2\beta^{-1} \Lambda}\,\xi_{t-1}\right) \left(\psi_{t-1}-\MLE\right)^{\top} \right]\\
			& = \left( I- \Lambda \hess \right) \Sigma_{\psi}-\Lambda \kappa \mathbb{E} \left[ m_{t-1} \left(\psi_{t-1}-\MLE\right)^{\top} \right]\\
			& = \left( I- \Lambda \hess \right) \Sigma_{\psi}-\kappa \mathbb{E} \left[ \left( \psi_{t-2}-\psi_{t-1}\right) \left(\psi_{t-1}-\MLE\right)^{\top} \right]\\
			& =  \left( I- \Lambda \hess \right) \Sigma_{\psi} + \kappa \Sigma_{\psi} - \kappa \mathbb{E}\left[ \left( \psi_{t-2}-\MLE\right) \left(\psi_{t-1}-\MLE\right)^{\top}  \right].
		\end{aligned}
	\end{align}
    Solving, we obtain 
    \begin{align}
        \mathbb{E} \left[ \left(\psi_{t-1}-\MLE\right) \left(\psi_{t-2}-\MLE\right)^{\top}\right]  &= \frac{1}{1 + \kappa} \left[\left( I- \Lambda \hess \right) \Sigma_{\psi} + \kappa \Sigma_{\psi} \right].
    \end{align}

	so we conclude that
    \begin{align}
		(1-\kappa)(\Lambda \hess \Sigma+\Sigma \hess \Lambda)+\frac{\kappa}{1-\kappa^2}(\Lambda \hess \Lambda \hess \Sigma+\Sigma \hess \Lambda \hess \Lambda)=\Lambda  \overline{C}_{\psi} \Lambda + \frac{1+\kappa^2}{1-\kappa^2} \Lambda \hess \Sigma \hess \Lambda + (1+\kappa^{2}) \frac{2\Lambda}{\beta}.
	\end{align}
	
\end{proof}

\subsection{Proof of \cref{thm:wasserstein-error-bound-momentum}}\label{app: momentum}
Since there exists a coupling of $\theta_0\dist\nu$ and $\psi_0\dist\nu'$ with
$W_2^2(\nu,\nu')=\E\|\theta_0-\psi_0\|^2$, we take $(\theta_0,\psi_0)$ from this joint
distribution, and initialize the momenta at $m_0=\nu_0=0$. The proof proceeds in three steps: first, a coupled $2\times2$ linear
recursion for $(\E\|\theta_t-\psi_t\|^2,\E\|m_t-\nu_t\|^2)$ with coefficient matrix $A$;
second, a weighted Lyapunov function $V_t=\E\|\theta_t-\psi_t\|^2+\gamma\E\|m_t-\nu_t\|^2$
whose optimal weight $\gamma^\star$ yields the sharpest contraction rate $\bar\beta=\rho(A)$;
third, iteration of the resulting one-step recursion.

By $L$-smoothness of $\nabla\ell_n$ and $\|a+b\|^2\le(1+c)\|a\|^2+(1+1/c)\|b\|^2$ with
$c=(1-\kappa)/\kappa$,
\begin{align}
\E_{t-1}\|m_t-\nu_t\|^2
&=\E_{t-1}\big\|\kappa(m_{t-1}-\nu_{t-1})+G_t(\theta_{t-1})-G_t(\MLE)-\grad G_t(\MLE)(\psi_{t-1}-\MLE)\big\|^2\nonumber\\
&\le\kappa\|m_{t-1}-\nu_{t-1}\|^2
+\frac{1}{1-\kappa}\E_{t-1}\big\|G_t(\theta_{t-1})-G_t(\MLE)-\grad G_t(\MLE)(\psi_{t-1}-\MLE)\big\|^2\nonumber\\
&\le\kappa\|m_{t-1}-\nu_{t-1}\|^2
+\frac{1}{1-\kappa}\Big(2L^2\|\theta_{t-1}-\psi_{t-1}\|^2+\frac{\overline{M^2}}{2}\|\psi_{t-1}-\MLE\|^4\Big).
\end{align}
Similarly, expanding $\E_{t-1}\|\theta_t-\psi_t\|^2$ and bounding the inner-product term via
Taylor's theorem and $2ab\le\mu a^2+b^2/\mu$,
\begin{align}
\E_{t-1}\|\theta_t-\psi_t\|^2
&\le\Big(1-\lambda\mu+\lambda\kappa+\frac{2\lambda^2L^2}{1-\kappa}\Big)\|\theta_{t-1}-\psi_{t-1}\|^2
+\lambda\kappa(1+\lambda)\|m_{t-1}-\nu_{t-1}\|^2\nonumber\\
&\quad+\Big(\frac{\lambda^2\overline{M^2}}{2(1-\kappa)}+\frac{\lambda\overline M^2}{4\mu}\Big)\|\psi_{t-1}-\MLE\|^4.
\end{align}
Taking full expectation, this yields the linear recursion
\begin{equation}\label{eq:matrix-A-thm}
\begin{pmatrix}\E\|\theta_t-\psi_t\|^2\\ \E\|m_t-\nu_t\|^2\end{pmatrix}
\le
\underbrace{
\begin{pmatrix}
1-\lambda\mu+\lambda\kappa+2\lambda^2L^2/(1-\kappa)& \lambda\kappa(1+\lambda)\\
2L^2/(1-\kappa) & \kappa
\end{pmatrix}}_{\displaystyle A\defas(a_{ij})_{2\times2}}
\begin{pmatrix}\E\|\theta_{t-1}-\psi_{t-1}\|^2\\ \E\|m_{t-1}-\nu_{t-1}\|^2\end{pmatrix}
+\begin{pmatrix}\mathcal A\\ \mathcal B\end{pmatrix}C_{t-1},
\end{equation}
where $\mathcal A=\dfrac{\lambda^2}{2(1-\kappa)}\overline{M^2}+\dfrac{\lambda}{4\mu}\overline M^2$
and $\mathcal B=\dfrac{\overline{M^2}}{2(1-\kappa)}$.

For $\gamma>0$, let $V_t:=\E\|\theta_t-\psi_t\|^2+\gamma\E\|m_t-\nu_t\|^2$. From
\cref{eq:matrix-A-thm},
\begin{align}
V_t
&=(a_{11}+\gamma a_{21})\E\|\theta_{t-1}-\psi_{t-1}\|^2
+(a_{12}+\gamma a_{22})\E\|m_{t-1}-\nu_{t-1}\|^2+(\mathcal A+\gamma\mathcal B)C_{t-1}\nonumber\\
&\le\beta(\gamma)\,V_{t-1}+(\mathcal A+\gamma\mathcal B)\,C_{t-1},
\end{align}
where $\beta(\gamma):=\max\{a_{11}+\gamma a_{21},\ a_{22}+a_{12}/\gamma\}.
$

Minimizing $\beta(\gamma)$ over $\gamma>0$ at
$\gamma^\star=\big[(a_{22}-a_{11})+\sqrt{(a_{11}-a_{22})^2+4a_{12}a_{21}}\big]/(2a_{21})>0$
gives 
\begin{equation}\label{eq:beta-bar-def}
\bar\beta:=\frac{1-\lambda\mu+\tfrac{2\lambda^2L^2}{1-\kappa}+(1+\lambda)\kappa+\sqrt{X^2+Y}}{2},
\end{equation}
where $X:=1-\lambda\mu+\tfrac{2\lambda^2L^2}{1-\kappa}-(1-\lambda)\kappa$ and
$Y:=\frac{8L^2\lambda\kappa(1+\lambda)}{1-\kappa}$, and the source coefficient
\begin{equation}\label{eq:definition-of-P}
\mathcal P:=\mathcal A+\gamma^\star\mathcal B
=\frac{\lambda^2}{2(1-\kappa)}\overline{M^2}+\frac{\lambda}{4\mu}\overline M^2
+\frac{\overline{M^2}}{8L^2}\big(\sqrt{X^2+Y}-X\big).
\end{equation}
Since $\E\|\theta_t-\psi_t\|^2\le V_t$ and $V_t\le\bar\beta V_{t-1}+\mathcal P\,C_{t-1}$,
iterating and using $V_0=\E\|\theta_0-\psi_0\|^2$ (as $m_0=\nu_0=0$) gives
\begin{equation}\label{eq:thm-iterate}
\E\|\theta_t-\psi_t\|^2
\le\bar\beta^{\,t}\,\E\|\theta_0-\psi_0\|^2+\mathcal P\sum_{s=1}^{t}\bar\beta^{\,t-s}C_{s-1}.
\end{equation}

\subsection{Proof of \cref{prop:iterate-average-momentum}}
According to \cref{sec: Proof of prop: mixing time}, assume that the SG-MCMC iterates have reached stationarity. 

Define the lag-$k$ autocovariance
\begin{align}
\Xi_k \defas 
\mathbb{E}_{\pi_\psi}\!\left[
(\psi_{t+k}-\MLE)(\psi_t-\MLE)^\top
\right].
\end{align}
Under stationarity and the linearized dynamics, this autocovariance satisfies
\begin{align}
\Xi_k
=( I - \Lambda \hess )^k \Xi_0
=( I - \Lambda \hess )^k \Sigma_\psi ,
\end{align}
where $\Sigma_\psi \defas \Xi_0$ denotes the stationary covariance.

Next, we approximate the stationary covariance of the averaged iterate
\begin{align}
\bar{\psi}_{k} \defas \frac{1}{k} \sum_{k^{\prime}=1}^{k} \psi_{k^{\prime}} .
\end{align}
Assuming stationarity, its covariance can be computed as

\begin{align}
\Sigma_{\psi}^{(k)}
&\defas 
\mathbb{E}
\left[
(\bar{\psi}_{k}-\MLE)(\bar{\psi}_{k}-\MLE)^\top
\right]  \\
&= \frac{1}{k^{2}}
\mathbb{E}
\left[
\left( \sum_{k^{\prime}=1}^{k} (\psi_{k^{\prime}}-\MLE) \right)
\left( \sum_{k^{\prime\prime}=1}^{k} (\psi_{k^{\prime\prime}}-\MLE) \right)^{\top}
\right] \\
&= \frac{1}{k^{2}}
\left(
k \Sigma_{\psi}
+
2 \sum_{k^{\prime}=1}^{k-1} \Xi_{k^{\prime}}
\right) \\
&= \frac{1}{k^{2}}
\left(
k \Sigma_{\psi}
+
2 \sum_{k^{\prime}=1}^{k-1}
( I - \Lambda \hess )^{k^{\prime}} \Sigma_{\psi}
\right).
\end{align}

\subsection{Proof of \cref{cor:stationary-wasserstein-error-bound-momentum}}
\begin{lemma}\label{lem:fourth-moment-compact}
Under the conditions of \cref{cor:stationary-wasserstein-error-bound-momentum}, let
$\hat\mu,\hat L$ denote the smallest and largest eigenvalues of $\hess=\grad^2\loss(\MLE)$, and
let $\underline{\mathsf U},\underline{\mathsf V}$ be constants such that
$\E\|\psi_t-\MLE\|^2\le\underline{\mathsf U},
\E\|\nu_t\|^2\le\underline{\mathsf V}.$ For any step size
$\lambda\le\min\{1/\hat L,\,1/(4\hat\mu),\,B\hat\mu/(CL^2),\,(\hat\mu/C)^{1/2}\}$
 and momentum $\kappa$ as in
\cref{cor:stationary-wasserstein-error-bound-momentum}, 
there exists $C>0$ such that, whenever $M=(m_{ij})$ below satisfies $\rho(M)<1$,
the stationary iterate $\psi_\infty\dist\pi_\psi$ of the linearized proxy with momentum
\cref{eq: proxy SGD with momentum} satisfies
\begin{align}
\E\|\psi_\infty-\MLE\|^4 \le
\frac{(1-m_{22})\,\mathcal A+m_{12}\,\mathcal B}{(1-m_{11})(1-m_{22})-m_{12}m_{21}},
\end{align}
where
\begin{align}
m_{11}&\defas(1+\lambda\hat\mu)(1-\lambda\hat\mu)^4
  +C\Big\{\tfrac{\lambda^2L^2}{B}
          +\hat\mu^{-1}\tfrac{\lambda^3\kappa^2L^2}{B}
          +\tfrac{\lambda^4L^4}{B^2}\Big\},
  \label{eq:m11-def}\\[2pt]
m_{12}&\defas C\Big\{\hat\mu^{-3}\lambda\kappa^4
          +\hat\mu^{-1}\tfrac{\lambda^3\kappa^2L^2}{B}\Big\},
  \label{eq:m12-def}\\[2pt]
m_{21}&\defas C\Big\{\hat L^4+\kappa^2\hat L^2
          +\tfrac{\hat L^2L^2}{B}
          +\tfrac{L^4}{B^2}\Big\},
  \label{eq:m21-def}\\[2pt]
m_{22}&\defas C\Big\{\kappa^4+\kappa^2\hat L^2\Big\},
  \label{eq:m22-def}\\[4pt]
\mathcal A&\defas
  C\Bigl(\tfrac{\lambda^4\tau_4^4}{B^2}
        +\tfrac{\lambda^2(D^2+2D)}{\beta^2}\Bigr)+C\Bigl(\tfrac{\lambda^2\tau_4^2}{B}+\tfrac{\lambda D}{\beta}\Bigr)
   \Bigl[(1+\lambda\hat\mu)(1-\lambda\hat\mu)^2\,\underline{\mathsf U}
        +\bigl(1+(\lambda\hat\mu)^{-1}\bigr)\lambda^2\kappa^2\,
         \underline{\mathsf V}\Bigr],
  \label{eq:A-def}\\[2pt]
    \mathcal B&\defas
  C\tfrac{\tau_4^4}{B^2}
  +C\tfrac{\tau_4^2}{B}
   \bigl(\hat L^2\,\underline{\mathsf U}+\kappa^2\,\underline{\mathsf V}\bigr)
  +C\tfrac{\kappa^2L^2}{B}\,\underline{\mathsf V}.
  \label{eq:B-def}
\end{align}
\end{lemma}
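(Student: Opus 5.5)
The plan is to mimic the proof of \cref{lem:fourth-moment-bound}, but now track the pair $(U_t,V_t)\defas(\E\|\psi_t-\MLE\|^4,\E\|\nu_t\|^4)$ and derive a coupled linear recursion $(U_t,V_t)^\top\le M(U_{t-1},V_{t-1})^\top+(\mathcal A,\mathcal B)^\top$. The stated bound is then the first coordinate of $(I-M)^{-1}(\mathcal A,\mathcal B)^\top$, computed by Cramer's rule.

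First, rewrite the proxy \cref{eq: proxy SGD with momentum} as in the proof of \cref{Proposition: stationary covariance analysis for SGLD with momentum}:
\begin{align}
\psi_t-\MLE=(I-\lambda\hess)(\psi_{t-1}-\MLE)-\lambda\kappa\nu_{t-1}-\lambda\eta_{t-1}+\sqrt{2\beta^{-1}\lambda}\,\xi_{t-1},
\end{align}
where $\eta_{t-1}=G_t(\MLE)+(\nabla G_t(\MLE)-\hess)(\psi_{t-1}-\MLE)$. We have $\E_{t-1}\eta_{t-1}=0$. The conditional second and fourth moment bounds \cref{eq:eta-second,eq:eta-fourth} carry over verbatim, since they did not use the form of the drift. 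Similarly, $\nu_t=\kappa\nu_{t-1}+\hess(\psi_{t-1}-\MLE)+\eta_{t-1}$.

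For the $\psi$ row:
\begin{itemize}
\item Set $a=(I-\lambda\hess)(\psi_{t-1}-\MLE)-\lambda\kappa\nu_{t-1}$ (deterministic given $\mathcal F_{t-1}$) and $w=-\lambda\eta_{t-1}+\sqrt{2\beta^{-1}\lambda}\,\xi_{t-1}$ (conditionally mean zero).
\item Expand $\|a+w\|^4$ as in \cref{eq:fourth-expand} to get $\E_{t-1}\|a+w\|^4\le\|a\|^4+8\|a\|^2\E_{t-1}\|w\|^2+3\E_{t-1}\|w\|^4$.
\item Bound $\|a\|^4$ with Young's inequality at weight $\lambda\hat\mu$: $\|a\|^2\le(1+\lambda\hat\mu)(1-\lambda\hat\mu)^2\|\psi_{t-1}-\MLE\|^2+(1+(\lambda\hat\mu)^{-1})\lambda^2\kappa^2\|\nu_{t-1}\|^2$. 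Square this, and split the cross product again by Young.
\end{itemize}
This produces the leading coefficient $(1+\lambda\hat\mu)(1-\lambda\hat\mu)^4$ in $m_{11}$, and a $\nu$-coefficient of order $\hat\mu^{-3}\lambda\kappa^4$ in $m_{12}$ (using $\lambda\le1$).

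The terms $\|a\|^2\E_{t-1}\|w\|^2$ and $\E_{t-1}\|w\|^4$ are handled in two ways:
\begin{itemize}
\item The parts quadratic in $\|\psi_{t-1}-\MLE\|^2$ produce the $\lambda^2L^2/B$, $\lambda^3\kappa^2L^2/(\hat\mu B)$ and $\lambda^4L^4/B^2$ contributions.
\item The parts involving only a second moment are replaced by the a priori bounds $\underline{\mathsf U},\underline{\mathsf V}$, giving $\mathcal A$. Gaussian moments $\E\|\xi\|^4=D(D+2)$ enter here.
\end{itemize}

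For the $\nu$ row, use $\|x+y+z\|^4\le 27(\|x\|^4+\|y\|^4+\|z\|^4)$ together with a finer expansion. The terms $\hess u$ and $\kappa\nu$ are deterministic and $\eta$ is mean zero, so the expansion gives:
\begin{itemize}
\item $\hat L^4$ and $\kappa^4$ terms;
\item cross terms $\kappa^2\hat L^2$;
\item $\eta$-dependent terms via \cref{eq:eta-second,eq:eta-fourth}, giving $\hat L^2L^2/B$, $L^4/B^2$ and, through the second moments, $\mathcal B$.
\end{itemize}
All absolute numerical constants are absorbed into the single $C$.

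Finally, suppose $\rho(M)<1$. Since $M$ is entrywise nonnegative, $(I-M)^{-1}\ge0$ entrywise. Iterate the vector inequality and pass to stationarity, assuming finite fourth moments; that these exist also follows from $\rho(M)<1$ applied from a finite start. This yields $U_\infty\le[(1-m_{22})\mathcal A+m_{12}\mathcal B]/\det(I-M)$.

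The main obstacle is the $\psi$ row. To keep $m_{11}$ close to $(1-\lambda\hat\mu)^4$ rather than losing a constant factor, the Young weights must be chosen carefully so that $1-m_{11}\gtrsim\lambda\hat\mu$ under the step-size conditions. I would choose the weight $\lambda\hat\mu$ first and verify that the $\kappa=c_\kappa\lambda$ scaling makes the $\kappa$-terms lower order.

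The existence of $\underline{\mathsf U},\underline{\mathsf V}$ is assumed in the statement. If needed, they can be obtained from the analogous second-moment $2\times2$ recursion, as in \cref{eq:Vinf-schemeB}.
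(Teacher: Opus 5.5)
Your proposal follows essentially the same route as the paper: the same rewriting with $\eta_{t-1}$, the bound $\E\|u+w\|^4\le\|u\|^4+8\|u\|^2\E\|w\|^2+3\E\|w\|^4$ (conditionally deterministic $u$, mean-zero $w$), the a priori bounds $\underline{\mathsf U},\underline{\mathsf V}$ (which the paper derives from the second-moment $2\times2$ recursion) feeding $\mathcal A$ and $\mathcal B$, and inversion of $I-M$ by Cramer's rule. One small correction: squaring the second-moment Young bound at weight $\lambda\hat\mu$ gives a leading coefficient of $(1+\lambda\hat\mu)^2(1-\lambda\hat\mu)^4$, plus more once the cross term is split, not the stated $(1+\lambda\hat\mu)(1-\lambda\hat\mu)^4$; to obtain the stated $m_{11}$, either apply the quartic form $\|x+y\|^4\le(1+\epsilon)\|x\|^4+C\epsilon^{-3}\|y\|^4$ with $\epsilon=\lambda\hat\mu$ directly to $\|u\|^4$, as the paper does, or use weight $\lambda\hat\mu/4$ in both Young steps.
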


\begin{proof}   
The true linearized proxy with momentum at $\theta=\MLE$ can be written as
\begin{equation}\label{eq:schemeB-mom}
\left\{
\begin{array}{l}
\nu_t=\kappa\nu_{t-1}+\hess(\psi_{t-1}-\MLE)+\eta_{t-1},\\
\psi_t-\MLE=(I-\lambda\hess)(\psi_{t-1}-\MLE)-\lambda\kappa\nu_{t-1}-\lambda\eta_{t-1}
+\sqrt{2\beta^{-1}\lambda}\,\xi_{t-1},
\end{array}\right.
\end{equation}
where $\eta_{t-1}
:=
G_t(\MLE)+\nabla G_t(\MLE)(\psi_{t-1}-\MLE)-\hess(\psi_{t-1}-\MLE)$ and $\xi_{t-1}\sim\mathcal N(0,I_D)$.

Following the 
proof of \cref{thm:wasserstein-error-bound} and $\E_{t-1}[\eta_{t-1}]=0$, we have
\begin{align}
\E_{t-1}\|\eta_{t-1}\|^2 &\le \frac{2\tau_4^2}{B}+\frac{8L^2}{B}\|\psi_{t-1}-\MLE\|^2, \label{eq:eta2-mom}\\
\E_{t-1}\|\eta_{t-1}\|^4 &\le \frac{24\tau_4^4}{B^2}+\frac{384L^4}{B^2}\|\psi_{t-1}-\MLE\|^4. \label{eq:eta4-mom}
\end{align}
Let $e_t:=\psi_t-\MLE$, \cref{eq:schemeB-mom} can be written as 
\begin{align}
e_t=(I-\lambda\hess)e_{t-1}-\lambda\kappa\nu_{t-1}-\lambda\eta_{t-1}
+\sqrt{2\beta^{-1}\lambda}\,\xi_{t-1}.
\end{align}
Then
\begin{align}
\E_{t-1}\|e_t\|^2
&=
\|(I-\lambda \hess)e_{t-1}-\lambda\kappa\nu_{t-1}\|^2
+\lambda^2\E_{t-1}\|\eta_{t-1}\|^2
+2\beta^{-1}\lambda D .
\end{align}

Under the stepsize condition
$\|I-\lambda\hess\|_2\le 1-\lambda\hat\mu$ and by $\|x-y\|^2
\le (1+a)\|x\|^2 + (1+a^{-1})\|y\|^2$ ($a>0$), we can obtain
\begin{align}
\E_{t-1}\|e_t\|^2
&\le
(1+\lambda\hat\mu)(1-\lambda\hat\mu)^2\|e_{t-1}\|^2
+
\bigl(1+(\lambda\hat\mu)^{-1}\bigr)\lambda^2\kappa^2\|\nu_{t-1}\|^2
\nonumber\\
&\qquad
+\lambda^2\E_{t-1}\|\eta_{t-1}\|^2
+2\beta^{-1}\lambda D .
\end{align}
By \cref{eq:eta2-mom}, 
\begin{align}
\E_{t-1}\|e_t\|^2
&\le
\left\{
(1+\lambda\hat\mu)(1-\lambda\hat\mu)^2
+\frac{8\lambda^2L^2}{B}
\right\}\|e_{t-1}\|^2
\nonumber\\
&\qquad
+
\bigl(1+(\lambda\hat\mu)^{-1}\bigr)\lambda^2\kappa^2\|\nu_{t-1}\|^2
+\frac{2\lambda^2\tau_4^2}{B}
+2\beta^{-1}\lambda D .
\label{eq:psi2-rec-mom}
\end{align}
Similarly, using $\|x+y\|^2 \le 2\|x\|^2 + 2\|y\|^2$ yields
\begin{align}
\E_{t-1}\|\nu_t\|^2 
&= \|\kappa\nu_{t-1} + \hess\,e_{t-1}\|^2 + \E_{t-1}\|\eta_{t-1}\|^2 \nonumber\\
&\le \Big(2\hat L^2 + 8L^2/B\Big)\|e_{t-1}\|^2 + 2\kappa^2\|\nu_{t-1}\|^2 + 2\tau_4^2/B.
\label{eq:nu2-rec-mom}
\end{align}

Taking expectations in \cref{eq:psi2-rec-mom,eq:nu2-rec-mom} gives the linear recursion
\begin{equation}\label{eq:matrix-A-mom}
\binom{\E\|e_t\|^2}{\E\|\nu_t\|^2}
\le
\underbrace{
\begin{pmatrix}
(1+\lambda\hat\mu)(1-\lambda\hat\mu)^2+\frac{8\lambda^2L^2}{B}
&
\bigl(1+(\lambda\hat\mu)^{-1}\bigr)\lambda^2\kappa^2
\\
2\hat L^2+\frac{8L^2}{B}
&
2\kappa^2
\end{pmatrix}}_{\displaystyle A\defas(a_{ij})_{2\times2}}
\binom{\E\|e_{t-1}\|^2}{\E\|\nu_{t-1}\|^2}
+
\binom{c_e}{c_\nu},
\end{equation}
where
\[
c_e \defas \frac{2\lambda^2\tau_4^2}{B}+2\beta^{-1}\lambda D,
\qquad
c_\nu \defas \frac{2\tau_4^2}{B}.
\]
Under the step-size and momentum restrictions, we bound the entries of $A$.

Observe that
\[
a_{11}=(1+\lambda\hat\mu)(1-\lambda\hat\mu)^2+\frac{8\lambda^2L^2}{B},
\]
and let $s:=\lambda\hat\mu\le \frac14$, we have
\[
(1+s)(1-s)^2=1-s-s^2+s^3\le 1-s.
\]
Then
\[
1-a_{11}
\ge \lambda\hat\mu-\frac{8\lambda^2L^2}{B}.
\]
Thus, by choosing the universal constant in the stepsize condition
$
\lambda\le \frac{B\hat\mu}{C L^2}
$
large enough, we have
$
1-a_{11}\ge \frac12\lambda\hat\mu .
$

Note that $a_{22}=2\kappa^2$ and $\kappa\le 1/2$,
\[
1-a_{22}=1-2\kappa^2\ge \frac12 .
\]

It remains to control the off-diagonal product: 
\[a_{12}=(1+(\lambda\hat\mu)^{-1})\lambda^2\kappa^2\le C\lambda^3 , \]
and 
\[a_{21}=2\hat L^2+8L^2/B\le C .\]
For $\lambda\le(\hat\mu/C)^{1/2}$
\[
a_{12}a_{21}\le C\lambda^3\le\tfrac18\lambda\hat\mu.
\]
Thus,
\[
\Delta_2
:=
(1-a_{11})(1-a_{22})-a_{12}a_{21}
\ge
\frac12\lambda\hat\mu\cdot \frac12
-\frac18\lambda\hat\mu
=
\frac18\lambda\hat\mu
>0 .
\]
Together with $a_{11},a_{22}<1$ and $A$ entrywise nonnegative, it implies $\rho(A)<1$.

Starting from $e_0=0$ and $\nu_0=0$, the recursion gives, for every $t\ge 1$,
\[
\binom{\E\|e_t\|^2}{\E\|\nu_t\|^2}
\le
\sum_{j=0}^{t-1} A^j
\binom{c_e}{c_\nu}.
\]
Since $\rho(A)<1$, the geometric series is bounded entrywise by
\[
\sum_{j=0}^{t-1}A^j
\le
\sum_{j=0}^{\infty}A^j
=
(I-A)^{-1}.
\]
Hence
\[
\binom{\E\|e_t\|^2}{\E\|\nu_t\|^2}
\le
(I-A)^{-1}
\binom{c_e}{c_\nu}.
\]
Writing
\[
I-A=
\begin{pmatrix}
1-a_{11} & -a_{12}\\
-a_{21} & 1-a_{22}
\end{pmatrix},
\]
Cramer's rule yields
\[
(I-A)^{-1}
\binom{c_e}{c_\nu}
=
\frac1{\Delta_2}
\binom{
(1-a_{22})c_e+a_{12}c_\nu
}{
a_{21}c_e+(1-a_{11})c_\nu
}.
\]
Therefore, with
\begin{equation}\label{eq:UV-def-mom}
\underline{\mathsf U}
=
\frac{(1-a_{22})\,c_e+a_{12}\,c_\nu}{\Delta_2},
\qquad
\underline{\mathsf V}
=
\frac{a_{21}\,c_e+(1-a_{11})\,c_\nu}{\Delta_2},
\end{equation}
we have, uniformly for all $t\ge 1$,
\[
\E\|\psi_t-\MLE\|^2\le \underline{\mathsf U},
\qquad
\E\|\nu_t\|^2\le \underline{\mathsf V}.
\]
Thus constants as in the statement exist.

If $u$ is deterministic and $v$ is a mean-zero random vector, then
\begin{equation}\label{eq:uplusv}
\E\|u+v\|^4
\le
\|u\|^4
+8\|u\|^2\,\E\|v\|^2
+3\,\E\|v\|^4 .
\end{equation}

From \cref{eq:schemeB-mom}, write
\[
e_t=u_{t-1}+w_{t-1},
\]
where
\[
u_{t-1}:=(I-\lambda\hess)e_{t-1}-\lambda\kappa\nu_{t-1},
\qquad
w_{t-1}:=-\lambda\eta_{t-1}
+\sqrt{2\beta^{-1}\lambda}\,\xi_{t-1}.
\]
Conditioning on the $\mathcal{F}_{t-1}$, $u_{t-1}$ is deterministic. Moreover, since $\E_{t-1}[\eta_{t-1}]=0$ and $\E[\xi_{t-1}]=0$, we have
\[
\E_{t-1}[w_{t-1}]=0 .
\]
Applying \cref{eq:uplusv} therefore gives
\begin{equation}\label{eq:psi4-start}
\E_{t-1}\|e_t\|^4
\le
\|u_{t-1}\|^4
+8\|u_{t-1}\|^2\,\E_{t-1}\|w_{t-1}\|^2
+3\,\E_{t-1}\|w_{t-1}\|^4 .
\end{equation}

For the leading term, we apply the weighted Young inequality: for any $\epsilon\in(0,1)$,
\[
\|x+y\|^4
\le
(1+\epsilon)\|x\|^4
+C\epsilon^{-3}\|y\|^4 .
\]
Taking $\epsilon=\lambda\hat\mu$, which is admissible under $\lambda\hat\mu\le 1/4$, and using
\[
x=(I-\lambda\hess)e_{t-1},
\qquad
y=-\lambda\kappa\nu_{t-1},
\]
we obtain
\[
\|u_{t-1}\|^4
\le
(1+\lambda\hat\mu)\|(I-\lambda\hess)e_{t-1}\|^4
+
C(\lambda\hat\mu)^{-3}\lambda^4\kappa^4\|\nu_{t-1}\|^4 .
\]
Note that $\|(I-\lambda\hess)e_{t-1}\|\le (1-\lambda\hat\mu)\|e_{t-1}\|$, we have
\begin{equation}\label{eq:u4-bound}
\|u_{t-1}\|^4
\le
(1+\lambda\hat\mu)(1-\lambda\hat\mu)^4\|e_{t-1}\|^4
+
C\hat\mu^{-3}\lambda\kappa^4\|\nu_{t-1}\|^4 .
\end{equation}

Since $\xi_{t-1}$ is independent of $(\mathcal F_{t-1},\eta_{t-1})$ and mean zero, the cross term vanishes, and using \cref{eq:eta2-mom},
\begin{align}
\E_{t-1}\|w_{t-1}\|^2
&=
\lambda^2\E_{t-1}\|\eta_{t-1}\|^2+2\beta^{-1}\lambda D \\
&\le
\lambda^2\Big(2\tau_4^2/B+8L^2/B\,\|e_{t-1}\|^2\Big)
+2\beta^{-1}\lambda D.
\end{align}
Using $\|x+y\|^4\le 8(\|x\|^4+\|y\|^4)$, $\E\|\xi_{t-1}\|^4=D^2+2D$ and \cref{eq:eta4-mom},
\begin{align}
\E_{t-1}\|w_{t-1}\|^4
&\le
8\lambda^4\E_{t-1}\|\eta_{t-1}\|^4
+8(2\beta^{-1}\lambda)^2\E\|\xi_{t-1}\|^4 \nonumber\\
&\le
8\lambda^4\Big(24\tau_4^4/B^2+384L^4/B^2\|e_{t-1}\|^4\Big)
+32\beta^{-2}\lambda^2(D^2+2D).
\label{eq:w4-bound}
\end{align}

From the inequality $\|x-y\|^2\le (1+\lambda\hat\mu)\|x\|^2+(1+(\lambda\hat\mu)^{-1})\|y\|^2$, we obtain
\begin{align}
\label{eq:bound of mu}
\|u_{t-1}\|^2
\le
(1+\lambda\hat\mu)(1-\lambda\hat\mu)^2\|e_{t-1}\|^2
+
\bigl(1+(\lambda\hat\mu)^{-1}\bigr)\lambda^2\kappa^2\|\nu_{t-1}\|^2.
\end{align}
By \cref{eq:bound of mu} and $\|e\|^2\|\nu\|^2\le \tfrac12(\|e\|^4+\|\nu\|^4)$, it follows that
\begin{align}
8\|u_{t-1}\|^2\E_{t-1}\|w_{t-1}\|^2
&\le
C\left[
(1+\lambda\hat\mu)(1-\lambda\hat\mu)^2\frac{\lambda^2L^2}{B}
+
(1+(\lambda\hat\mu)^{-1})\frac{\lambda^4\kappa^2L^2}{B}
\right]\|e_{t-1}\|^4\\
&\quad+
C\left(\lambda^2\tau_4^2/B+\lambda D/\beta\right)
(1+\lambda\hat\mu)(1-\lambda\hat\mu)^2\|e_{t-1}\|^2 \\
&\quad+
C(1+(\lambda\hat\mu)^{-1})\frac{\lambda^4\kappa^2L^2}{B}\|\nu_{t-1}\|^4
\\
&\quad+C
\left(\lambda^2\tau_4^2/B+\lambda D/\beta\right)
(1+(\lambda\hat\mu)^{-1})\lambda^2\kappa^2\|\nu_{t-1}\|^2
.\label{eq:mixed-term-psi}
\end{align}

Combining \cref{eq:psi4-start,eq:u4-bound,eq:w4-bound,eq:mixed-term-psi,eq:UV-def-mom}, then taking full expectation, 
\begin{align}
\E\|e_t\|^4
\le
m_{11}\E\|e_{t-1}\|^4
+m_{12}\E\|\nu_{t-1}\|^4
+\mathcal A.
\end{align}
From \cref{eq:schemeB-mom}, $\nu_t = b_{t-1}+\eta_{t-1}$ with
$b_{t-1}:=\kappa\nu_{t-1}+\hess e_{t-1}$ and $\E_{t-1}[\eta_{t-1}]=0$.
Applying \cref{eq:uplusv} gives
\begin{align}
\E_{t-1}\|\nu_t\|^4
\le
\|b_{t-1}\|^4+8\|b_{t-1}\|^2\E_{t-1}\|\eta_{t-1}\|^2+3\E_{t-1}\|\eta_{t-1}\|^4.
\end{align}
Using $\|a+b\|^2\le 2\|a\|^2+2\|b\|^2$, $\|\hess\|_2=\hat L$, and $2ab\le a^2+b^2$,
\begin{align}
\|b_{t-1}\|^4
\le
C(\hat L^4+\kappa^2\hat L^2)\|e_{t-1}\|^4
+C(\kappa^4+\kappa^2\hat L^2)\|\nu_{t-1}\|^4.
\end{align}
For the cross term, $\|b_{t-1}\|^2\le C(\hat L^2\|e_{t-1}\|^2+\kappa^2\|\nu_{t-1}\|^2)$ together
with \cref{eq:eta2-mom} gives
\begin{align}
8\|b_{t-1}\|^2\E_{t-1}\|\eta_{t-1}\|^2
&\le
C\Big(\tfrac{\hat L^2L^2}{B}+\tfrac{L^4}{B^2}\Big)\|e_{t-1}\|^4
+C\tfrac{\tau_4^2}{B}\bigl(\hat L^2\|e_{t-1}\|^2+\kappa^2\|\nu_{t-1}\|^2\bigr)
+C\tfrac{\kappa^2L^2}{B}\|\nu_{t-1}\|^2.
\end{align}
Taking full expectation, bounding the second moments $\|e_{t-1}\|^2,\|\nu_{t-1}\|^2$ by
$\underline{\mathsf U},\underline{\mathsf V}$ (so that they contribute to $\mathcal B$), and using
\cref{eq:eta4-mom} for the remaining $\E_{t-1}\|\eta_{t-1}\|^4$ term, yields
\[
\E\|\nu_t\|^4
\le
m_{21}\E\|e_{t-1}\|^4
+m_{22}\E\|\nu_{t-1}\|^4
+\mathcal B.
\]
Thus
\begin{align}
\begin{pmatrix}
\E\|e_t\|^4\\
\E\|\nu_t\|^4
\end{pmatrix}
\le
M
\begin{pmatrix}
\E\|e_{t-1}\|^4\\
\E\|\nu_{t-1}\|^4
\end{pmatrix}
+
\begin{pmatrix}
\mathcal A\\
\mathcal B
\end{pmatrix}.
\end{align}
Since $M$ is entrywise nonnegative and $\rho(M)<1$, $I-M$ is a nonsingular $M$-matrix; hence
\[\Delta_4:=\det(I-M)=(1-m_{11})(1-m_{22})-m_{12}m_{21}>0\] and $(I-M)^{-1}=\sum_{j=0}^\infty M^j$
is entrywise nonnegative. 

As the recursion is a componentwise inequality with nonnegative
$M$, then we have
\begin{equation}
\limsup_{t\to\infty}
\begin{pmatrix}\E\|e_t\|^4\\ \E\|\nu_t\|^4\end{pmatrix}
\le(I-M)^{-1}\begin{pmatrix}\mathcal A\\ \mathcal B\end{pmatrix}.
\end{equation}
Since $\psi_t\dist\pi_\psi$ as $t\to\infty$, the first component on the left-hand side
converges to $\E\|\psi_\infty-\MLE\|^4$,
\begin{equation}
\E\|\psi_\infty-\MLE\|^4 \le \frac{(1-m_{22})\,\mathcal A + m_{12}\,\mathcal B}{(1-m_{11})(1-m_{22})-m_{12}m_{21}}.
\end{equation}
\end{proof}

We now prove \cref{cor:stationary-wasserstein-error-bound-momentum}. Under the notation of
\cref{thm:wasserstein-error-bound-momentum} and \cref{lem:fourth-moment-compact}, combining
the two results gives the master inequality
\begin{equation}\label{eq:S3-master}
W_2^2(\pi_\theta,\pi_\psi)
\le\frac{\mathcal P}{1-\bar\beta}\cdot
\frac{(1-m_{22})\,\mathcal A+m_{12}\,\mathcal B}{(1-m_{11})(1-m_{22})-m_{12}m_{21}}.
\end{equation}
Throughout, $C>0$ denotes a generic constant (depending only on
$\hat\mu,\hat L,L,\mu,c_\kappa$ and possibly changing between displays), while
$c_2,c_3,c_4$ and $C_1$ denote the specific constants fixed in the statement and below.
We bound the two
factors in turn.

For the quantities $X,Y$ of \cref{thm:wasserstein-error-bound-momentum}, the conditions
$\kappa=c_\kappa\lambda\le\tfrac12$ and $\lambda\le\mu/(c_2L^2)$
(whence $\lambda\mu\le\mu^2/(c_2L^2)\le1/c_2$, using $\mu\le L$) give $X\ge\tfrac14$ and
$Y\le32L^2c_\kappa\lambda^2$, hence by $\sqrt{X^2+Y}-X\le Y/(2X)$,
\begin{equation}\label{eq:sqrt-diff}
\sqrt{X^2+Y}-X\le\frac{Y}{2X}\le64L^2c_\kappa\lambda^2.
\end{equation}
Writing $1-\bar\beta=(1-\kappa)-\tfrac12(X+\sqrt{X^2+Y})$ and substituting
$X=1-\lambda\mu+\tfrac{2\lambda^2L^2}{1-\kappa}-(1-\lambda)\kappa$ with $\kappa=c_\kappa\lambda$,
\begin{equation}\label{eq:beta-bar-lower}
1-\bar\beta\ge(1-\kappa)-X-Y
=\lambda\mu-\frac{2\lambda^2L^2}{1-\kappa}-\lambda\kappa-Y
\ge\tfrac14\lambda\mu,
\end{equation}
the last step using $\lambda\le\mu/(c_2L^2)$. 
Substituting \cref{eq:sqrt-diff,eq:beta-bar-lower}
into the definition of $\mathcal P$,
\begin{equation}\label{eq:C1}
\frac{\mathcal P}{1-\bar\beta}
\le\frac{4}{\lambda\mu}\Big[
\frac{\lambda^2}{2(1-\kappa)}\overline{M^2}
+\frac{\lambda}{4\mu}\overline M^2
+\frac{\overline{M^2}}{8L^2}\big(\sqrt{X^2+Y}-X\big)\Big]
\le\frac{\overline M^2}{\mu^2}+\frac{4(1+8c_\kappa)}{\mu}\overline{M^2}\defas C_1.
\end{equation}
Write $S:=\lambda\tau_4^2/B+D/\beta$. Since $c_e=2\lambda S$, $c_\nu=2\tau_4^2/B$,
$a_{21}\le C$, $1-a_{11}\le C\lambda$, and $\Delta_2\ge\tfrac18\lambda\hat\mu$, Cramer's rule gives
\begin{equation}\label{eq:UV-bounds}
\underline{\mathsf U}\le CS,\qquad\underline{\mathsf V}\le CS+\frac{C\tau_4^2}{B}.
\end{equation}

We claim the step-size restrictions imply
\begin{equation}\label{eq:three-key-ineq}
1-m_{11}\ge\lambda\hat\mu,\qquad 1-m_{22}\ge\tfrac12,\qquad m_{12}m_{21}\le\tfrac14\lambda\hat\mu.
\end{equation}
We verify the three inequalities in \cref{eq:three-key-ineq} in turn.

For the first, with $s:=\lambda\hat\mu\le\tfrac14$,
\[
(1+s)(1-s)^4\le1-\tfrac{19}{8}s,
\qquad
C\Big\{\tfrac{\lambda^2L^2}{B}+\hat\mu^{-1}\tfrac{\lambda^3\kappa^2L^2}{B}+\tfrac{\lambda^4L^4}{B^2}\Big\}
\le\tfrac{11}{8}\lambda\hat\mu
\]
under $\lambda\le B\hat\mu/(c_3L^2)$, so subtracting gives $1-m_{11}\ge\lambda\hat\mu$.

For the second, since $\kappa=c_\kappa\lambda\le\tfrac12$,
\[
m_{22}=C\{\kappa^4+\kappa^2\hat L^2\}\le C c_\kappa^2\lambda^2(1+\hat L^2)\le\tfrac12
\]
for $c_\kappa$ and $\lambda$ within the stated ranges, so $1-m_{22}\ge\tfrac12$.

For the third, $m_{12}\le C\lambda^5$ and $m_{21}\le C$ give
\[
m_{12}m_{21}\le C\lambda^5\le\tfrac14\lambda\hat\mu
\qquad\text{whenever }\lambda\le(\hat\mu/c_4)^{1/4}.
\]

Combining \cref{eq:three-key-ineq},
\begin{equation}\label{eq:Delta4-lower}
\Delta_4=(1-m_{11})(1-m_{22})-m_{12}m_{21}
\ge\tfrac12\lambda\hat\mu-\tfrac14\lambda\hat\mu=\tfrac14\lambda\hat\mu>0.
\end{equation}
Since $M$ is entrywise nonnegative with $m_{11},m_{22}<1$ and $\det(I-M)=\Delta_4>0$, it
follows that $\rho(M)<1$.

Using $\Delta_4\ge\tfrac14\lambda\hat\mu$ and $1-m_{22}\le1$,
\begin{equation}\label{ineq:fourth-bound-compact}
\frac{(1-m_{22})\mathcal A+m_{12}\mathcal B}{\Delta_4}
\le C\lambda^{-1}\mathcal A+C\lambda^{-1}m_{12}\mathcal B.
\end{equation}
For the first term, since $(1+(\lambda\hat\mu)^{-1})\lambda^2\kappa^2\le C\lambda^3$,
\cref{lem:fourth-moment-compact} and $\lambda\le1$ give
\begin{equation}\label{eq:A-over-lambda-compact}
\lambda^{-1}\mathcal A
\le C\Big(\tfrac{\lambda^3\tau_4^4}{B^2}+\tfrac{\lambda(D^2+2D)}{\beta^2}
+S^2+\lambda^3S^2+\lambda^3\tfrac{\tau_4^2}{B}S\Big)
\le C\Big(\tfrac{\lambda^2\tau_4^4}{B^2}+\tfrac{\lambda D\tau_4^2}{B\beta}+\tfrac{D^2+2D}{\beta^2}\Big).
\end{equation}
For the second term, $m_{12}\le C\lambda^5$ and $\mathcal B\le C(\tau_4^4/B^2+\tau_4^2S/B)$ give
\begin{equation}\label{eq:m12B-over-lambda-compact}
\lambda^{-1}m_{12}\mathcal B
\le C\lambda^4\Big(\tfrac{\tau_4^4}{B^2}+\tfrac{\tau_4^2}{B}S\Big)
\le C\Big(\tfrac{\lambda^2\tau_4^4}{B^2}+\tfrac{\lambda D\tau_4^2}{B\beta}\Big).
\end{equation}
Substituting \cref{eq:sqrt-diff,eq:beta-bar-lower} into the definition of $\mathcal P$
in \cref{eq:definition-of-P},
\begin{equation}\label{eq:Amom}
\begin{split}
W_2^2(\pi_\theta,\pi_\psi)
&\le
\underbrace{C\,\tau_4^4}_{A_0^\star}\frac{\lambda^2}{B^2}
+\underbrace{C\,\tau_4^2 D}_{A_1^\star}\frac{\lambda}{B\beta}
+\underbrace{C\,(D^2+2D)}_{A_2^\star}\frac{1}{\beta^2}\\
&\le A_\star^2\Big(\tfrac{\lambda}{B}+\tfrac{1}{\beta}\Big)^2,
\end{split}
\end{equation}
where $A_\star^2\defas\max\{A_0^\star,A_1^\star,A_2^\star\}$ depends only on $(\mu,L,\hat\mu,\hat L,c_\kappa,\tau_4,D,\overline M^2,\overline{M^2})$
and is independent of $(\lambda,B,\beta)$.

\end{document}